\NewDocumentCommand{\var}{O{s} m O{}}{%
  \ensuremath{#1_{#2}^{#3}}% add \vphantom{<bizarre sup>}
}
\newcommand{\featureSpace}{\mathcal{X}}
\newcommand{\trainingSample}{\mathcal{S}}
\newcommand{\entropyProbMeasure}{H_{\probMeasure}}
\newcommand{\entropySample}{H_{\trainingSample}}
\newcommand{\mutualInformationProbMeasure}{I_{\probMeasure}}
\newcommand{\mutualInformationSample}{I_{\trainingSample}}
\newcommand{\X}{\mathcal{X}}
\newcommand{\Y}{\mathcal{Y}}
\newcommand{\sign}{\text{sign}}
\newcommand{\R}{\mathbb{R}}
\newcommand{\riskMinimiser}{f^*}
\newcommand{\riskMinimiserProb}{f^*_{\probMeasure}}
\newcommand{\empiricalRiskMinimiser}{\riskMinimiser_{\trainingSample}}
\newcommand{\entropyPhi}{\phi}
\title{Margin Maximization as Lossless Maximal Compression}
\author{Nikolaos Nikolaou \and Henry Reeve \and\\ Gavin Brown}
\institute{N. Nikolaou \at
              Department of Physics \& Astronomy, UCL\\
              Gower Street, London, WC1E 6BT, UK\\
              \email{n.nikolaou@ucl.ac.uk}           
           \and
           H. Reeve \at
              School of Computer Science, University of Birmingham\\
              Edgbaston, Birmingham, B15 2TT, UK\\
               \email{h.w.j.reeve@bham.ac.uk}
           \and
           G. Brown \at
              School of Computer Science, University of Manchester\\
              Kilburn Building, Oxford Road, Manchester, M13 9PL, UK\\
              \email{gavin.brown@manchester.ac.uk}
}
\date{Received: date / Accepted: date}
\begin{document}

\maketitle

\begin{abstract}
  The ultimate goal of a supervised learning algorithm is to produce models constructed on the training data that can \emph{generalize} well to new examples. In classification, \emph{functional margin maximization} -- correctly classifying as many training examples as possible with maximal confidence -- has been known to construct models with good generalization guarantees. This work gives an information-theoretic interpretation of a margin maximizing model on a noiseless training dataset as one that achieves \emph{lossless maximal compression} of said dataset -- i.e. extracts from the features all the useful information for predicting the label and no more. The connection offers new insights on generalization in supervised machine learning, showing margin maximization as a special case (that of classification) of a more general principle and explains the success and potential limitations of popular learning algorithms like gradient boosting. We support our observations with theoretical arguments and empirical evidence and identify interesting directions for future work.
\end{abstract}

\section{Introduction}
%\textcolor{red}{[TODO: Shorten to about 3 paragraphs?]\\}

The goal of a supervised learning algorithm is to construct a model on the training set that can \emph{generalize} well on new data. Yet, generalization is an elusive property, involving intractable quantities to be approximated or bound --like the \emph{generalization error}-- or notions with multiple definitions --like that of \emph{model complexity}. As a result there are many different theoretical routes to generalization, leading to often apparently `contradictory' conclusions with one another or with empirical evidence~\cite{zhang2016understanding}. For instance, why are certain learning algorithms that explore overparameterized or non-parametric model families so good at producing models that can generalize well, even without explicit regularization~\cite{buhlmann2007boosting, zhang2016understanding, kawaguchi2017generalization}? A unified language for comparing the complexity of models trained on a given dataset can help us identify good model selection and algorithmic practices that guide the learning algorithm towards models that are complex enough to not underfit yet also maximally resistant to overfitting.

In this paper we make a step towards this direction by bridging two --until now disconnected-- theoretical paths to generalization in the case of classification, namely \emph{information theory}~\cite{shannon1948mathematical} inspired by recent advances on the \emph{information bottleneck principle}~\cite{tishby2000information, tishby2015deep, shwartz2017opening} and \emph{margin theory}~\cite{vapnik1982estimation, schapire1998boosting}. From an information-theoretic perspective, we would like our learning algorithm to learn a model that contains all the information from the features necessary for describing the target (we call this property \emph{losslessness}) and no more information beyond that (we call this property \emph{maximal compression}). Margin theory suggests constructing a model that can correctly classify as many training examples as possible with as high confidence as possible (i.e. one that maximizes the quantity known as the \emph{functional margin} over the training set). We prove that in the case of classification of on noiseless (i.e. unambiguously labelled) datasets, functional margin maximization is equivalent to lossless maximal compression in the information-theoretic sense. The existence of margin-based bounds on the generalization error implies that margin maximization is beneficial for achieving good generalization and therefore so is lossless maximal compression.

Our experiments on gradient boosting, a method that maximizes the training margins, show empirically that on noiseless data, margin maximization amounts to lossless maximal compression and that maximally compressed models on average exhibit the highest generalization capability (as estimated by the test error). We identify interesting similarities between the way training progresses in Deep Neural Networks (DNNs) and in gradient boosting and gain useful insights on the training of gradient boosting algorithms. All findings persist across a wide range of datasets \& hyperparameter configurations.

To our knowledge, there is no prior work establishing the connection between functional margin maximization and lossless model compression in the information-theoretic sense. Both margin theory and information theory have been individually connected to generalization and have been used to explain resistance to overfitting. The idea that functional margin maximization promotes good generalization can be traced back to~\cite{vapnik1982estimation}. It has been used in the theoretical analysis of Boosting algorithms~\cite{schapire1998boosting}, with recent work using it to explain good generalization in DNNs~\cite{sokolic2017generalization, dziugaite2017computing, neyshabur2017exploring, wei2018margin}. The related notion of \emph{geometric margin maximization}\footnote{The geometric margin of a classifier is the distance of the closest example in the training set to the decision boundary. For linear models the geometric margin is a rescaling of the functional margin, therefore a model maximizing the one also maximizes the other.} has been used to justify good generalization in Support Vector Machines (SVMs)~\cite{Cortes1995}. The idea that a learned representation of a dataset that generalizes well is one that extracts from the features all the useful information for predicting the target and no more, is captured in information theoretic terms under the \emph{information bottleneck principle}~\cite{tishby2000information}. Recent work has offered insights into the good generalization capabilities of DNNs, utilizing these ideas~\cite{tishby2015deep, shwartz2017opening}. More recently, bounds on the generalization error of a learning algorithm in terms of the mutual information between its input and outputhave been established~\cite{xu2017information, asadi2018chaining}.

\newcommand{\numSamples}{n}
\newcommand{\unknownDistributionOnXY}{\mathbb{P}}
\newcommand{\probMeasure}{P}
\newcommand{\risk}{\mathcal{R}}
\newcommand{\empiricalRisk}{\risk_{\trainingSample}}

\newcommand{\E}{\mathbb{E}}
\newcommand{\empiricalMeasure}{\hat{\unknownDistributionOnXY}_{\numSamples}}
\newcommand{\one}{\mathbf{1}}

\section{Background}
%\textcolor{red}{[TODO: Shorten to 1.5 pages, maximum...]}
\subsection{Binary Classification}
A \emph{classification algorithm}, receives as input a \emph{training dataset} $\trainingSample$ consisted of $\numSamples$ pairs $({\bf{x}}^{i}, y^{i})$ of feature vectors ${\bf{x}}^{i} \in \mathcal{X}$ and corresponding \emph{class labels} $y^{i} \in \mathcal{Y}$. The training set is drawn i.i.d. from some unknown probability measure $\unknownDistributionOnXY$ on $\X\times \Y$. We shall focus on \emph{binary classification}, where $\mathcal{Y} = \{-1,1\}$. In this setting, we consider w.l.o.g. the output of the learning algorithm (\emph{model}) as a function $f:\X \rightarrow [-1,1]$ that allows us to predict the label on unseen examples drawn from $\unknownDistributionOnXY$ with feature vector ${\bf{x}}$ as $\hat{y} = \sign(f({\bf{x}}))$. Given any probability measure $\probMeasure$ on $\X\times \Y$ and any function $f:\X \rightarrow \Y$ we let $\risk_{\probMeasure}(f)$ denote the probability of making an error with respect to distribution $\probMeasure$,
\begin{align}\label{riskDefnEqn}
\risk_{\probMeasure}(f) = \probMeasure\left[ \sign(f({X})) \neq Y \right].
\end{align}
Ideally the learning algorithm will output the model with the lowest possible risk w.r.t. the unknown distribution $\unknownDistributionOnXY$, i.e. a function $f$ that minimizes the \emph{true classification risk} $\risk_{\unknownDistributionOnXY}(f)$. However, since we do not have direct access to the unknown distribution $\unknownDistributionOnXY$ we must estimate $\unknownDistributionOnXY$ with the empirical measure $\empiricalMeasure$ defined for each set $A \subset \X \times \Y$ in terms of the training data $\trainingSample= \left\lbrace (\bf{x}^i,y^i)\right\rbrace_{i \in [\numSamples]}$  by
\begin{align}\label{empiricalMeasureDefn}
\empiricalMeasure\left(A\right) =\frac{1}{\numSamples} \cdot \sum_{i \in [n]}\one\left\lbrace (\bf{x}^i,y^i) \in A \right\rbrace.
\end{align}
The empirical risk $\empiricalRisk(f):=\risk_{\empiricalMeasure}(f)$ of a function $f:\X \rightarrow [-1,1]$ is given by
\begin{align*}
\empiricalRisk(f) = \empiricalMeasure\left[ \sign(f({X})) \neq Y \right]=\frac{1}{\numSamples} \cdot \sum_{i \in [n]}\one\left\lbrace \sign\left(f(\bf{x}^i)\right)\neq y^i \right\rbrace.
\end{align*}
In what follows we shall refer to a general \emph{finitely supported} measure $\probMeasure$ on $\X\times \Y$. The motivating example here is the empirical measure $\empiricalMeasure$ which is supported on the finite set $\trainingSample$, the training dataset.

\newcommand{\A}{\mathcal{A}}
\newcommand{\B}{\mathcal{B}}
\newcommand{\ASupp}{\A_{\probMeasure}}
\newcommand{\BSupp}{\B_{\probMeasure}}
\newcommand{\XSupp}{\X_{\probMeasure}}

\subsection{Information theory}
We now present some basic definitions and properties from information theory~\cite{shannon1948mathematical}. Let $A$ \& $B$ be random variables (RVs), with alphabets $\A$ \& $\B$ with probability distribution measure $\probMeasure$. We shall assume that $\probMeasure$ is finitely supported and there exist finite subsets $\ASupp \subset \A$ and $\BSupp \subset \B$ such that $\probMeasure\left[A \in \ASupp\right]=\probMeasure\left[B \in \BSupp\right]=1$.

The \emph{entropy} of a RV $A$, measures the amount of uncertainty associated with its value when only its distribution is known. It is defined by
\begin{equation}
\label{eq:entropy}
\entropyProbMeasure(A) = -\sum_{a \in \ASupp}\probMeasure(A=a)\log \left({\probMeasure(A=a)}\right).
\end{equation}

The amount of information shared by RVs $A$ \& $B$ is their \emph{mutual information}, defined as
\begin{equation}
\label{eq:mutual_info}
\mutualInformationProbMeasure(A;B) = \sum_{a \in \ASupp, b \in \BSupp}\probMeasure(A=a,B=b)\log{\frac{\probMeasure(A=a,B=b)}{\probMeasure(A=a)\probMeasure(B=b)}} = \mutualInformationProbMeasure(B;A).
\end{equation}
%The mutual information is symmetric, i.e. $\mutualInformationProbMeasure(A;B) = \mutualInformationProbMeasure(B;A)$.

%{\color{red}The following could possibly be removed/shrunk?}\\
In terms of information theory, the chain rule of probability takes the form
\begin{equation}
\label{eq:chain_rule_it}
\mutualInformationProbMeasure(A;B) = \entropyProbMeasure(A) - \entropyProbMeasure(A|B) \overset{\mathrm{\mutualInformationProbMeasure(A;B) = \mutualInformationProbMeasure(B;A)}}{=} \entropyProbMeasure(B) - \entropyProbMeasure(B|A),
\end{equation}
where $\entropyProbMeasure(A|B)$ is the \emph{conditional entropy} of $A$ given $B$, given by
\begin{equation}
\label{eq:cond_entropy}
\entropyProbMeasure(A|B) = -\sum_{a \in \ASupp, b \in \BSupp}\probMeasure(A=a,B=b)\log{\probMeasure(A=a|B=b)},
\end{equation}
which measures the uncertainty of the value of RV $A$ given the value of RV $B$.
From Eq.~(\ref{eq:chain_rule_it}), it is clear that $\mutualInformationProbMeasure(A;B)$ measures the decrease in uncertainty about either the value of RV $A$ or the value of RV $B$, when the value of the other RV is known.

If $B$ is a deterministic transformation of $A$ then there is no uncertainty remaining about the value of $B$ given the value of $A$, so we have $\entropyProbMeasure(B|A) = 0 {\iff} \mutualInformationProbMeasure(A;B) = \entropyProbMeasure(B) \leq \entropyProbMeasure(A)$. Finally, if $G(A)$ is an invertible transformation of RV $A $, we have $\mutualInformationProbMeasure(G(A);B) = \mutualInformationProbMeasure(A;B)$ as the value of $A$ grants us perfect knowledge of the value of $G(A)$ and vice-versa.

In what follows we shall be particularly interested in the \emph{empirical entropy} $\entropySample:=H_{\empiricalMeasure}$ and the \emph{empirical mutual information} $\mutualInformationSample:=I_{\empiricalMeasure}$ where $\empiricalMeasure$ is the empirical measure of Eq.~(\ref{empiricalMeasureDefn}).

\subsubsection{The information bottleneck principle\\}
Suppose we wish to learn a compressed representation $F = f(X)$ from the original features $X$ that is useful for predicting a target variable $Y$. Treating $X$, $Y$ \& $F$ as RVs, the \emph{Information Bottleneck} principle~\cite{tishby2000information}, offers a way to select a representation $F$, by trading-off the information the learned representation $F$ captures from $X$ regarding the target variable $Y$, i.e. $\mutualInformationSample(F;Y)$ --the higher, the better for predicting $Y$-- and the total information it captures from $X$, i.e. $\mutualInformationSample(F;X)$ --the lower, the higher the degree compression. We thus look for a representation $F^{*}$ such that
\begin{equation}
\label{eq:ib}
F^{*} = \arg\min_{F} \{\mutualInformationSample(F;X) -\beta \mutualInformationSample(F;Y)\},
\end{equation}
where $\beta$ is a Lagrange multiplier that controls the aforementioned tradeoff.

Recently, this principle has been used to explain good generalization in DNNs~\cite{tishby2015deep, shwartz2017opening} and the use of training set estimates of Eq.~(\ref{eq:ib}) or variants of it as objective functions for training DNNs --and other models-- have been growing in popularity~\cite{alemi2016deep,strouse2017deterministic,simeone2018brief}. The reasoning is that compression controls for the complexity of the learned representation, thus promoting good generalization~\cite{shamir2008learning}.

In this work we draw inspiration from the above line of research and our findings further reinforce the role of information compression in promoting generalization. \emph{We regard the trained model's outputs as the `representation' $F$ and explore the properties of a model $F$ that minimizes $\mutualInformationSample(F;X)$ subject to maximizing $\mutualInformationSample(F;Y)$ on a given training set $\trainingSample$}. We shall call such an intuitively `ideal' model $F$ a \emph{lossless maximal compressor} of the training set $\trainingSample$.

\subsection{Margin maximization}
The (normalized) \emph{functional margin\footnote{Also known as the \emph{hypothesis margin}, or --in the case of ensembles-- the \emph{voting margin}.}} of a training example $({\bf{x}}^{i},y^{i})$ under a model $f$ is defined as
\begin{equation*}
\label{eq:margin}
m^{i} = y^{i} f({\bf{x}}^{i}) \in [-1, 1].
\end{equation*}
It is a combined measure of confidence and correctness of the classification of the example under $f$. Its sign encodes whether the example is correctly classified ($y^{i} f({\bf{x}}^{i})>0$) or misclassified ($y^{i} f({\bf{x}}^{i})<0$), while the magnitude of the margin (i.e. the magnitude of the score $f({\bf{x}}^{i})$) measures the \emph{confidence of the model in its prediction} (the higher, the more confident).

% Maximizing the  margin over the training set has been connected to good generalization~\cite{vapnik1982estimation,schapire1998boosting}. For instance, consider the case of an AdaBoost classifier $f$, achieving a minimum functional margin of $m_{min}$ on a training set $\trainingSample$ of size $N$ consisting of examples ({\bf{x}}, y) drawn i.i.d. from some distribution $\mathcal{D}$ and using as base learners models with VC-dimension $v$, with $N \geq v \geq 1$. In~\cite{schapire1998boosting} it is shown that, for any $\delta>0$ and 

% {\color{red}THIS CAN BE SAID IN WORDS...}

% $m_{min}>0$, the generalization error of $f$ is, with probability $1-\delta$,

% \begin{equation}
% \label{eq:margin_based_gen_error_bound}
% \risk_{\mathcal{D}}^{f} = P_{\mathcal{D}}(yf({\bf{x}}) \leq 0) \leq P_{\trainingSample}(yf({\bf{x}}) \leq m_{min}) + \mathcal{O}\Bigg(\frac{1}{\sqrt{N}}\sqrt{\frac{v \log^{2}{(N/v)} }{{m_{min}}^2} + \log(1/\delta) }\Bigg) 
% \end{equation}

Maximizing the margins over the training set has been connected to good generalization~\cite{vapnik1982estimation,schapire1998boosting}. An upper bound to the generalization error $P_{\unknownDistributionOnXY}(yf({\bf{x}}) \leq 0)$ of an AdaBoost classifier $f$, based on its minimum margin over the training set, is proven in \cite{schapire1998boosting}. Tighter generalization bounds, dependent not only on the minimum margin but on the entire distribution of the training margins have been derived (e.g. \emph{Emargin bound}~\cite{wang2011refined}, \emph{k-th margin bound}~\cite{gao2013doubt}). Beyond boosting, such bounds hold for voting classifiers in general and recently similar bounds have been derived for DNNs~\cite{sokolic2017generalization, dziugaite2017computing, neyshabur2017exploring, wei2018margin}.%The bounds generally suggest that \emph{the higher the average training margin and the lower the variance of the training margin distribution, the lower the generalization error}. Beyond boosting, such bounds hold for voting classifiers in general and recently similar bounds have been derived for DNNs~\cite{sokolic2017generalization, dziugaite2017computing, neyshabur2017exploring, wei2018margin}.%These bounds are \emph{robust to a small number of training margins being arbitrarily low} and generally suggest 

In this work, we will \emph{establish an equivalence between models that maximize the margins on a noiseless training set $\trainingSample$ and lossless maximal compressors of $\trainingSample$}. We will verify our observations empirically, using boosting, a method that explicitly minimizes a monotonically decreasing loss function of the margin (i.e. maximizes training examples' margins)\footnote{Adaboost approximately maximizes the average margin \& actually minimizes the margins' variance~\cite{shen2010dual}.}. As we will see, boosting drives learning towards a lossless maximal compressor of the noiseless training dataset. It achieves the lowest generalization error (estimated by the average test set error) once lossless maximal compression has been achieved.

\section{Lossless maximal compression \& margin maximisation}
\subsection{An information-theoretic view of datasets \& models}
\label{ssec:def}

We will now define properties that capture the relationship between the information content of the model's output ($F$) and the information present in the features \& the target ($X$ \& $Y$, respectively) as measured on the \emph{training dataset} $\trainingSample$. In Figure~$\ref{fig:FXY_Venn}$  we provide a visual summary of these properties and their possible combinations. In Table~$\ref{tab:FXY_Table}$  we summarize the information-theoretic equalities and inequalities that hold under each scenario. Proofs not directly following the statement of a lemma or theorem can be found in the Supplementary Material.

Any function $f:\featureSpace \mapsto [-1, 1]$, can be considered as a \emph{model} of the training dataset $\trainingSample$. Typically, the model constructed by a learning algorithm is a member of some given \emph{model family}. In this work we impose no restriction on the model space, i.e. $f \in \Phi$, where $\Phi$ is the set of all models. Being a \emph{deterministic} transformation of $X$, $F = f(X)$ cannot contain more information than $X$. So,
$\entropyProbMeasure(F|X)=0 \iff \mutualInformationProbMeasure(X;F) = \entropyProbMeasure(F) \leq \entropyProbMeasure(X)$.

%%\begin{tcolorbox}
\begin{definition}[Noiselessness]
\label{def:noiselessness}
A probability distribution $\probMeasure$ is noiseless if and only if $\entropyProbMeasure(Y|X)=0$.\\
Otherwise, $\probMeasure$ is noisy and $\entropyProbMeasure(Y|X)>0$. We shall say that a dataset $\trainingSample$ is noiseless (respectively, noisy) if the corresponding empirical measure $\empiricalMeasure$ is noiseless (respectively, noisy).
\end{definition}
%\end{tcolorbox}

Under this information-theoretic perspective, a noiseless distribution $\probMeasure$, hence a noiseless dataset $\trainingSample$, is one in which \emph{the features $X$, contain all information to perfectly describe the target $Y$}. 

Given a distribution $\probMeasure$ on $\X\times \Y$ we let $\riskMinimiserProb:\X \rightarrow\Y$ denote a minimizer of the risk i.e.
\begin{align*}
\riskMinimiserProb \in \arg \min_{f \in \Phi} \left\lbrace \risk_{\probMeasure}\left(f\right) \right\rbrace.
\end{align*}
In particular, when $\probMeasure$ is the underlying distribution $\unknownDistributionOnXY$ then $\sign \circ \riskMinimiser_{\unknownDistributionOnXY}$ is the Bayes classifier. When $\probMeasure$ is the empirical measure $\empiricalMeasure$ then $\sign \circ \empiricalRiskMinimiser$ is the empirical risk minimiser where $\empiricalRiskMinimiser:=\riskMinimiser_{\empiricalMeasure}$.

\begin{lemma}\label{noiselessIFFZeroRiskLemma}
A dataset $\trainingSample$ is noiseless if and only if $\empiricalRisk(\empiricalRiskMinimiser)  = 0$. 
\end{lemma}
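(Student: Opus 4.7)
The plan is to prove both directions by unpacking the definition of conditional entropy (Eq.~(\ref{eq:cond_entropy})) and turning it into a statement about the existence of a deterministic label assignment on the training support, then linking that assignment to a zero-error classifier.

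For the forward direction, I would assume $\entropySample(Y\mid X)=0$. Applying Eq.~(\ref{eq:cond_entropy}) with $\probMeasure=\empiricalMeasure$, each summand $-\empiricalMeasure(X=x,Y=y)\log\empiricalMeasure(Y=y\mid X=x)$ is non-negative, so the sum vanishing forces $\empiricalMeasure(Y=y\mid X=x)\in\{0,1\}$ whenever $\empiricalMeasure(X=x,Y=y)>0$. Hence for each feature vector appearing in $\trainingSample$ there is a unique label with conditional probability one, which means any two training indices sharing the same feature vector must share the same label. This ambiguity-free mapping lets me define $f^{\dagger}\in\Phi$ by $f^{\dagger}(\mathbf{x}^i)=y^i$ on the training support (and, say, $0$ off-support), giving $\empiricalRisk(f^{\dagger})=0$. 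By optimality of $\empiricalRiskMinimiser$ we then get $\empiricalRisk(\empiricalRiskMinimiser)\le\empiricalRisk(f^{\dagger})=0$.

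For the reverse direction, I would suppose $\empiricalRisk(\empiricalRiskMinimiser)=0$. Then $\sign(\empiricalRiskMinimiser(\mathbf{x}^i))=y^i$ for every $i\in[\numSamples]$, so any two indices $i\neq j$ with $\mathbf{x}^i=\mathbf{x}^j$ must have $y^i=y^j$ (the sign of a fixed number cannot agree with two different labels). Hence $\empiricalMeasure(Y=\cdot\mid X=x)$ is a point mass for every $x$ in the support of the $X$-marginal of $\empiricalMeasure$, which makes every term of the conditional entropy sum vanish, giving $\entropySample(Y\mid X)=0$. By Definition~\ref{def:noiselessness}, $\trainingSample$ is noiseless.

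The only subtlety lies in confirming that the constructed $f^{\dagger}$ is well-defined, which is precisely what the deterministic-conditional structure coming from zero conditional entropy guarantees. Since the paper places no restriction on $\Phi$ (any $f:\X\to[-1,1]$ is admissible), there is no expressivity obstacle once the unambiguity of $f^{\dagger}(\mathbf{x}^i)=y^i$ is established. In essence, the whole argument reduces to the observation that ``$\entropySample(Y\mid X)=0$'' and ``the labels are a well-defined function of the features on the training support'' are the same statement, and the latter is trivially equivalent to the existence of a zero-error classifier.
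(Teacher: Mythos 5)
Your proof is correct and follows essentially the same route as the paper's: both directions reduce the statement to the observation that $\entropySample(Y|X)=0$ forces every conditional $\empiricalMeasure(Y=y|X=x)$ into $\{0,1\}$ (via non-negativity of the entropy summands), which is equivalent to the labels being a well-defined function of the features on the training support, and hence to the existence of a zero-risk classifier. The only cosmetic difference is that the paper phrases the argument for a general finitely supported measure $\probMeasure$ and specializes to $\empiricalMeasure$, while you work directly with the training sample.
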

%\end{tcolorbox}

In other words, a noiseless training dataset $\trainingSample$ is one in which \emph{no datapoints with the same feature vector $\mathbf{x}$ have different labels $y$}. For such a dataset\footnote{Also known as a \emph{unambiguously labelled} or \emph{consistent} training dataset in the literature.}, there exists a model that can achieve zero empirical risk (training error), i.e. that can perfectly classify the training data. In other words, there exists some \emph{deterministic mapping} from the features $X$ to the target $Y$.

We shall now introduce properties that make a model $f$ useful for the purposes of capturing relevant and ignoring redundant information from a training set $\trainingSample$.

%%\begin{tcolorbox}
\begin{definition}[Losslessness]
\label{def:losslessness}
A model $f$ is lossless on the dataset $\trainingSample$ if and only if  $\mutualInformationSample(F;Y) = \mutualInformationSample(Y;X)$.
Otherwise, the model is lossy on $\trainingSample$ and $\mutualInformationSample(F;Y) < \mutualInformationSample(Y;X)$.
\end{definition}
%\end{tcolorbox}

A lossless\footnote{Often the term "lossless encoding" of some r.v. $X$ in the literature characterizes an encoding $f(X)$ that allows us to recover the original value of $X$ from it. In our case, because of the supervised nature of the learning task, it shall mean that $f(X)$ allows us to recover the original value of the target $Y$ from it. Not necessarily the value of the feature vector.} model $f$ on a dataset $\trainingSample$ is one that \emph{captures all the information in features $X$ that is relevant for describing the target $Y$}. We can equivalently state that the r.v. $F$ is a \emph{sufficient statistic} of the empirical distribution $\empiricalMeasure$ of the training data. 

%%\begin{tcolorbox}
\begin{lemma}\label{losslessIFFARiskMinimiserUpToInvertibleTransformationLemma}
Suppose dataset $\trainingSample$ is noiseless. A model $f$ is lossless on  $\trainingSample$ if and only if there exists an invertible transformation $g:[-1,1]\rightarrow \R$ such $\empiricalRisk(g \circ f) =\empiricalRisk(\empiricalRiskMinimiser)$. 
\end{lemma}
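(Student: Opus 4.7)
The plan is to reformulate the losslessness condition as a conditional-entropy statement and then translate that statement into the existence of an invertible post-processing of $f$.

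First, I would apply Lemma~\ref{noiselessIFFZeroRiskLemma} and Definition~\ref{def:noiselessness} to note that noiselessness of $\trainingSample$ gives both $\empiricalRisk(\empiricalRiskMinimiser)=0$ and $\entropySample(Y|X)=0$. The latter combined with the chain rule yields $\mutualInformationSample(Y;X)=\entropySample(Y)$, so Definition~\ref{def:losslessness} reduces to the statement $\mutualInformationSample(F;Y)=\entropySample(Y)$, which by another application of the chain rule is equivalent to $\entropySample(Y|F)=0$. Since $\empiricalRisk(\empiricalRiskMinimiser)=0$, the lemma thus boils down to: $\entropySample(Y|F)=0$ if and only if there exists an invertible $g:[-1,1]\to\R$ with $\empiricalRisk(g\circ f)=0$. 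The key information-theoretic fact I would invoke is that $\entropySample(Y|F)=0$ holds precisely when $Y$ is a deterministic function of $F$ on the empirical support of $F$.

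For the ``$\Rightarrow$'' direction, I would set $A=\{f({\bf x}^i):y^i=+1\}$ and $B=\{f({\bf x}^i):y^i=-1\}$, two finite subsets of $[-1,1]$ that must be disjoint since $\entropySample(Y|F)=0$ forces a single label per value of $F$. I would then construct $g$ explicitly: for any fixed $M>2$, set $g(x)=x+M$ on $A$, $g(x)=x-M$ on $B$, and $g(x)=x$ otherwise. The three pieces have pairwise disjoint ranges (contained in $(M-1,M+1)$, $(-M-1,-M+1)$, and $[-1,1]$, respectively) and each is injective, so $g$ is invertible on its image. By construction $\sign(g(f({\bf x}^i)))=y^i$ for every $i$, giving $\empiricalRisk(g\circ f)=0=\empiricalRisk(\empiricalRiskMinimiser)$.

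For the ``$\Leftarrow$'' direction, I would assume $g$ is invertible with $\empiricalRisk(g\circ f)=0$, so $\sign(g(f({\bf x}^i)))=y^i$ for all $i$. The map $t\mapsto\sign(g(t))$ then exhibits $Y$ as a deterministic function of $F$ on the empirical support, yielding $\entropySample(Y|F)=0$ and, by the reduction above, losslessness of $f$. The only nontrivial obstacle is the explicit construction of $g$ in the forward direction: the subtlety is that $g$ must be injective on the whole of $[-1,1]$ rather than just on the finite set of realised $f$-values, which is why the three-piece shift is needed to keep the ranges disjoint. The rest of the argument is routine manipulation of the chain rule and the standard equivalence between zero conditional entropy and a deterministic functional relationship.
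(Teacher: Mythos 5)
Your proposal is correct and follows essentially the same route as the paper's proof: both reduce losslessness (via the chain rule and $\entropySample(Y|X)=0$) to the condition $\entropySample(Y|F)=0$, i.e.\ a single label per realised value of $F$, and then translate this into zero empirical risk after an invertible relabelling. The only difference is cosmetic --- the paper builds $g(s)=(2\,\empiricalMeasure(Y{=}1|f(X){=}s)-1)\frac{s+2}{3}$ on $f(\X_{\empiricalMeasure})$ and extends it arbitrarily to a bijection, whereas your three-piece shift $g(x)=x\pm M$ defines an injection on all of $[-1,1]$ at once, which is arguably a slightly cleaner way to handle the extension.
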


Lemma 2 means that if a model $f$ is lossless on a training set $\trainingSample$, its \emph{output can be used to describe the target $Y$ with the only source of training error being the irreducible class overlap in the training set}.

% %%\begin{tcolorbox}
% \begin{definition}[Separability]
% \label{def:separability}
% A model $f$ is a separator of a training dataset $\trainingSample$ if and only if $\trainingSample$ is noiseless and $f$ lossless on $\trainingSample$.  {\color{red}[TODO: Might omit definition \& Lemma below]}
% \end{definition}
% %\end{tcolorbox}

% %\begin{tcolorbox}
% \begin{lemma}
% If a model $f$ is a separator of $\trainingSample$, there exists an invertible transformation $g(f)$ such that its  training error on a dataset $\trainingSample$ is $\risk_{\trainingSample}(g)= 0$. {\color{red}[TODO:Check/Correct]}
% \end{lemma}
% %\end{tcolorbox}

% \begin{proof}
% Follows straightforwardly from Lemma $1$ \& Lemma $2$. 
% \end{proof}

% In other words, a model $f$ that is a separator of a training dataset $\trainingSample$ is \emph{one whose output can perfectly describe the target $Y$}. 

%\begin{tcolorbox}
\begin{definition}[Maximal Compression]
\label{def:max_comp}
A model $f:\X\rightarrow [-1,1]$ is a maximal compressor of  the dataset $\trainingSample$ if and only if $\mutualInformationSample(F;X) = \mutualInformationSample(F;Y)$.
Otherwise, the model is undercompressed on $\trainingSample$ and $\mutualInformationSample(F;X) > \mutualInformationSample(F;Y)$. 
\end{definition}
%\end{tcolorbox}

A model $f$ that is a maximal compressor of a training dataset $\trainingSample$ is \emph{one that only captures from the features $X$ information relevant for describing the target $Y$}. It does not necessarily capture \emph{all} that information; this special case, merits a definition of its own given below.

%\begin{tcolorbox}
\begin{definition}[Lossless Maximal Compression - (LMC)]
\label{def:lossless_max_comp}
A model $f$ is a lossless maximal compressor (LMC) of a training dataset $\trainingSample$ if and only if it is lossless on $\trainingSample$ and a maximal compressor on $\trainingSample$. 
\end{definition}
%\end{tcolorbox}

%\begin{tcolorbox}
\begin{proposition}
\label{thm:lmc}
A model $f$ is an LMC of a training dataset $\trainingSample$, if and only if it satisfies
\begin{equation*}
\mutualInformationSample(F;X) = \mutualInformationSample(F;Y) = \mutualInformationSample(Y;X).
\end{equation*}
\end{proposition}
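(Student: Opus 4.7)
The plan is to prove this proposition by a direct chain of substitutions from Definitions~\ref{def:losslessness}, \ref{def:max_comp}, and \ref{def:lossless_max_comp}, since the statement is essentially a restatement obtained by combining the defining equalities of losslessness and maximal compression.

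For the forward direction, I would assume that $f$ is an LMC of $\trainingSample$. By Definition~\ref{def:lossless_max_comp}, this means $f$ is simultaneously lossless and a maximal compressor on $\trainingSample$. From Definition~\ref{def:losslessness}, losslessness immediately yields $\mutualInformationSample(F;Y) = \mutualInformationSample(Y;X)$, and from Definition~\ref{def:max_comp}, maximal compression yields $\mutualInformationSample(F;X) = \mutualInformationSample(F;Y)$. Chaining these two equalities through the common term $\mutualInformationSample(F;Y)$ gives the desired identity
\begin{equation*}
\mutualInformationSample(F;X) = \mutualInformationSample(F;Y) = \mutualInformationSample(Y;X).
\end{equation*}

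For the reverse direction, I would assume the chain of equalities above and read off each defining condition: the rightmost equality $\mutualInformationSample(F;Y) = \mutualInformationSample(Y;X)$ is precisely the losslessness condition of Definition~\ref{def:losslessness}, while the leftmost equality $\mutualInformationSample(F;X) = \mutualInformationSample(F;Y)$ is precisely the maximal compression condition of Definition~\ref{def:max_comp}. Hence $f$ is both lossless and a maximal compressor on $\trainingSample$, which by Definition~\ref{def:lossless_max_comp} means $f$ is an LMC of $\trainingSample$.

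There is no real obstacle in this proof; it is a definitional unpacking rather than a substantive argument. The only sanity check worth mentioning is that the quantities $\mutualInformationSample(F;X)$, $\mutualInformationSample(F;Y)$, and $\mutualInformationSample(Y;X)$ are all well defined because $F=f(X)$ is a deterministic transformation of $X$ on a finitely supported empirical measure $\empiricalMeasure$, so each mutual information is a finite nonnegative quantity, and in particular $\mutualInformationSample(F;X) = \entropySample(F) \leq \entropySample(X)$ as noted in Section~\ref{ssec:def}. No additional hypothesis (such as noiselessness) is required for the proposition itself, although noiselessness will be needed in the subsequent results that characterise LMCs via margin maximisation.
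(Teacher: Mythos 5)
Your proof is correct and takes essentially the same approach as the paper, which simply notes that the proposition follows straightforwardly from Definitions~\ref{def:losslessness} and \ref{def:max_comp}; your version just spells out the definitional unpacking explicitly.
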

%\end{tcolorbox}

\begin{proof}
Follows straightforwardly from Definition~\ref{def:losslessness} \& Definition~\ref{def:max_comp}.\end{proof}

A model $f$ that is an LMC of a training dataset $\trainingSample$ is \emph{one that only captures from the features $X$ all the information relevant for describing the target $Y$}. \emph{From an information-theoretic perspective, an LMC of $\trainingSample$ is the optimal classification model that can be constructed from $\trainingSample$}.

We have defined the notion of a noiseless / noisy training dataset $\trainingSample$ and those of a lossy / lossless $f$ on $\trainingSample$ and of an undercompressed / maximally compressed model $f$ on $\trainingSample$. In Figure~$\ref{fig:FXY_Venn}$ we provide a visual summary of these properties and their possible combinations in the form of entropy Venn diagrams. In Table~$\ref{tab:FXY_Table}$ we summarize the relationships among the various information-theoretic quantities involved that hold under each scenario.

In the next subsections, we shall use the properties we defined here to obtain a better understanding of what types of models the information-theoretically optimal model, the LMC corresponds to for a noiseless dataset $\trainingSample$ and for a noisy dataset $\trainingSample$.

%\subsection*{B. Summary of relationships among information-theoretic quantities discussed}
\begin{table}[b]
 \label{tab:FXY_Table}
 \caption{Relationships among various information-theoretic quantities under the possible scenarios (see Figure \ref{fig:FXY_Venn}) relating the feature joint RV $X$ and the target RV $Y$ in a training dataset $\trainingSample$ with a model $f$ whose output is distributed as a RV $F$.}
 \small
\centering
 \begin{tabular}{|c||c|} 
 \hline
 \multicolumn{2}{|c|}{\textbf{Noiseless Dataset}} \\  \hline\hline
 \textbf{Lossy} & \multirow{2}{*}{$\mutualInformationSample(F;Y) < \mutualInformationSample(X;Y) = \entropySample(Y) < \mutualInformationSample(X;F) = \entropySample(F) \leq \entropySample(X)$} \\
 \textbf{Undercompressed} & \\ \hline %$\mutualInformationSample(X;Y) < \entropySample(X)$ \\ \hline% redundant
 \textbf{Lossy} &  \multirow{2}{*}{$\mutualInformationSample(F;Y) < \mutualInformationSample(X;Y) = \entropySample(Y) = \mutualInformationSample(X;F) = \entropySample(F) \leq \entropySample(X)$} \\ 
 \textbf{Maximally Compressed} & \\ \hline %$\mutualInformationSample(X;Y) \geq \entropySample(X)$ \\ \hline% redundant
 \textbf{Lossless} & \multirow{2}{*}{$\mutualInformationSample(F;Y) = \mutualInformationSample(X;Y) = \entropySample(Y) < \mutualInformationSample(X;F) = \entropySample(F) \leq \entropySample(X)$} \\
 \textbf{Undercompressed} & \\ \hline %$\mutualInformationSample(X;Y) < \entropySample(X)$ \\ \hline% redundant
 \textbf{Lossless} & \multirow{2}{*}{$\mutualInformationSample(F;Y) = \mutualInformationSample(X;Y) = \entropySample(Y) = \mutualInformationSample(X;F) = \entropySample(F) \leq \entropySample(X)$} \\
 \textbf{Maximally Compressed} & \\ \hline %$\mutualInformationSample(X;Y) \geq \entropySample(X)$ \\ \hline% redundant \hline\hline
 \multicolumn{2}{|c|}{\textbf{Noisy Dataset}} \\  \hline\hline
 \textbf{Lossy} & $\mutualInformationSample(F;Y) < \mutualInformationSample(X;Y) < \entropySample(Y)$ \\ 
 \textbf{Undercompressed} & $\mutualInformationSample(X;Y) < \mutualInformationSample(F;X) = \entropySample(F) \leq \entropySample(X)$ \\ \hline
  %& $\mutualInformationSample(X;Y) < \entropySample(X)$ \\ \hline %redundant
 \textbf{Lossy} & $\mutualInformationSample(F;Y) < \mutualInformationSample(X;Y) < \entropySample(Y)$ \\ 
 \textbf{Maximally Compressed} & $\mutualInformationSample(X;Y) = \mutualInformationSample(F;X) = \entropySample(F) \leq \entropySample(X)$ \\ \hline
  %& $\mutualInformationSample(X;Y) \leq \entropySample(X)$ \\ \hline %redundant
\textbf{Lossless} & $\mutualInformationSample(F;Y) = \mutualInformationSample(X;Y) < \entropySample(Y)$ \\ 
 \textbf{Undercompressed} & $\mutualInformationSample(X;Y) < \mutualInformationSample(F;X) = \entropySample(F) \leq \entropySample(X)$ \\ \hline
  %& $\mutualInformationSample(X;Y) < \entropySample(X)$ \\ \hline %redundant
\textbf{Lossless} & $\mutualInformationSample(F;Y) = \mutualInformationSample(X;Y) < \entropySample(Y)$ \\ 
 \textbf{Maximally Compressed} & $\mutualInformationSample(X;Y) = \mutualInformationSample(F;X) = \entropySample(F) \leq \entropySample(X)$ \\ \hline
  %& $\mutualInformationSample(X;Y) \leq \entropySample(X)$ \\ \hline %redundant
 \end{tabular}
\end{table}

\begin{figure}[H]\label{fig:FXY_Venn}
\centering
\subfigure{\includegraphics[width=0.7\textwidth]{./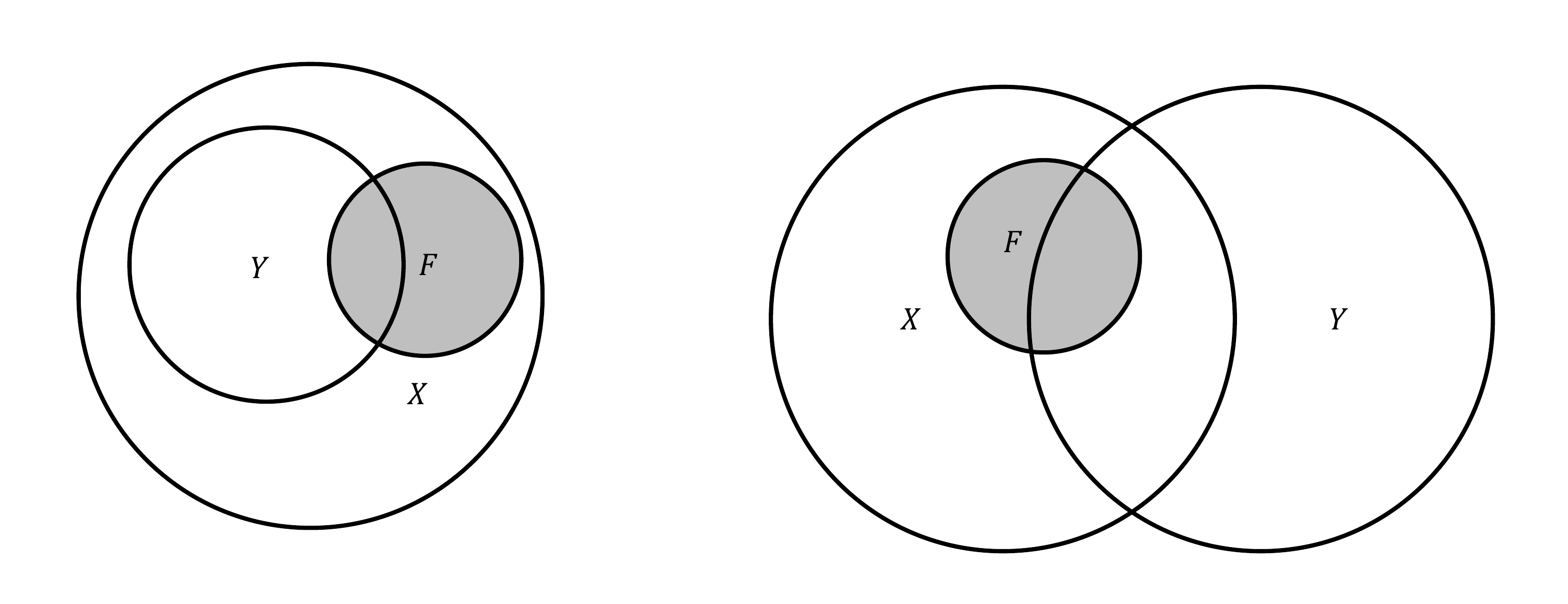}}
\subfigure{\includegraphics[width=0.7\textwidth]{./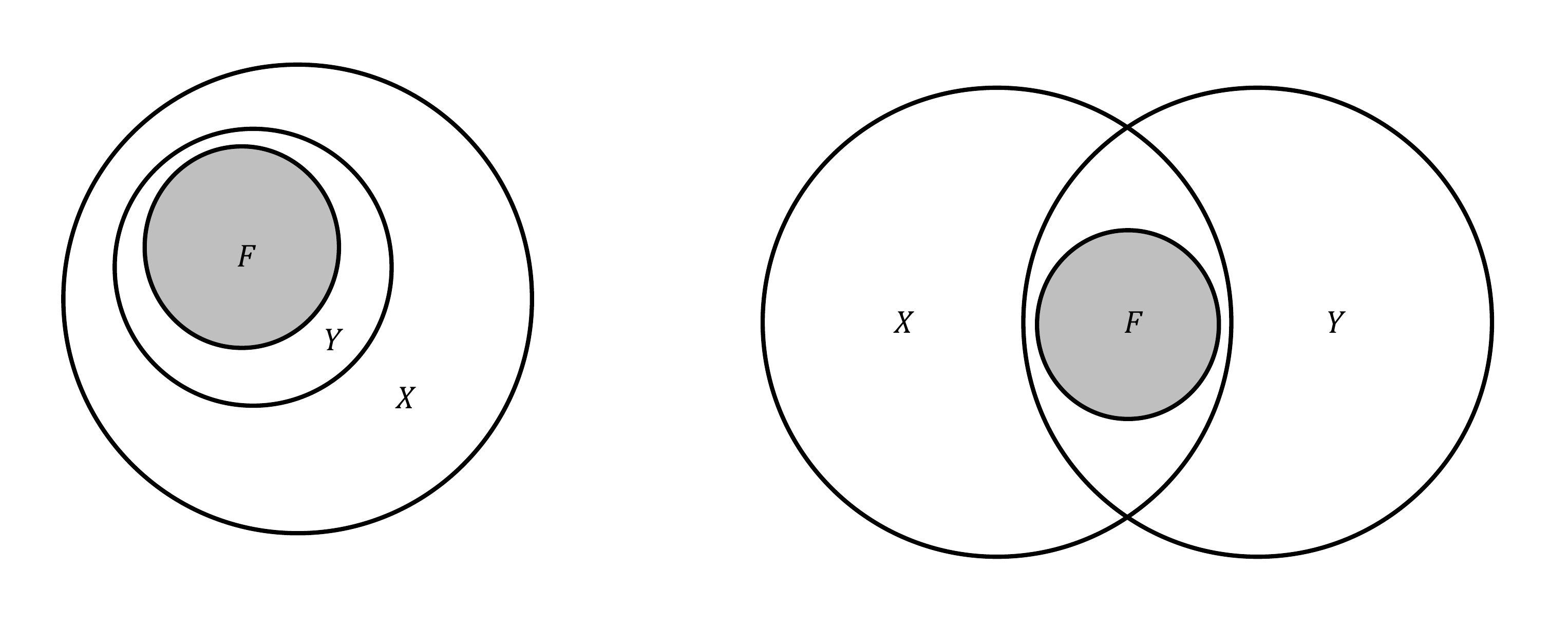}}
\subfigure{\includegraphics[width=0.7\textwidth]{./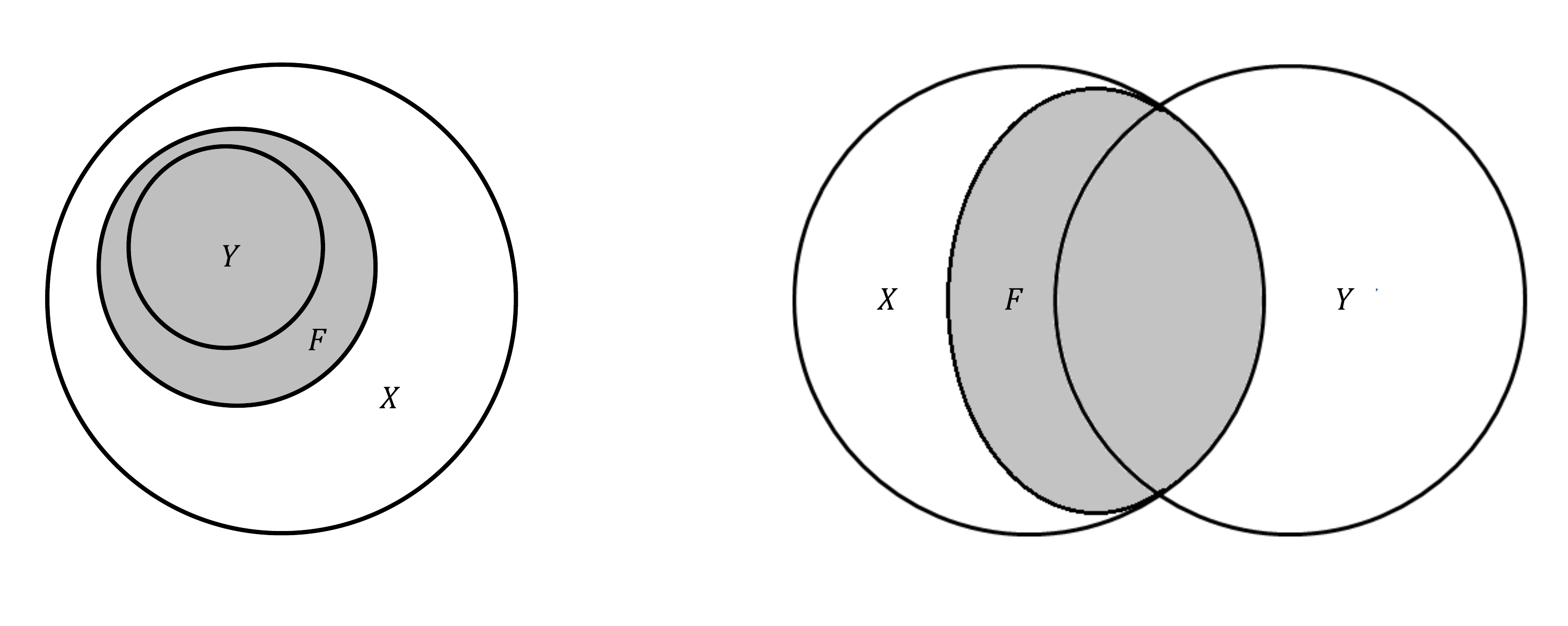}}
\subfigure{\includegraphics[width=0.7\textwidth]{./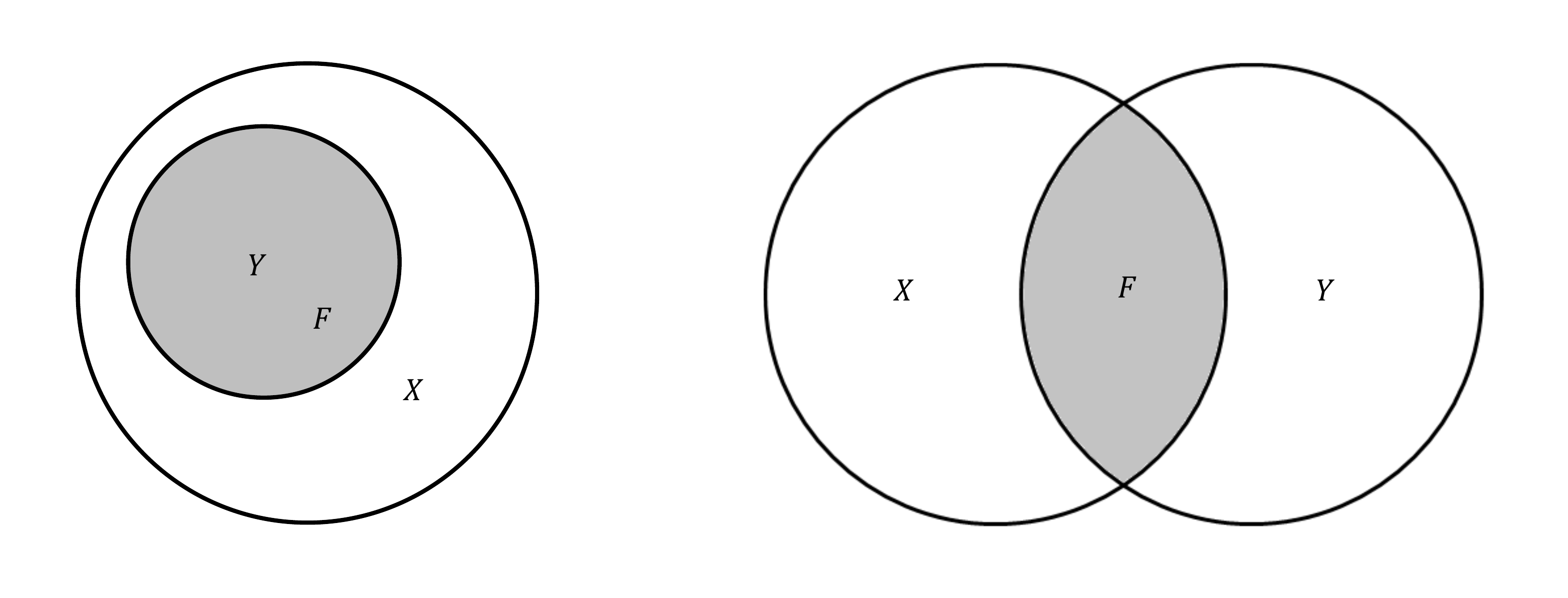}}
\caption{Venn diagrams capturing the possible relationships between the information content of the feature joint RV $X$ and the target RV $Y$ in a training dataset $\trainingSample$ with a RV $F$ representing the output of a model $f$ (shaded). [Left] $\trainingSample$ is a noiseless dataset; $X$ contains all information to perfectly describe $Y$. [Right] $\trainingSample$ is a noisy dataset; $X$ does not contain all information to perfectly describe $Y$. [From Top to Bottom] (i) A lossy, undercompressed model $f$. (ii) A lossy, maximally compressed model $f$. (iii) A lossless, undercompressed model $f$. (iv) A lossless, maximally compressed model $f$. Table \ref{tab:FXY_Table} shows the equalities \& inequalities involving the various underlying information-theoretic terms.}
%\vspace{-0.75em}
\end{figure}

% %\begin{tcolorbox}
% \begin{theorem}
% All models $f_i$ that are lossless maximal compressors of a training dataset $\trainingSample$ are equivalent information-wise, i.e. they are invertible transformations of one another.
% \end{theorem}
% %\end{tcolorbox}

% \begin{proof}
% See Supplementary Material
% \end{proof}

%So, \emph{a lossless maximal compressor of a given dataset is not unique, but all lossless maximal compressors are invertible transformations of one another, thus equivalent information-wise}.

\newpage

\subsection{Noiseless data: Equivalence of lossless maximal compression \& margin maximisation}

Let us first focus on the special case of a noiseless dataset $\trainingSample$, i.e. a dataset that does not contain any datapoints with the same feature vector but different labels. We will then discuss the noisy case where ambiguously labelled datapoints can be present in the dataset. 

The noiseless case merits a special discussion for several reasons: (i) It is the typical case studied in the literature and as such it allows us to connect our observations to existing work. (ii) It allows us to establish an equivalence between information theoretic lossless maximal compression and margin maximization. (iii) It is a very common case in practice since in large dimensional datasets, encountering datapoints that have the same feature vector but different class labels are typically expected to be rare\footnote{This is because encountering datapoints that have the same feature vector in high dimensional feature spaces is typically expected to be rare in the first place.}. 

We will now show the equivalence between lossless maximal compression and margin maximisation on a noiseless dataset $\trainingSample$.

\begin{theorem}
 \label{thm:margin_max_lmc} Suppose $\trainingSample$ is noiseless and finitely supported. A model $f$ is an LMC with respect to $\trainingSample$ if and only if there exists some invertible transformation $g:[-1,1]\rightarrow\R$ such that $g \circ f$ is a margin maximizer with respect to $\trainingSample$. 
\end{theorem}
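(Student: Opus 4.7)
The plan is to translate both sides of the claimed equivalence into the same structural property of $f$ on $\trainingSample$: that $f$ takes exactly as many distinct values on the training sample as $Y$ does, and those values are in bijective correspondence with the labels.

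First I would unpack LMC under noiselessness. Noiselessness gives $\entropySample(Y|X)=0$, hence $\mutualInformationSample(X;Y)=\entropySample(Y)$; and since $F=f(X)$ is deterministic given $X$, $\mutualInformationSample(F;X)=\entropySample(F)$. So the LMC condition $\mutualInformationSample(F;X)=\mutualInformationSample(F;Y)=\mutualInformationSample(X;Y)$ collapses to $\entropySample(F)=\mutualInformationSample(F;Y)=\entropySample(Y)$. Using the identities $\mutualInformationSample(F;Y)=\entropySample(Y)-\entropySample(Y|F)=\entropySample(F)-\entropySample(F|Y)$, both conditional entropies must vanish, so $F$ and $Y$ determine each other on $\trainingSample$. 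Assuming both classes are present (the single-class degenerate case is trivial), $f$ then takes exactly two values $v_{+},v_{-}\in[-1,1]$ on $\trainingSample$ with $f(\mathbf{x}^{i})=v_{+}$ iff $y^{i}=+1$ (up to relabeling).

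For the forward direction I would construct an explicit invertible $g$. Since $v_{+}\neq v_{-}$, the affine map $g(t)=2(t-v_{-})/(v_{+}-v_{-})-1$ is invertible on $[-1,1]$ with $g(v_{+})=+1$ and $g(v_{-})=-1$. Then $(g\circ f)(\mathbf{x}^{i})=y^{i}$ for every training pair, so every training margin $y^{i}(g\circ f)(\mathbf{x}^{i})$ equals $+1$, the maximum possible value. Conversely, suppose $g\circ f$ is a margin maximizer on $\trainingSample$ with $g:[-1,1]\to\R$ invertible. Maximality of all training margins forces $(g\circ f)(\mathbf{x}^{i})=c\cdot y^{i}$ for some $c>0$ (after the natural normalization), so $g\circ f$ takes exactly two values on $\trainingSample$, matched bijectively to the labels. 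Invertibility of $g$ transfers this two-value, bijectively-matched structure back to $f$, giving $\entropySample(F|Y)=\entropySample(Y|F)=0$; running the chain of equivalences from the previous paragraph in reverse then recovers LMC.

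The main obstacle is being precise about what \emph{margin maximizer} means when $g\circ f$ ranges over $\R$ rather than $[-1,1]$: the definition must be read as "all training margins attain the maximum value under the appropriate normalization", which is exactly the condition $(g\circ f)(\mathbf{x}^{i})=c\cdot y^{i}$ used above. A secondary care point is the degenerate case where only one class appears in $\trainingSample$: there $\entropySample(Y)=0$, the LMC condition forces $\entropySample(F)=0$, so $f$ is constant on $\trainingSample$ and any affine $g$ sending that constant value to the shared label makes $g\circ f$ a trivial margin maximizer. Lemma~\ref{losslessIFFARiskMinimiserUpToInvertibleTransformationLemma} can optionally be invoked to shortcut the "correct classification" half of the argument (losslessness alone gives zero empirical risk), but the extra content in LMC beyond losslessness is precisely what upgrades zero empirical risk to the strictly stronger equal-confidence condition that characterizes margin maximization.
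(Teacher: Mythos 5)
Your proposal is correct and follows essentially the same route as the paper's proof of Theorem~\ref{thm:margin_max_lmc} (given in the supplementary material as Theorem~\ref{thm:margin_max_lmcP}): both arguments reduce LMC under noiselessness to the vanishing of $\entropySample(F|Y)$ and $\entropySample(Y|F)$, i.e.\ to $f$ taking two values on $\trainingSample$ in bijection with the labels, and then pass between $f$ and a margin maximizer via an explicit invertible $g$ sending those two values to $\pm 1$. Your explicit treatment of the single-class degenerate case and of the normalization of the margin for $\R$-valued $g\circ f$ are reasonable care points that the paper leaves implicit, but they do not change the substance of the argument.
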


Under Theorem \ref{thm:margin_max_lmc} a classification model that maximizes the training margins on a given noiseless dataset\footnote{A margin maximizer on a noiseless dataset is a model that correctly classifies all training examples, with maximal confidence. Obviously, if $y^{i} f(\mathbf{x}^{i}) = 1, \forall (\mathbf{x}^{i}, y^{i})\in \trainingSample$, then both the average and the minimal margin of $f$ on $\trainingSample$ are equal to $1$ (maximal) and the variance of the margin distribution of $f$ on $\trainingSample$ is $0$ (minimal).} is one that captures all the information present in the features of that dataset relevant for predicting the target label and no more. Conversely, since an LMC is a margin maximizer, it offers the same guarantees on the generalization error as the latter. Note that Theorem \ref{thm:margin_max_lmc} captures a relationship between a noiseless dataset $\trainingSample$ and a model $f$, regardless of the underlying learning algorithm that produced it (i.e. the model family it explores or the optimization method used to explore it).

From Lemma $1$ \& Lemma $2$, we have that a lossless model $f$ on a noiseless training dataset $\trainingSample$ is one whose output can be used to classify every training example to the correct class (i.e. $\trainingSample$ is \emph{separable} by $f$). The success of algorithms that generate models that can \emph{interpolate}\footnote{An \emph{interpolating classifier} is one that can \emph{perfectly separate} the data, i.e. \emph{achieve zero training error}. In our terminology it is a \emph{lossless model on a noiseless dataset}, as such it falls within the case examined here.} the data, yet, despite exploring overparameterized model spaces, are resistant to overfitting (e.g. gradient boosting, random forests, SVMs and DNNs) has recently attracted considerable research interest~\cite{wyner2015explaining, belkin2018reconciling, hastie2019surprises}.

Our work connects these findings to information theory and margin theory: we posit that models generated by such methods are typically not simply lossless, but actually LMCs, hence margin maximizers and their good generalization follows via the margin-based generalization bounds. Algorithms such as the aforementioned, have mechanisms for promoting both losslessness (interpolation, in a noiseless dataset), guaranteeing the model produced will not underfit (afforded by overparameterizing the model space) and maximal compression (afforded via explicit or implicit margin maximization) which produces a model from that space that is maximally resistant to overfitting.

\subsection{Noisy data: The equivalence collapses}
Let us now discuss the case of a noisy dataset $\trainingSample$ and how it differs from the case of noiseless data.

%\textcolor{red}{IN PROGRESS...}

In the noiseless case, any model that correctly classifies every training datapoint in $\trainingSample$ (i.e. achieves zero training error) is a lossless model on $\trainingSample$. In the noisy case this observation is no longer relevant, as there exist at least 2 datapoints which are noisy, i.e. have the same feature vector $\mathbf{x}$, but different labels $y$. It is no longer the case that there exists a model $f$ that can perfectly separate the data. 

We can also rephrase the observation stated above as follows: Any model $f$ that yields the \emph{minimal achievable training error} on a noiseless training dataset $\trainingSample$ is a lossless model on $\trainingSample$. As we will see from Lemma~\ref{losslessIFFCapturesCCProbLemma}, this condition is no longer sufficient for $f$ to be lossless on a noisy training dataset $\trainingSample$. 

A model that minimizes the training error on a noisy dataset $\trainingSample$ will be one that classifies all points in $\trainingSample$ with the same feature vector $\mathbf{x}$ to the majority class among them. Furthermore, a margin maximizer~\footnote{We remind the reader that we refer to minimizers of the average (equivalently: total) margin over the training examples with this term.} $f$ on $\trainingSample$ is a model that minimizes the training error while also assigning maximal absolute score to its predictions (i.e. $|f(\mathbf{x})| = 1, \forall \mathbf{x}$). It is easy to see that -- unlike in the case of a noiseless training dataset $\trainingSample$ where a margin maximizer was an LMC-- in the noisy case, a margin maximizer cannot even be a lossless model. This is a direct consequence of Lemma~\ref{losslessIFFCapturesCCProbLemma}, the proof of which can be found in Section A of the Supplementary Material.

%\textcolor{red}{1. Show that $f$ is lossless (i.e. satisfies $I_{S}(F;Y) = I_{S}(X;Y)$) iff the output of the model $f(\mathbf{x})$ for each $\mathbf{x}$ were to encode the empirical $p(y=1|\mathbf{x})$ of examples $(\mathbf{x},y)$ on $\trainingSample$.}

\newcommand{\Z}{\mathcal{Z}}
\newcommand{\Prob}{\mathbb{P}}
\newcommand{\entropyFunction}{\phi}

\begin{lemma}\label{losslessIFFCapturesCCProbLemma}
Suppose that $X$ and $Y$ are discrete random variables taking values in $\X$ and $\Y=\{-1,+1\}$, respectively. Suppose that $f:\X \rightarrow \Z$ and let $F=f(X)$. Then $I(X;Y)=I(F;Y)$ if and only if the map $x\mapsto \Prob(Y=1|X=x)$ is constant on all sets of the form $f^{-1}(z)\subseteq \X$ for some $z \in \Z$.
\end{lemma}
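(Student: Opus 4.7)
My plan is to reduce the stated equality of mutual informations to a conditional independence statement $Y \perp X \mid F$, then translate that into the fiber condition on $x \mapsto \Prob(Y=1 \mid X=x)$. The key structural fact I will exploit throughout is that $F = f(X)$ is a deterministic function of $X$, so $H(F \mid X) = 0$ and therefore $I(F;Y \mid X) = 0$.

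First I would apply the chain rule for mutual information to $I((X,F);Y)$ in the two natural orders:
\begin{equation*}
I(X;Y) + I(F;Y \mid X) \;=\; I((X,F);Y) \;=\; I(F;Y) + I(X;Y \mid F).
\end{equation*}
Because $F$ is a function of $X$, the second term on the left vanishes, giving the clean identity $I(X;Y) - I(F;Y) = I(X;Y \mid F)$. Since conditional mutual information is nonnegative and equals zero exactly when $Y$ and $X$ are conditionally independent given $F$, the equality $I(X;Y)=I(F;Y)$ is equivalent to $Y \perp X \mid F$.

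Next I would unpack $Y \perp X \mid F$ in the present discrete, finitely supported setting. Conditional independence means $\Prob(Y = y \mid X=x, F=z) = \Prob(Y = y \mid F=z)$ for all $(x,z,y)$ with $\Prob(X=x, F=z) > 0$. Since $F=f(X)$, the event $\{X=x, F=z\}$ is empty unless $z = f(x)$, in which case it reduces to $\{X=x\}$; thus the condition becomes $\Prob(Y=y \mid X=x) = \Prob(Y=y \mid F=f(x))$ for every $x$ in the support of $X$. Because $\Y = \{-1,+1\}$, this is equivalent to the single scalar condition that $x \mapsto \Prob(Y=1 \mid X=x)$ depends on $x$ only through $f(x)$, i.e.\ it is constant on each fiber $f^{-1}(z)$.

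For the converse direction of this last equivalence, suppose $\Prob(Y=1 \mid X=x) = c_z$ for all $x \in f^{-1}(z)$. Then by the law of total probability conditioned on $\{F=z\}$,
\begin{equation*}
\Prob(Y=1 \mid F=z) \;=\; \sum_{x \in f^{-1}(z)} \Prob(Y=1 \mid X=x)\,\Prob(X=x \mid F=z) \;=\; c_z,
\end{equation*}
so $\Prob(Y=1 \mid X=x) = \Prob(Y=1 \mid F=f(x))$, which by the paragraph above yields $Y \perp X \mid F$ and hence the mutual information equality. I expect the only real subtlety to be bookkeeping around null fibers (values $z$ with $\Prob(F=z)=0$), which is harmless under the finitely supported assumption since we quantify only over $x$ in the support of $X$; the chain-rule step and the data-processing interpretation are otherwise completely standard.
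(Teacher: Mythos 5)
Your proof is correct, but it takes a genuinely different route from the paper's. You reduce the statement to the equality case of the data processing inequality: two applications of the chain rule give $I(X;Y)-I(F;Y)=I(X;Y\mid F)$ (using $I(F;Y\mid X)=0$ since $F=f(X)$), and then $I(X;Y\mid F)=0$ is unpacked as the conditional independence $Y\perp X\mid F$, which in the discrete setting with $F$ deterministic in $X$ is exactly the statement that $\Prob(Y=1\mid X=x)$ depends on $x$ only through $f(x)$. The paper instead works directly with conditional entropies: it groups the sum defining $H(Y\mid X)$ by the fibers $f^{-1}(z)$ and applies Jensen's inequality to the strictly concave binary entropy functional $\entropyFunction(p)=-p\log p-(1-p)\log(1-p)$, obtaining $H(Y\mid X)\leq H(Y\mid F)$ with equality precisely when the conditional probability is constant on each fiber, and then converts to mutual information via $I(Y;\cdot)=H(Y)-H(Y\mid\cdot)$. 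Your argument is more modular and makes the structural content transparent — the lemma is nothing but the equality condition in data processing — and it extends verbatim to non-binary $\Y$ (with the fiber condition becoming constancy of the whole conditional law). The paper's argument is more self-contained, needing only Jensen for a strictly concave function, and it explicitly exhibits where strict concavity forces the equality condition; it also reuses the same $\entropyFunction$-based bookkeeping as the other lemmas in the appendix. Your handling of the one subtlety — null fibers and the restriction to $x$ in the support — is adequate, and the converse direction via the law of total probability on $\{F=z\}$ is exactly the tower-property step the equivalence needs.
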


In simpler terms, Lemma~\ref{losslessIFFCapturesCCProbLemma} tells us that if for two feature vectors $\mathbf{x_1}$ and $\mathbf{x_2}$ a model $f$ satisfies $f(\mathbf{x_1}) =f(\mathbf{x_2})$, then it also has to be the case that $\Prob(Y=1|X=\mathbf{x_1}) = \Prob(Y=1|X=\mathbf{x_2})$ for $f$ to be lossless (and inversely). Therefore, a margin maximizer, i.e. a model assigning maximal (i.e. \emph{the same}) score both to noiseless positive examples (unambiguously labelled positive examples, for which $\Prob(Y=1|X=\mathbf{x})= 1$) and to noisy popsitive examples (ambiguously labelled examples, i.e. ones with $0.5<\Prob(Y=1|X=\mathbf{x})<1$) violates the condition of losslessness.

We therefore see that a margin maximizing model $f$ of a noisy training dataset $\trainingSample$ cannot be an LMC of $\trainingSample$ as it is not even lossless on $\trainingSample$. Furthermore, as margin maximizers are themselves training error minimizers, this implies that not all training error minimizers of $\trainingSample$ are LMCs (or even lossless) on $\trainingSample$ either. These observations are summarized in Table~\ref{tab:noisy_vs_noiseless}.

A lossless model (one satisfying $I_{S}(F;Y) = I_{S}(X;Y)$) is one that captures all the information present in the features $X$ relevant for predicting the target $Y$. In the case of a noisy training dataset, this information includes the uncertainty introduced by the ambiguous labelling of a feature vector $\mathbf{x}$, i.e. $\Prob(Y=1|X=\mathbf{x})$. So a lossless model should assign different scores $f(\mathbf{x})$ to feature vectors $f(\mathbf{x})$ which have different values of $\Prob(Y=1|X=\mathbf{x})$.

Moreover, $f$ is an LMC (has the minimum $I_{S}(F;X)$ that allows $I_{S}(F;Y) = I_{S}(X;Y)$) iff it is lossless while using the fewest values $f(\mathbf{x})$ possible to encode the empirical $\Prob(Y=1|X=\mathbf{x})$, i.e. have as many distinct values for $f(\mathbf{x})$ as there are distinct values of $\Prob(Y=1|X=\mathbf{x})$.

%\textcolor{red}{2. Is Proposition 1 still valid for an LMC in the noisy setting? It is still the case that $I_{S}(F;Y) = I_{S}(X;Y)$ (losslessness condition is unchanged), but experimentally I observe $I_{S}(F;X) > I_{S}(X;Y)$. Why? Are conditions in Table 1 of Supplementary Material B not correct for Noisy data? What is new condition for (lossless) maximal compression (i.e. what is the minimum $I_{S}(F;X)$ that allows $I_{S}(F;Y) = I_{S}(X;Y)$)?}

%It is easy to show that these definitions reduce [rephrase: these are mutually exclusive cases; the information theoretic notions are general enough to cover both]  to Definition 2 \& Definition 4 respectively when the dataset is noiseless. In the noiseless case, there is no uncertainty over the labels of training examples, therefore $p(y=1|\mathbf{x})$ can assume only 2 values: 0 for examples with label $y=-1$ and 1 for examples with label $y=1$.

%In the noisy setting, margin maximization and lossless maximal compression are no longer equivalent. In fact, margin maximizers in this setting are not even lossless as they fail to capture the uncertainty over the labels of training examples. It is easy to demonstrate this by a simple counterexample... TODO

The above discussion provides an intuition into the limitations of margin maximization approaches in the presence of label noise. The sub-optimality of boosting (a margin maximization approach) in the presence of label noise has been observed in earlier studies~\cite{kalai2003boosting, servedio2003smooth, bootkrajang2013boosting} and here we provide an information-theoretic justification for this phenomenon. Simply put, the strategy of maximizing the margins on a noisy dataset is not producing a lossless maximally compressing model on that dataset. In fact, the resulting margin maximizing model, is not even going to be lossless on the training dataset as it will fail to capture the uncertainty over the labels. When the training data are noisy, we should instead aim to produce models whose scores $f(\mathbf{x})$ capture the underlying empirical distribution $\Prob(Y=1|X=\mathbf{x})$ (lossless). Ideally, we should aim for strategies producing models whose scores $f(\mathbf{x})$ are in $1-1$ correspondence to $\Prob(Y=1|X=\mathbf{x})$ (LMCs).

\begin{table}[h]
\centering
\caption{A brief comparison of loslessness and lossless maximal compression (LMC) and how it relates to training error minimization and margin maximization depending on whether the training dataset $\trainingSample$ is noisy or noiseless.}\label{tab:noisy_vs_noiseless}
\label{tab:my-table}
\begin{tabular}{c|c|c|}
\cline{2-3}
& Noiseless training dataset $\trainingSample$ & Noisy training dataset $\trainingSample$\\ \hline
\multicolumn{1}{|c|}{Losslessness} & \begin{tabular}[c]{@{}c@{}}All\\training error minimizers\\are lossless\end{tabular} & \begin{tabular}[c]{@{}c@{}}Some\\training error minimizers\\are lossless\\ \\No\\margin maximizer\\is lossless\end{tabular} \\ \hline
\multicolumn{1}{|c|}{LMC} & \begin{tabular}[c]{@{}c@{}}All\\margin maximizers\\are LMCs\end{tabular} & \begin{tabular}[c]{@{}c@{}}Some\\ training error minimizers\\are LMCs\end{tabular} \\ \hline
\end{tabular}
\end{table}

\section{Empirical Evidence}

\subsection{Experimental Setup}
Boosting, a method that explicitly maximizes the margins of the training examples\footnote{Gradient boosting is a family of ensemble learning methods that construct an additive model by adding on each round the component minimizing some monotonically decreasing loss function of the margin. It can be viewed as minimizing said loss by performing gradient descent on the space of components (base learners).}, can be shown empirically to also converge to LMC models on noiseless datasets. After lossless maximal compression is achieved, so is the minimal generalization error, as estimated by the error on the test set. To demonstrate this, we plot the \emph{trajectory} of the boosting ensemble on the \emph{entropy-normalized information plane}, $\mutualInformationSample(F;Y)/\entropySample(Y)$ vs. $\mutualInformationSample(F;X)/\entropySample(X)$. For each boosting round $t$, $F = F_t$ denotes the RV of which the ensemble's outputs are realizations. %As the rounds of boosting progress, the ensemble is driven to the point of lossless maximal compression --the point of maximal achievable $\mutualInformationSample(F;Y)$ that has minimal $\mutualInformationSample(F;X)$-- on the information plane. Moreover, this point coincides on average with the point on which the test error is minimized and so does the point on which training margins are minimized.

The experiments were carried out on binary classification tasks on both real-world UCI datasets and artificial data (dataset descriptions in the Supplementary Material). Qualitatively, the results are similar for all datasets (see Figure \ref{fig:trajectories_real} as well as Section C of the Supplementary Material). The boosting ensemble consisted of a maximum of $T=100$ decision trees (i.e. rounds of boosting) of maximal depth $6$. No shrinkage of the updates or subsampling of the examples was performed (both are techniques to counter overfitting), and the exponential loss function was used (i.e. the loss minimized by AdaBoost). We performed no hyperparameter optimization. %In Figure \ref{fig:AAAA} we deviate slightly from this setup to showcase that adding shrinkage does not qualitatively change the results. %{\color{red}{[TODO: Can add many more results showing that the trajectories are qualitatively the same using practically ANY hyperparameter / discretization setting -- given sufficient boosting rounds and not extreme base learner complexity, which leads to trivial trajectories.]}}
Plotting trajectories on the information plane follows~\cite{tishby2015deep, shwartz2017opening}. All information-theoretic quantities were estimated on the training data by first discretizing the features \& model outputs in $b = 100$ equal-sized bins\footnote{Note that by discretizing the features we might convert an originally noiseless dataset into a noisy one. In the experiments included in this paper this did not happen for any dataset for the numbers of discretization bins chosen. So all results shown are on noiseless datasets.}, then using maximum likelihood estimators. The joint RV $X$ was then constructed by the discretized features $X_1, X_2, \dots, X_d$ as $X = \sum_{i=1}^{d} X_{i}b^{i-1}$. We plot average results across $100$ runs with different train-test splits ($50\%$--$50\%$) on the same original data. We also visualize the trajectories obtained by some random individual runs to showcase that although they can vary significantly from one another, they all follow the same general pattern. All datasets \& code used in the experiments can be downloaded at \url{https://github.com/nnikolaou/margin_maximization_LMC}.

%The trajectories we obtain are qualitatively the same regardless of the number of bins, and the other hyperparameters...within reason...

\subsection{Results \& Analysis}

Let us first introduce some characteristic points on the information plane:\\
%The normalization was performed for ease of comparisons; plotting $\mutualInformationSample(F;Y)$ vs. $\mutualInformationSample(F;X)$ would not change the form of the results.
%\textbf{Point of origin:}
%We can imagine the axes origin $(0,0)$ as the initial --`empty model'-- i.e. the one we have before starting training the classifier on $(X,Y)$, since $\mutualInformationSample(F(X);X) = (F(X);Y) = 0$. [CAN REMOVE. CAN USE $(1,0)$ AS A STARTING POINT -ADOPTING A VIEW THAT 'BEFORE TRAINING THE FULL TRAINING SET IS THE MODEL OF ITSELF'- WHICH THEN COINCIDES WITH THE MINIMALLY COMPRESSED REPRESENTATION POINT.]
\textbf{Lossless maximal compression (LMC) point:} A red star on the information plane denotes the point of lossless maximal compression -- the optimal feasible point a model $f$ can occupy on the plane on a given dataset -- on which $\mutualInformationSample(F;Y)$ is the maximal achievable while $\mutualInformationSample(F;X)$ is minimal. On this point, $\mutualInformationSample(F;Y) = \mutualInformationSample(F;X) = \mutualInformationSample(X;Y)$ and for a noiseless dataset, $\mutualInformationSample(X;Y)/\entropySample(Y) = 1$.\\ 
%In practice we can also calculate the point in which the sum of absolute distances among these three quantities is minimized. 
%\textbf{Minimally compressed model point:} With a green circle on the information plane and a green dashed line on the error plot, we denote the point in which $\mutualInformationSample(F;X) = \entropySample(F)$ is maximized.  {\color{red}[TODO: Can remove this. Not crucial for our experiments.]}
\textbf{Average margin maximization point:} With a  hollow green circle on the information plane, we denote the model (round of boosting) under which the average (equiv. total) margin is first minimized.\\
\textbf{Training error minimization point:} With a full black dot on the information plane, we denote the model (round of boosting) under which the training error is first minimized (losslessness is achieved). At this point $\mutualInformationSample(F;Y)$ has reached its maximum, so for a noiseless dataset, $\mutualInformationSample(X;Y)/\entropySample(Y) = 1$.\\
\textbf{Test error minimization point:} With a magenta square on the information plane, we denote the model (round of boosting) under which the test error (proxy for generalization error) is first minimized.

\begin{figure}%[H]
\centering
\subfigure{\includegraphics[width=0.49\textwidth]{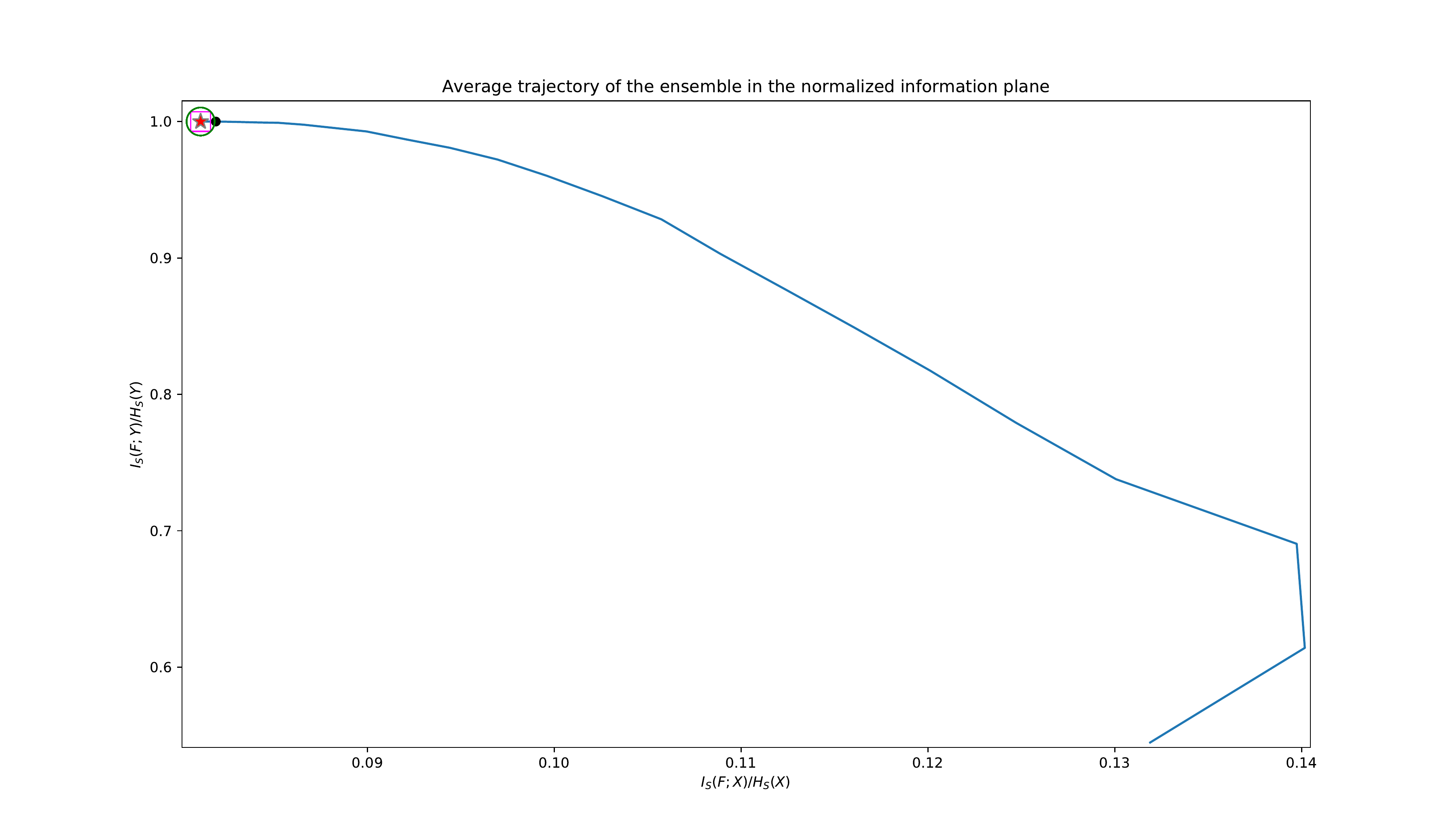}}
\subfigure{\includegraphics[width=0.49\textwidth]{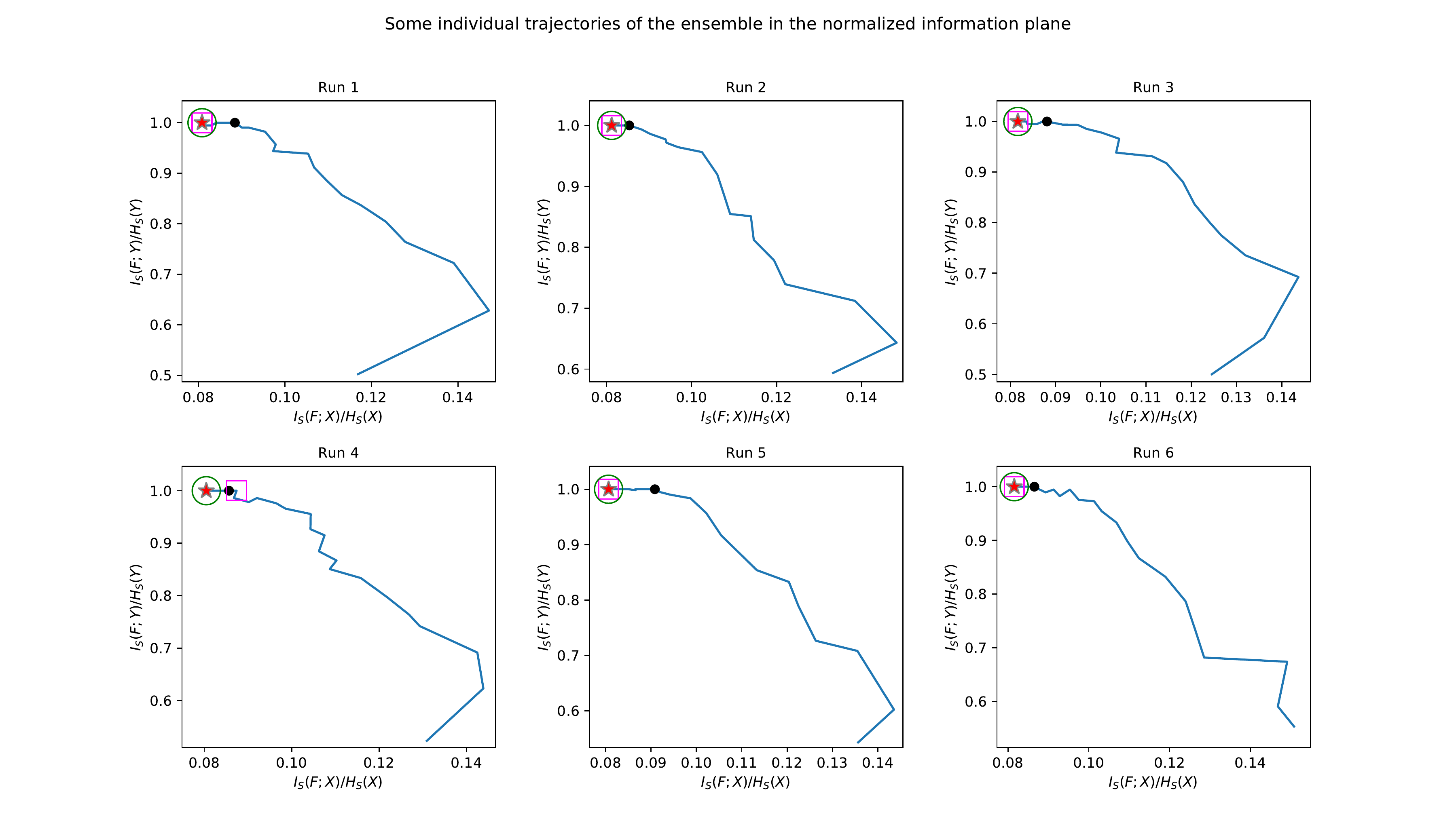}}
\subfigure{\includegraphics[width=0.49\textwidth]{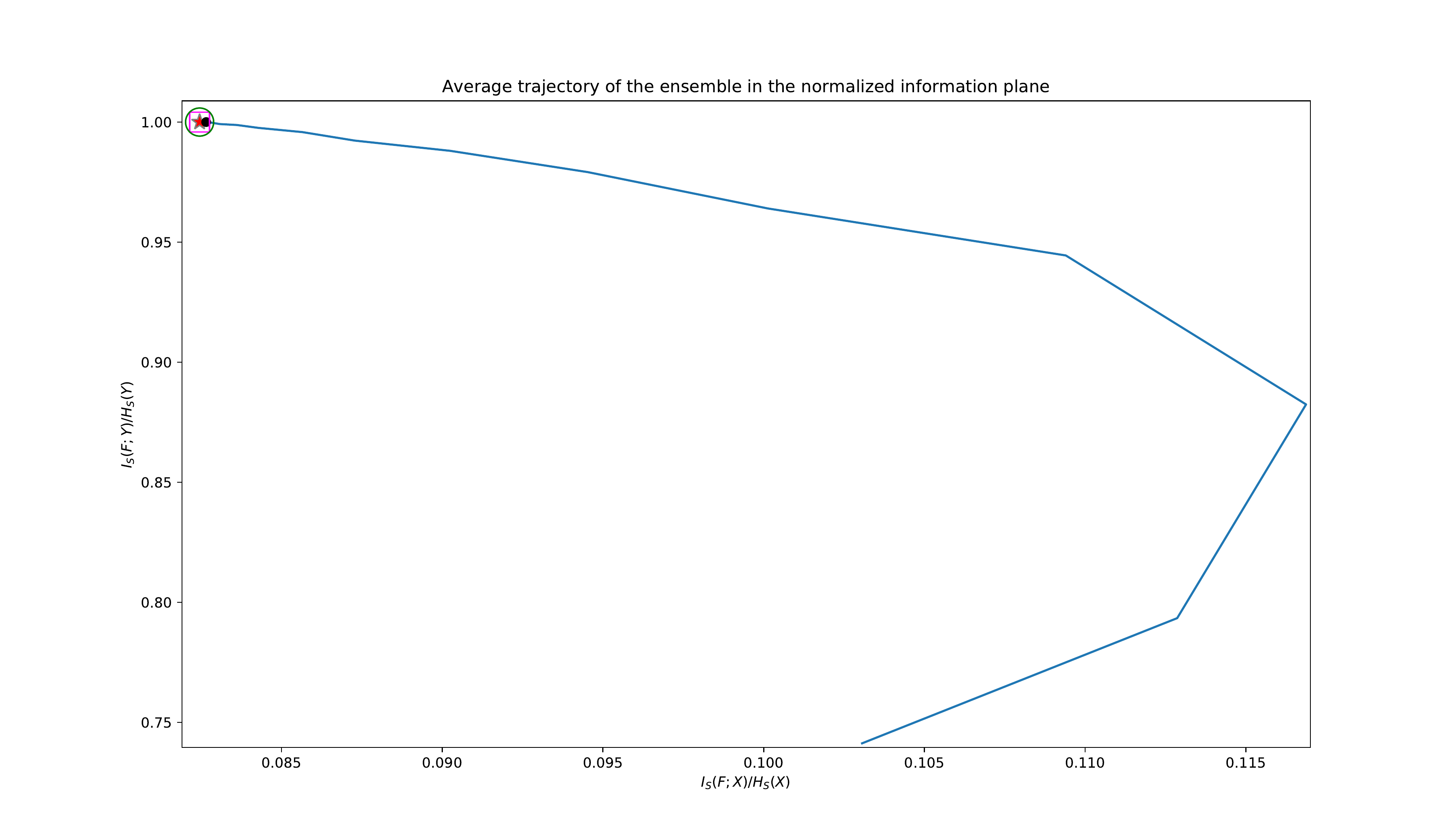}}
\subfigure{\includegraphics[width=0.49\textwidth]{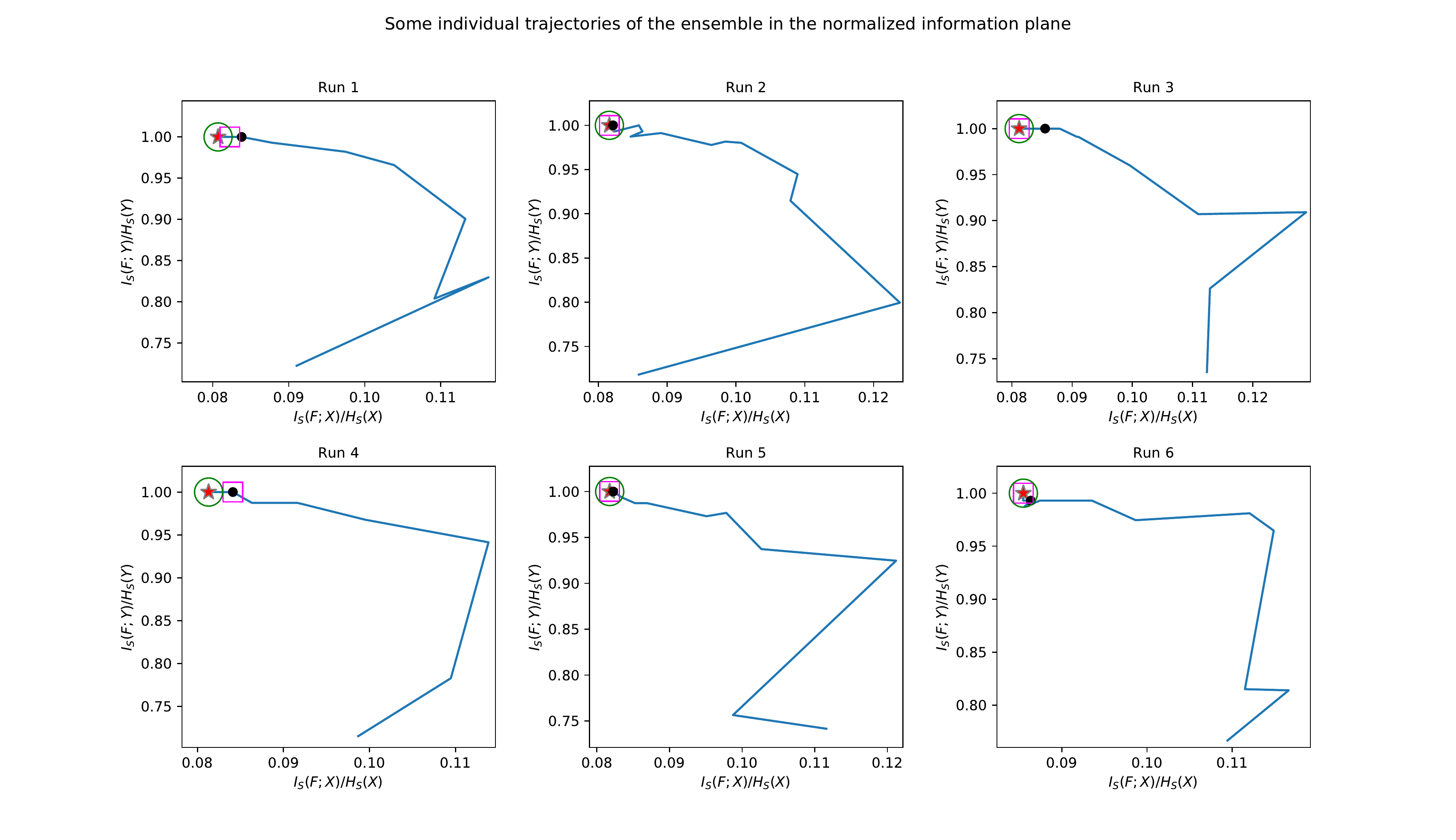}}
\subfigure{\includegraphics[width=0.49\textwidth]{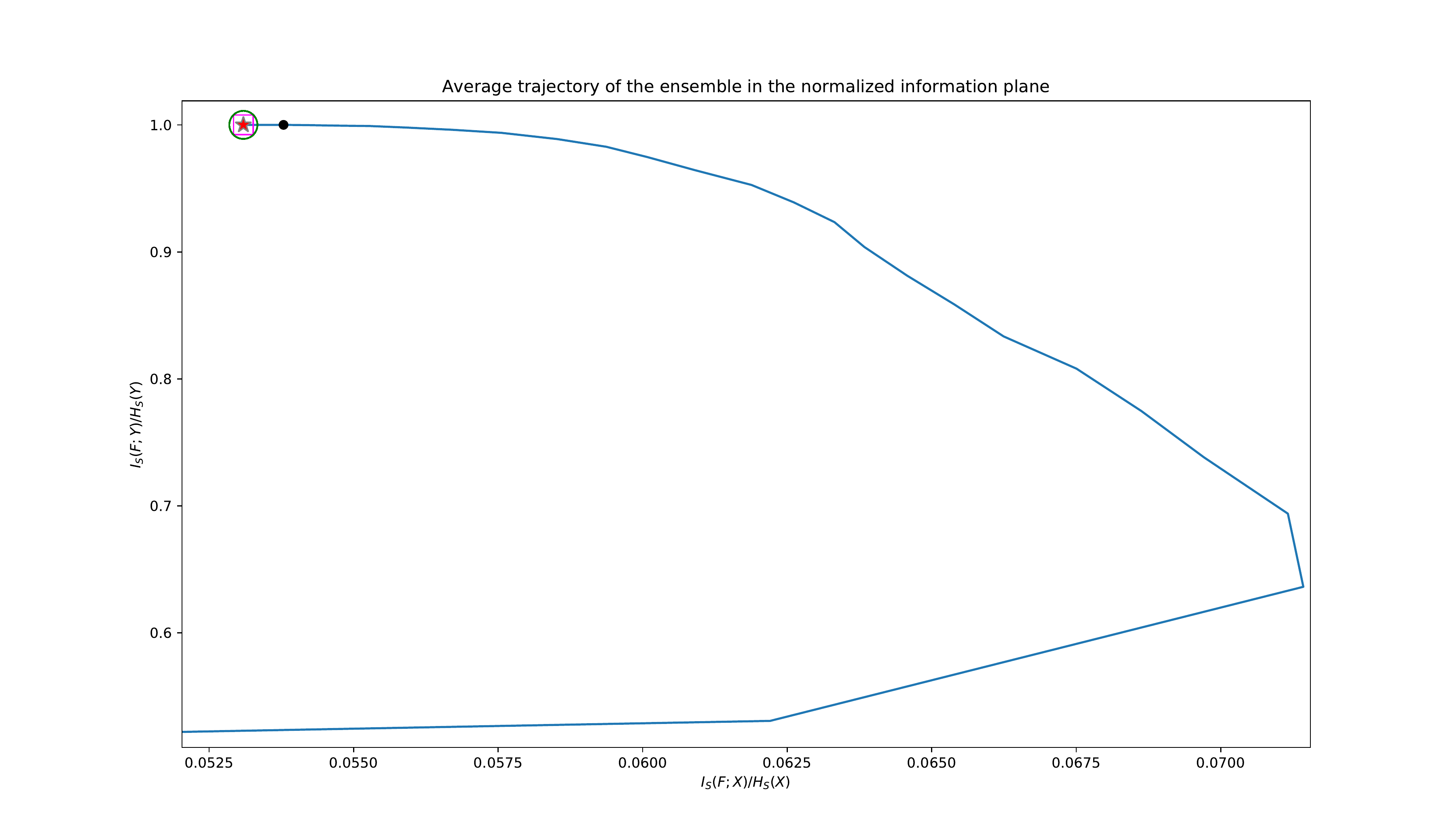}}
\subfigure{\includegraphics[width=0.49\textwidth]{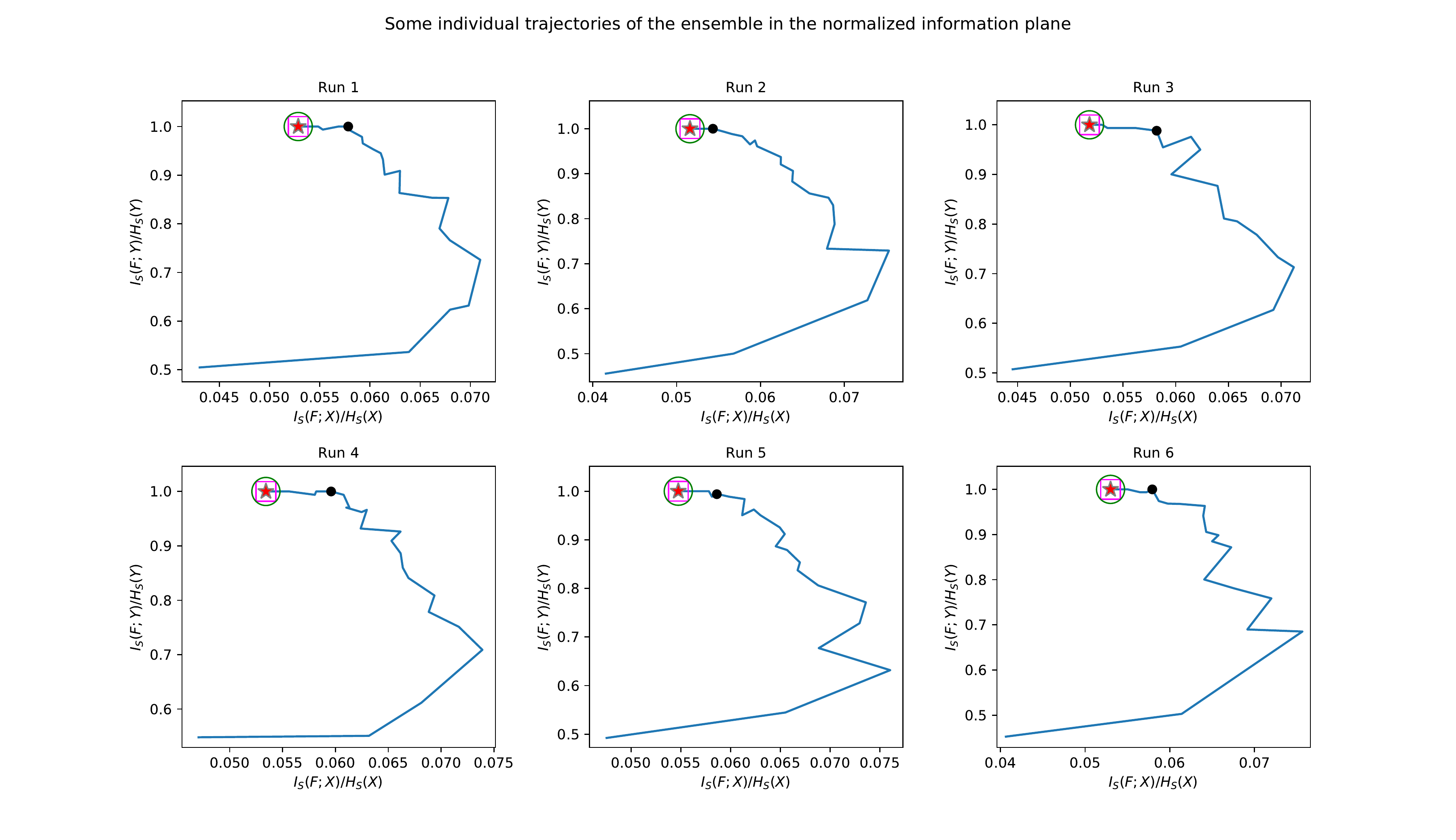}}
\subfigure{\includegraphics[width=0.49\textwidth]{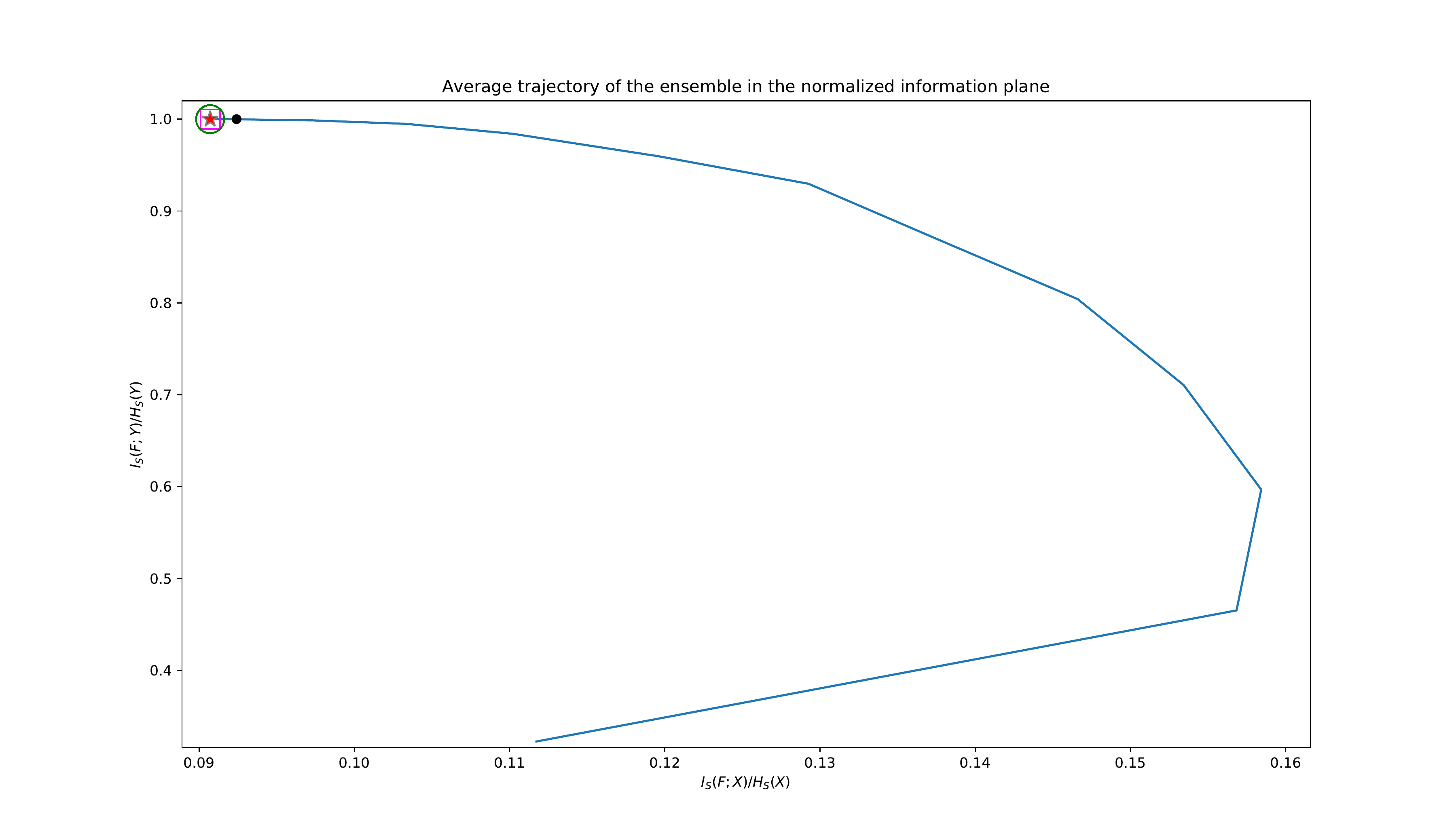}}
\subfigure{\includegraphics[width=0.49\textwidth]{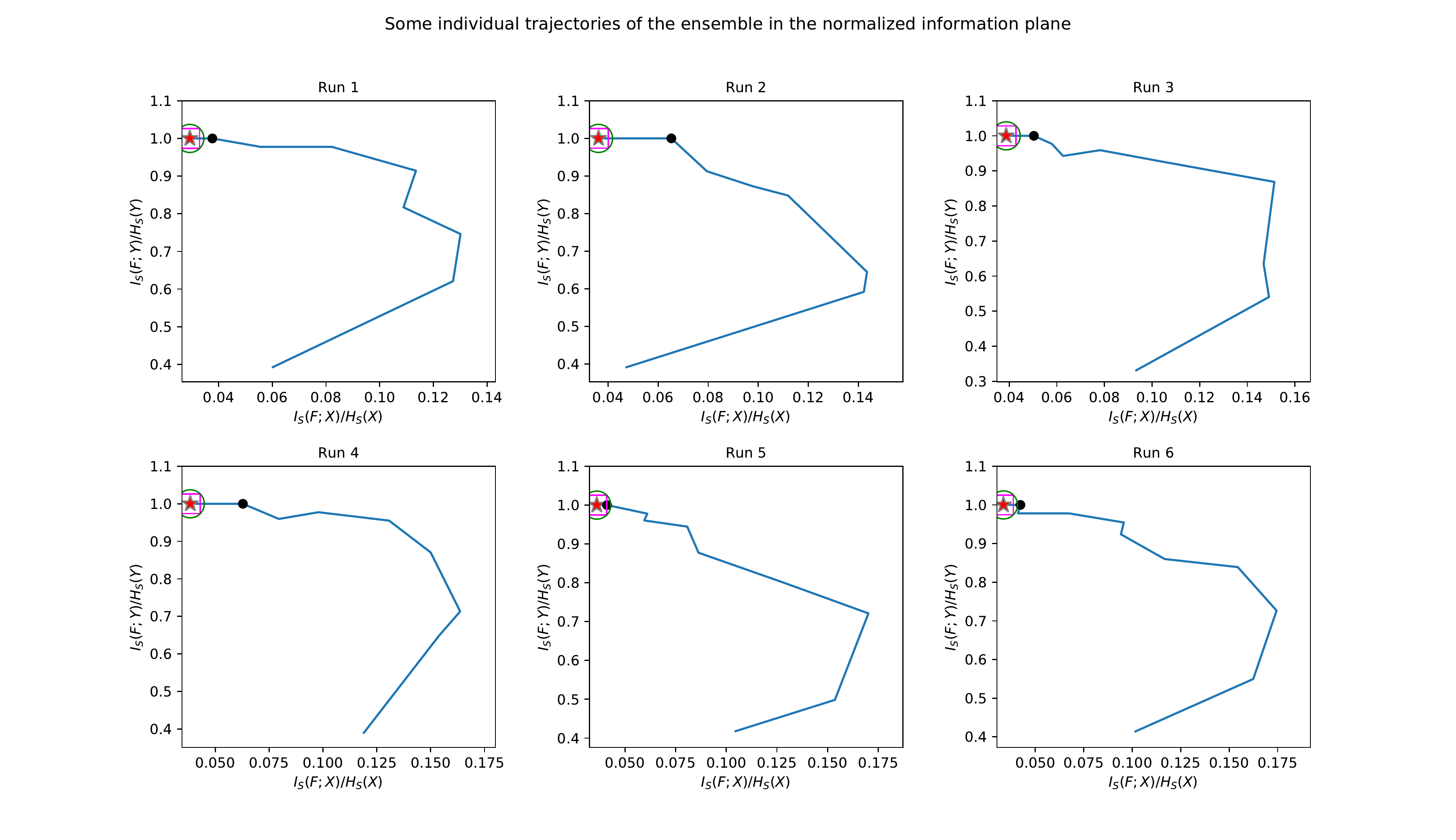}} 
\caption{Trajectory of the boosting ensemble on the normalized information plane as the rounds of boosting progress. We highlight the point on which the training error is first minimized (full black circle), the point on which the test error is first minimized (magenta square), the point on which the margins are first maximized (hollow green circle) and the lossless maximal compression point (red star). From TOP to BOTTOM (dataset): \emph{waveform}, \emph{krvskp}, \emph{musk2}, \emph{credit}; LEFT: Average trajectory across 100 runs; RIGHT: Some random individual trajectories. Notice that on all datasets, boosting traces a trajectory that leads to the LMC point, the latter coinciding with the margin maximization point and --on average-- with the  test error minimization point. The results are qualitatively consistent across individual runs: trajectories can vary significantly yet the aforementioned observations hold.}\label{fig:trajectories_real}
%\vspace{-0.75em}
\end{figure}

\begin{figure}%[H]
\centering
\subfigure{\includegraphics[width=0.49\textwidth]{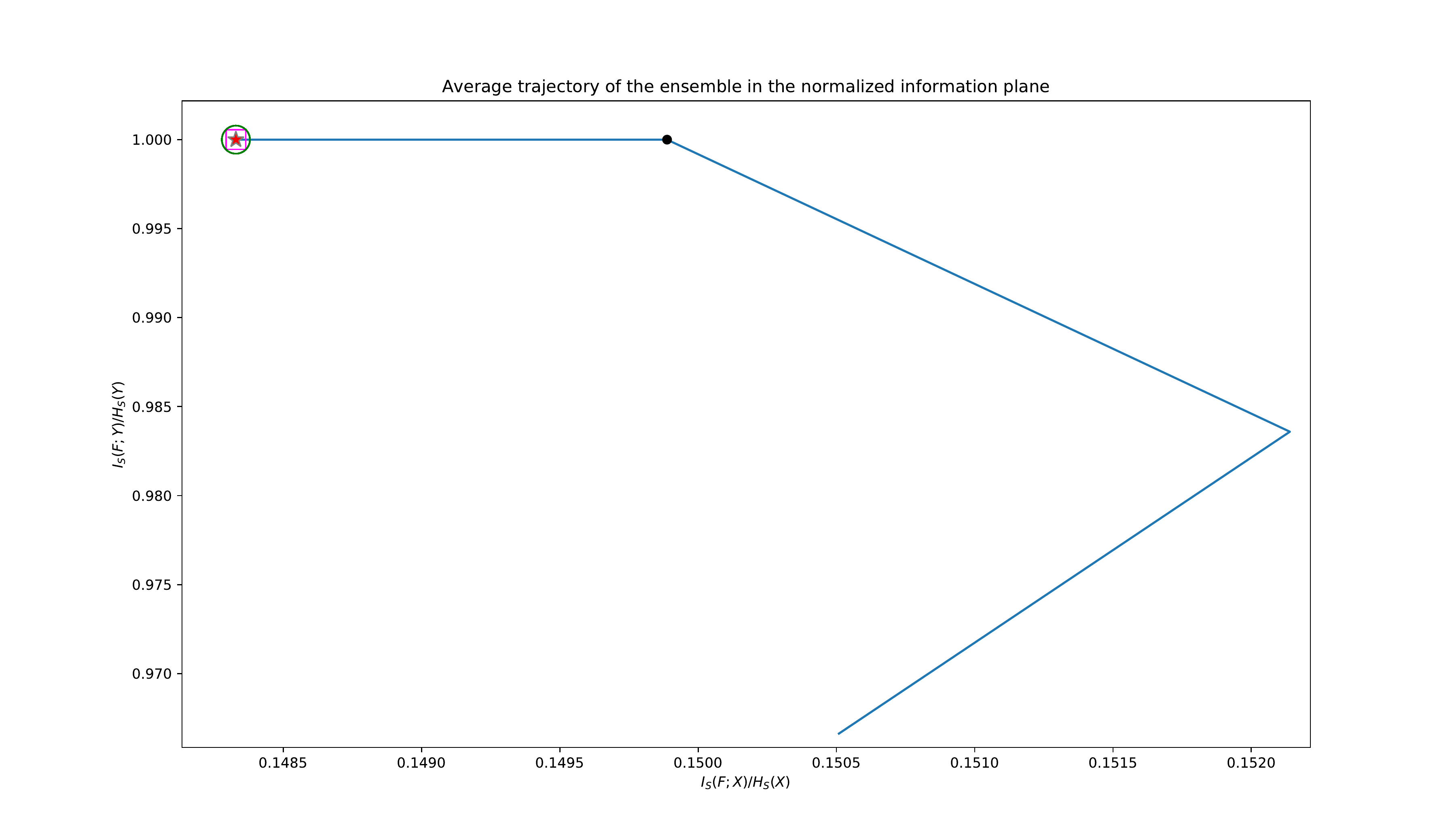}}
\subfigure{\includegraphics[width=0.49\textwidth]{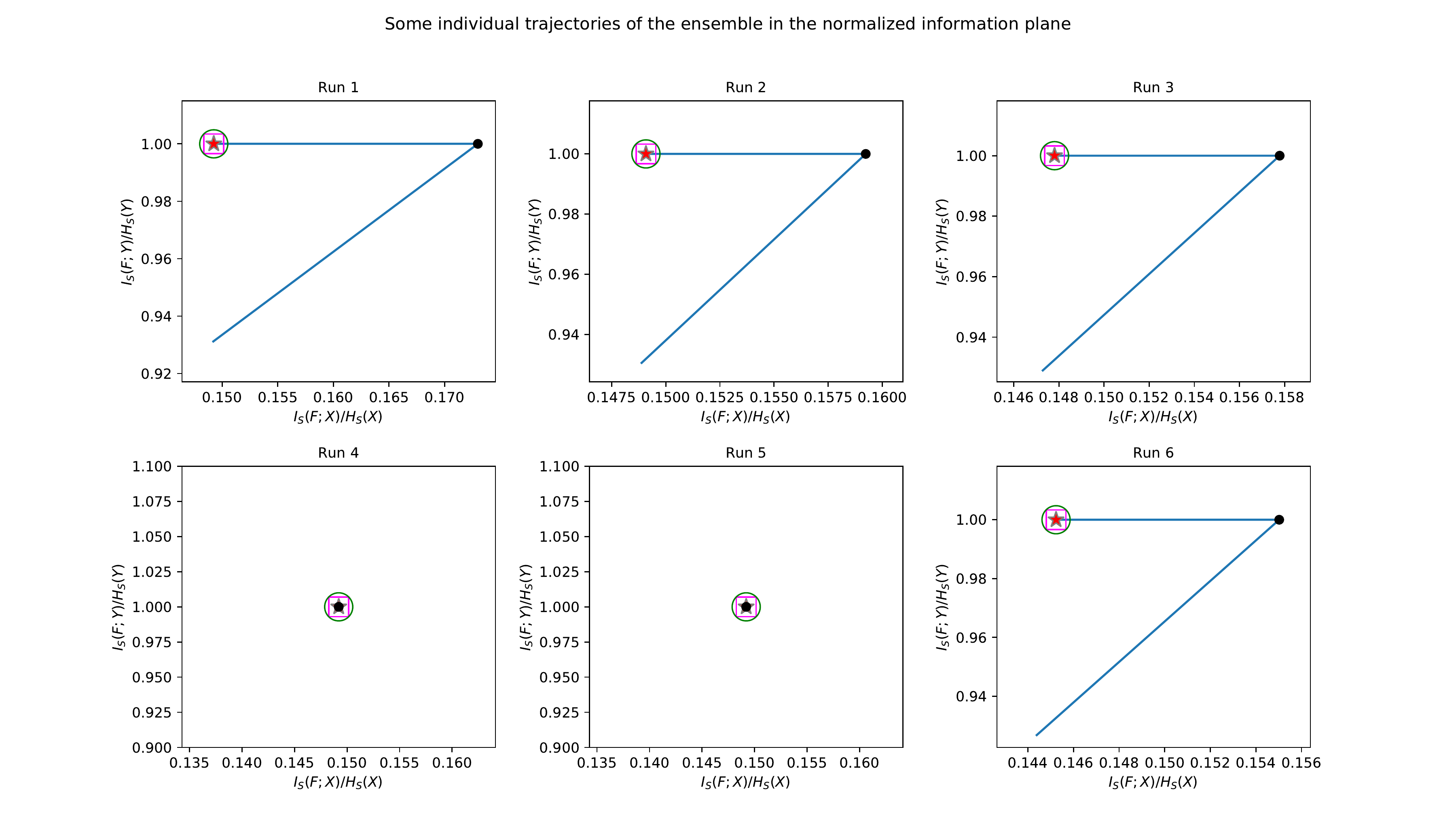}}
\subfigure{\includegraphics[width=0.49\textwidth]{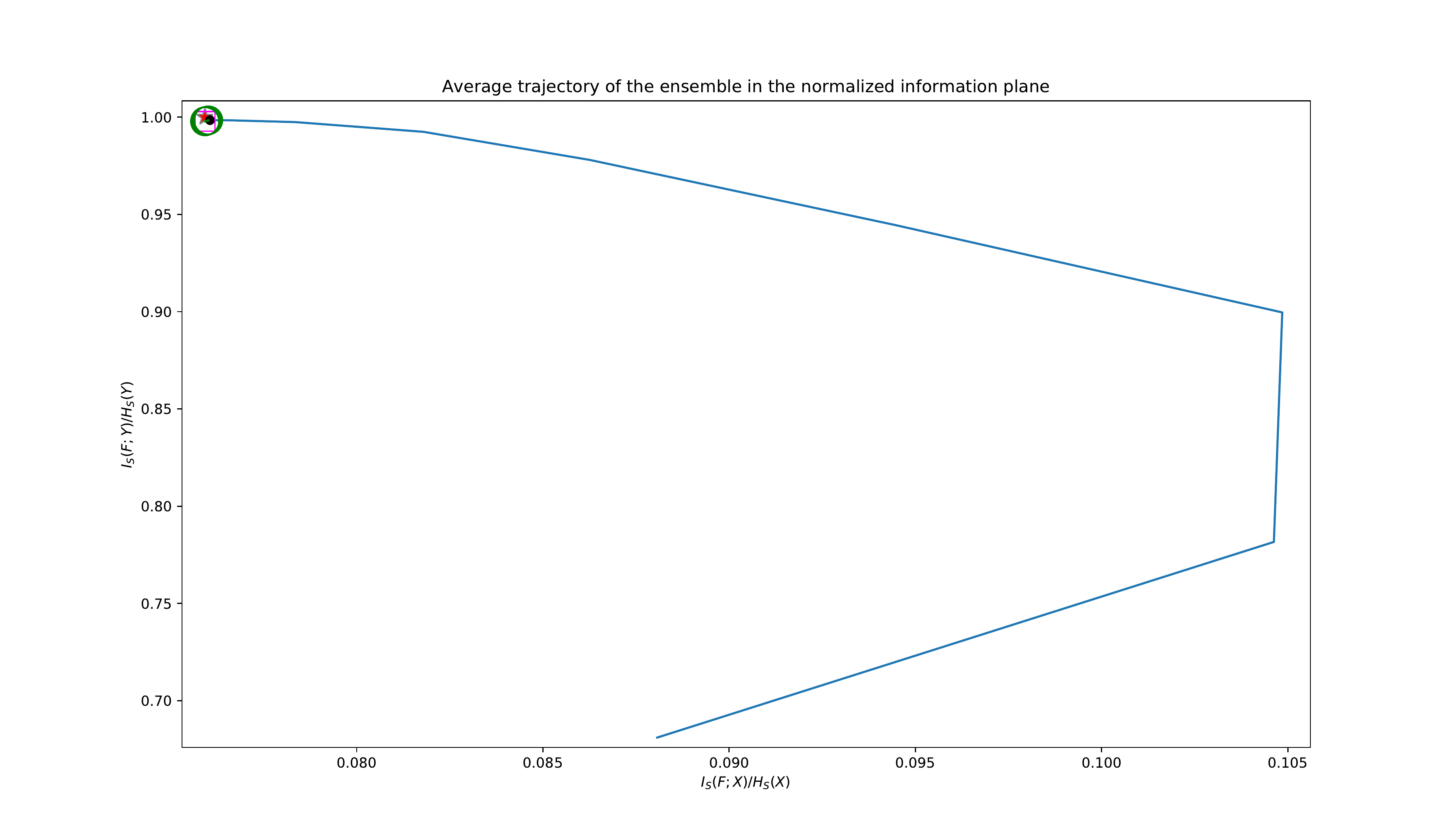}}
\subfigure{\includegraphics[width=0.49\textwidth]{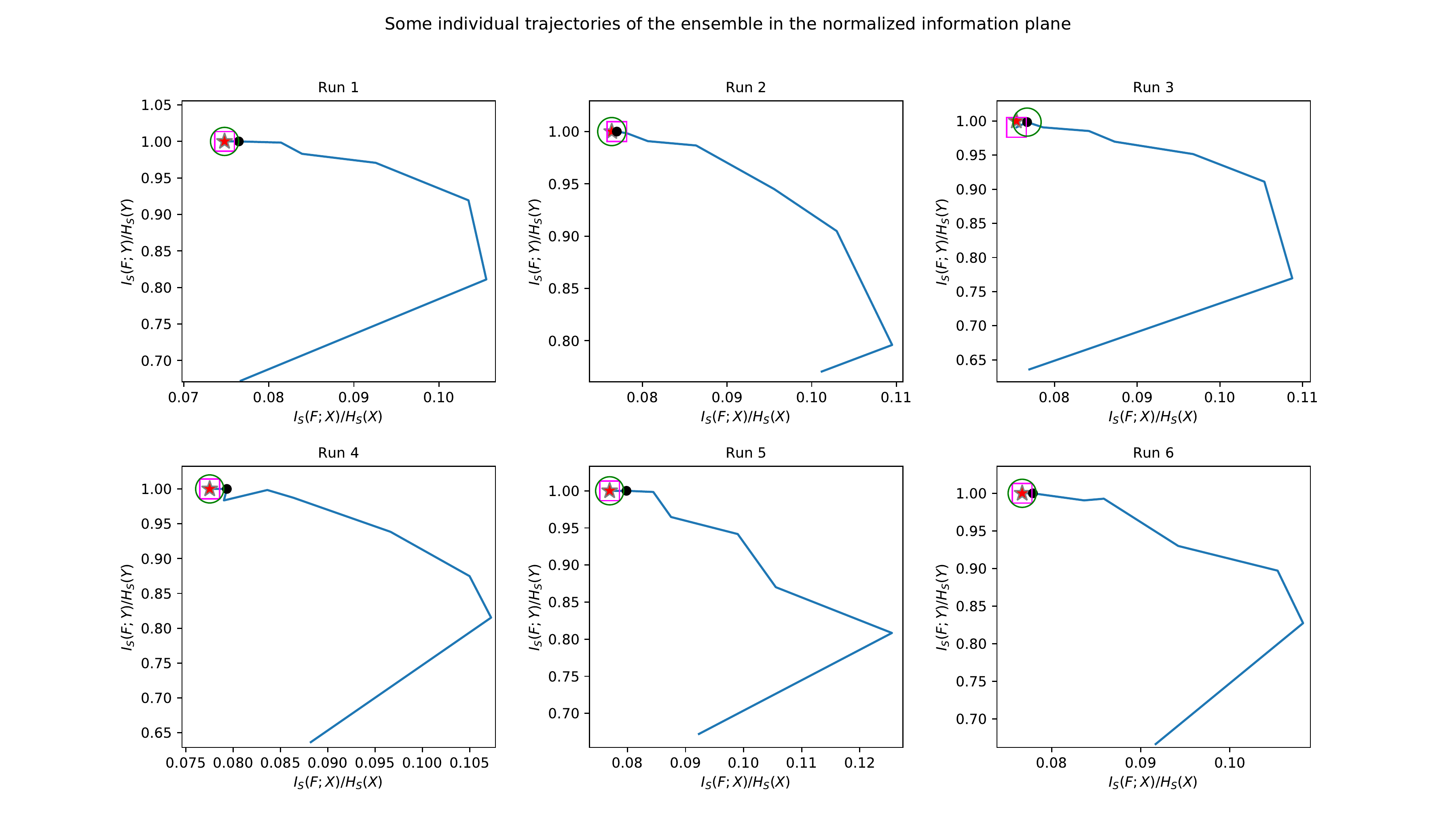}}
\subfigure{\includegraphics[width=0.49\textwidth]{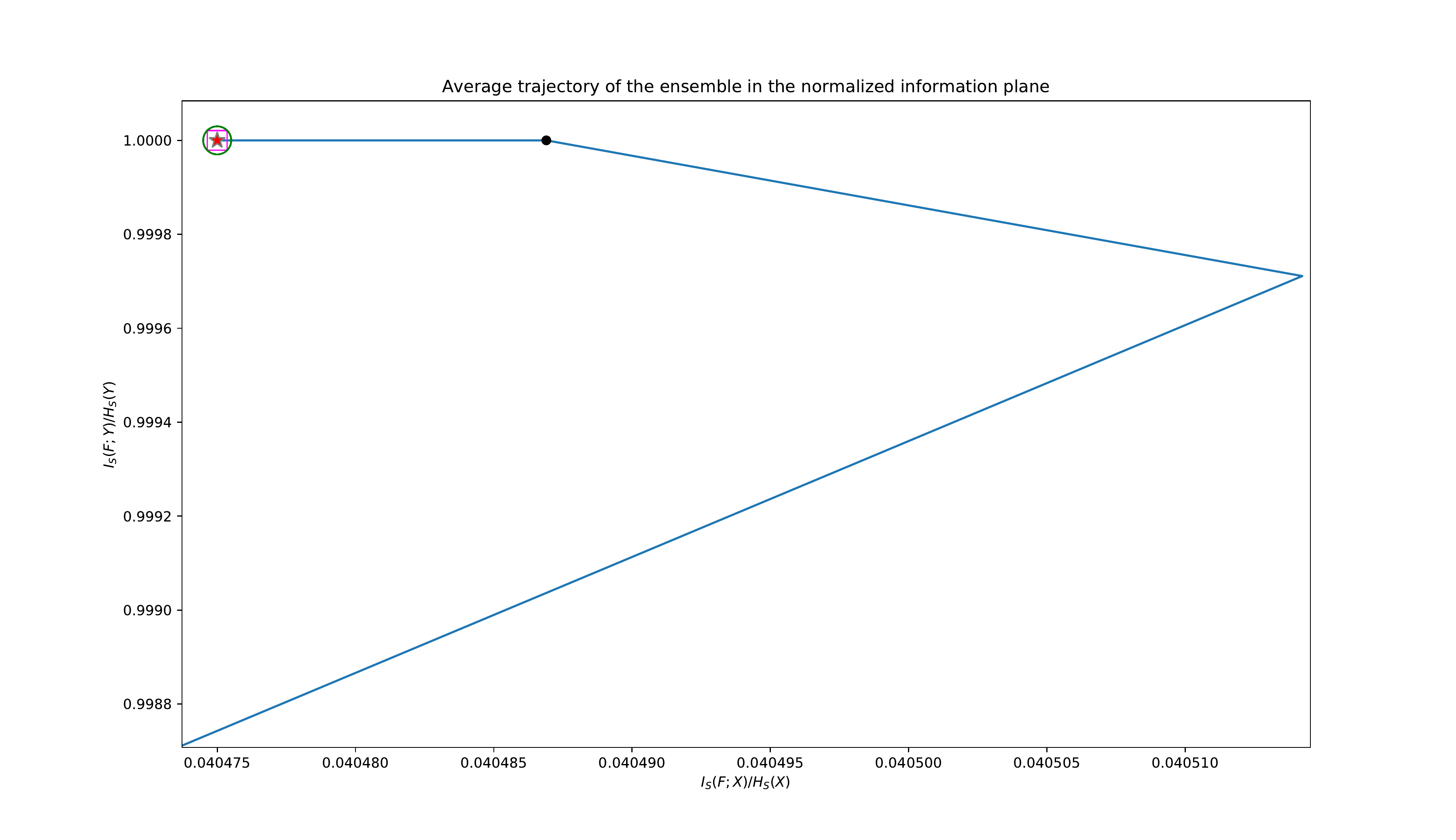}}
\subfigure{\includegraphics[width=0.49\textwidth]{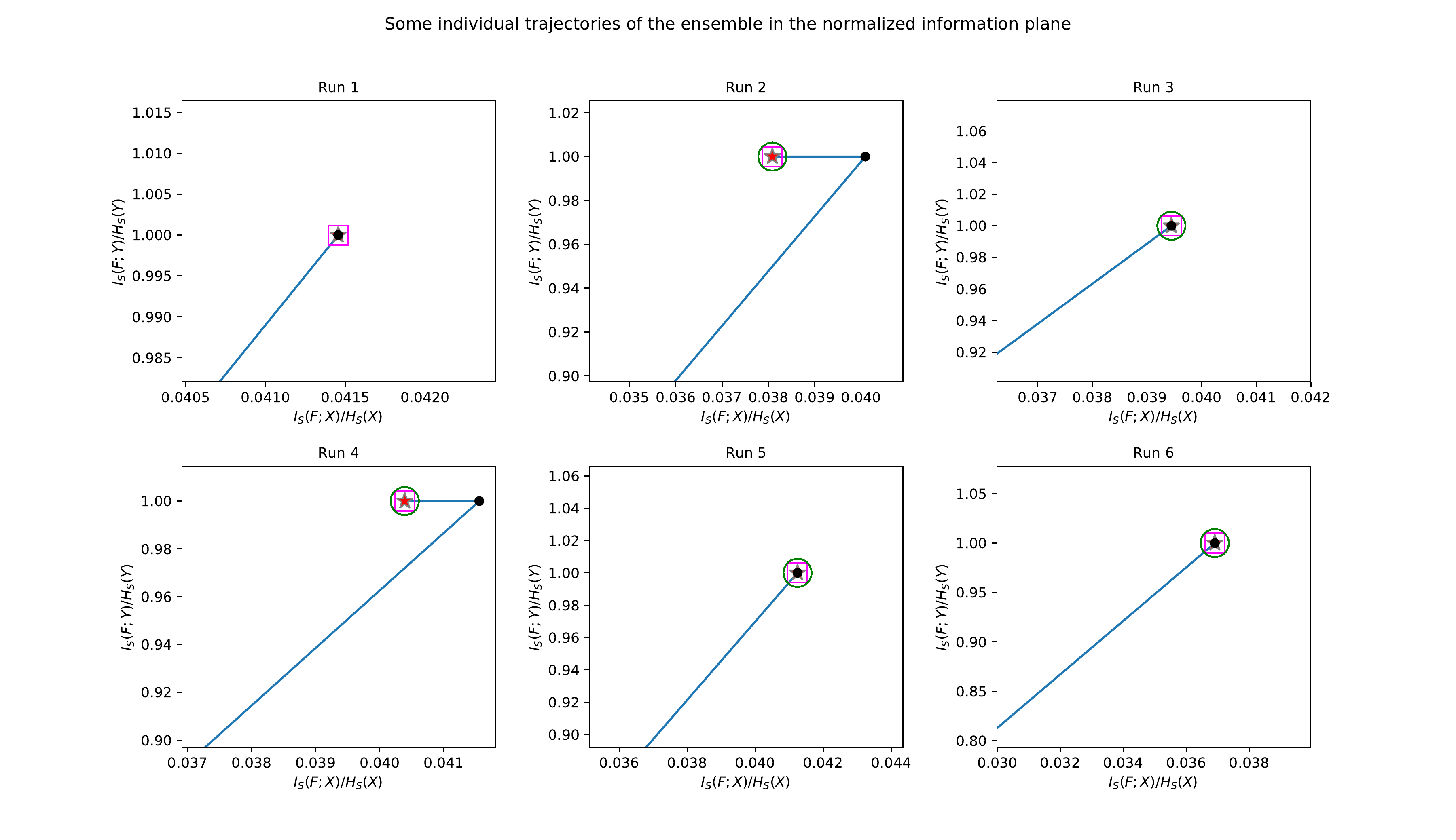}}
\subfigure{\includegraphics[width=0.49\textwidth]{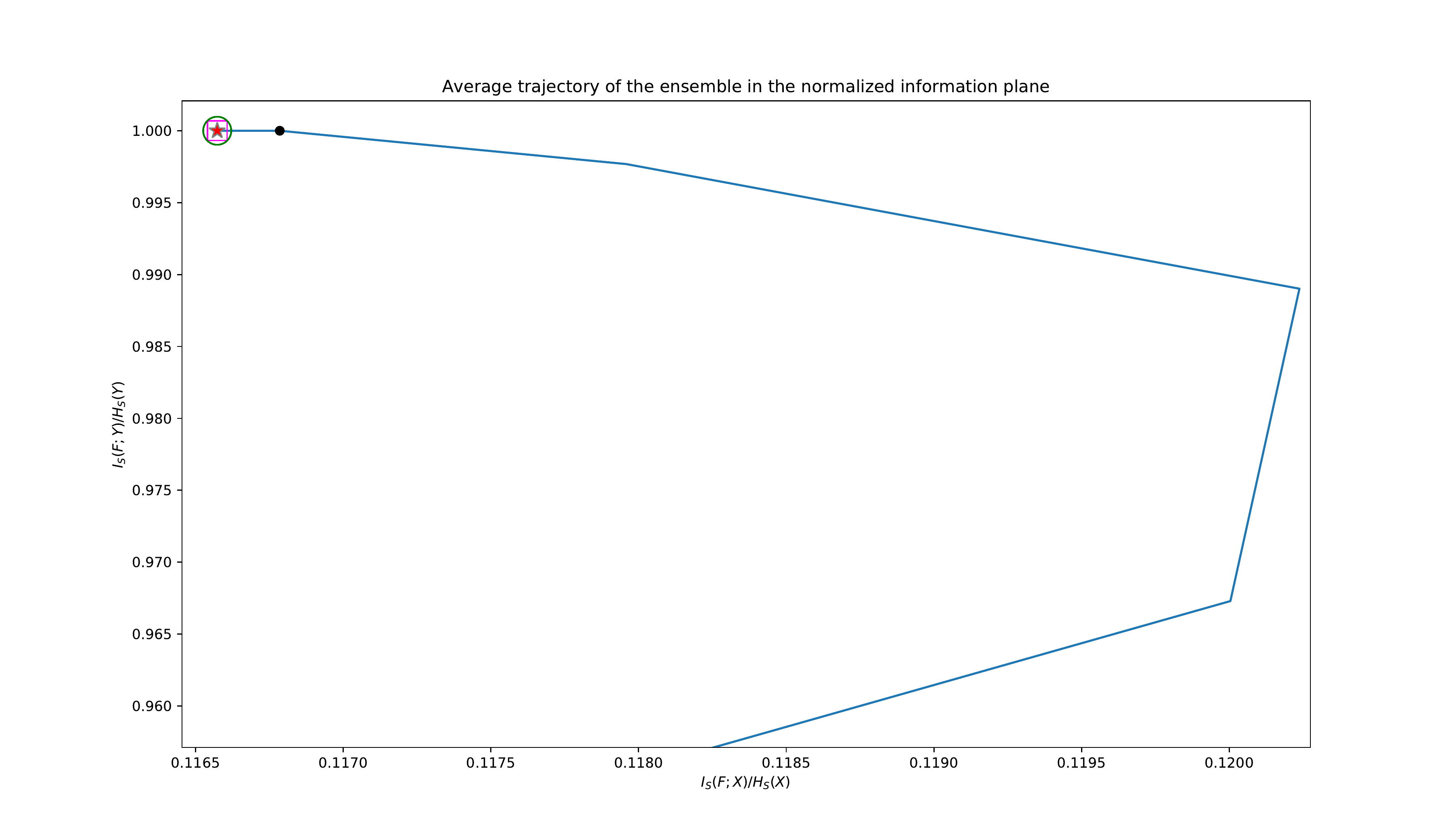}}
\subfigure{\includegraphics[width=0.49\textwidth]{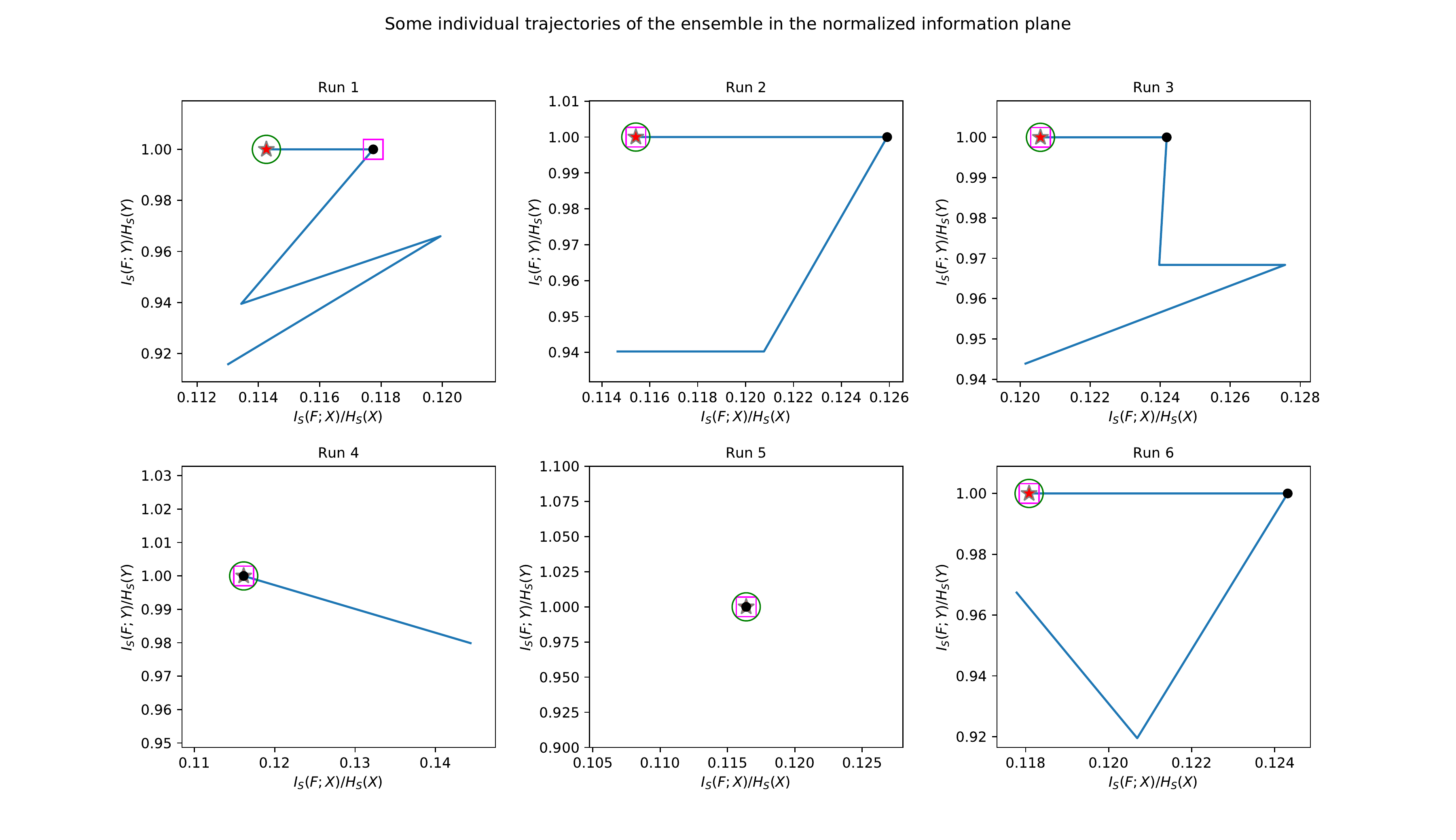}} 
\caption{Trajectory of the boosting ensemble on the normalized information plane as the rounds of boosting progress. We highlight the point on which the training error is first minimized (full black circle), the point on which the test error is first minimized (magenta square), the point on which the margins are first maximized (hollow green circle) and the lossless maximal compression point (red star). From TOP to BOTTOM (dataset): \emph{sonar}, \emph{splice}, \emph{semeion}, \emph{wdbc}; LEFT: Average trajectory across 100 runs; RIGHT: Some random individual trajectories.}\label{fig:trajectories_real2}
%\vspace{-0.75em}
\end{figure}

\begin{figure}%[H]
\centering
\subfigure{\includegraphics[width=0.49\textwidth]{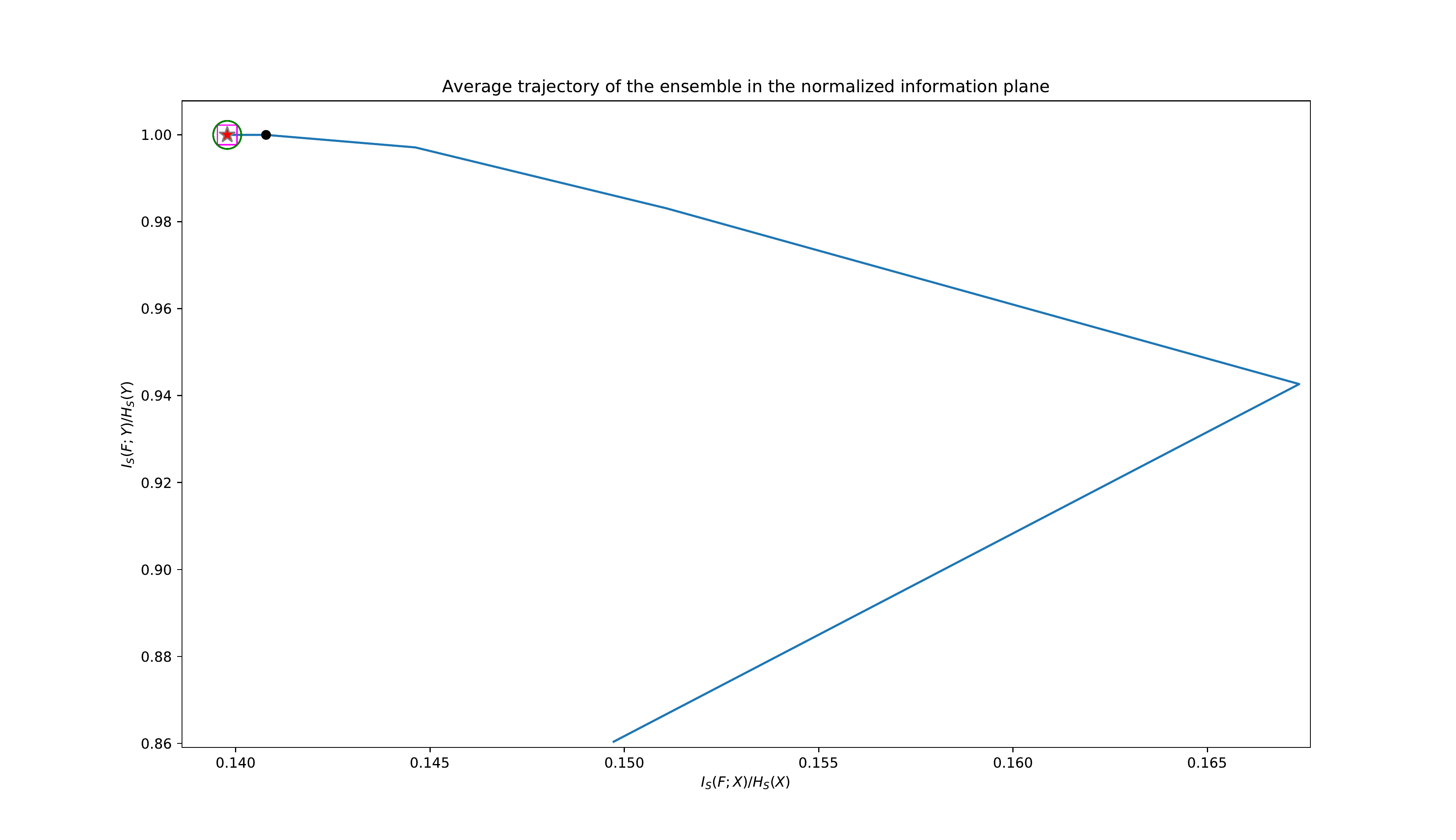}}
\subfigure{\includegraphics[width=0.49\textwidth]{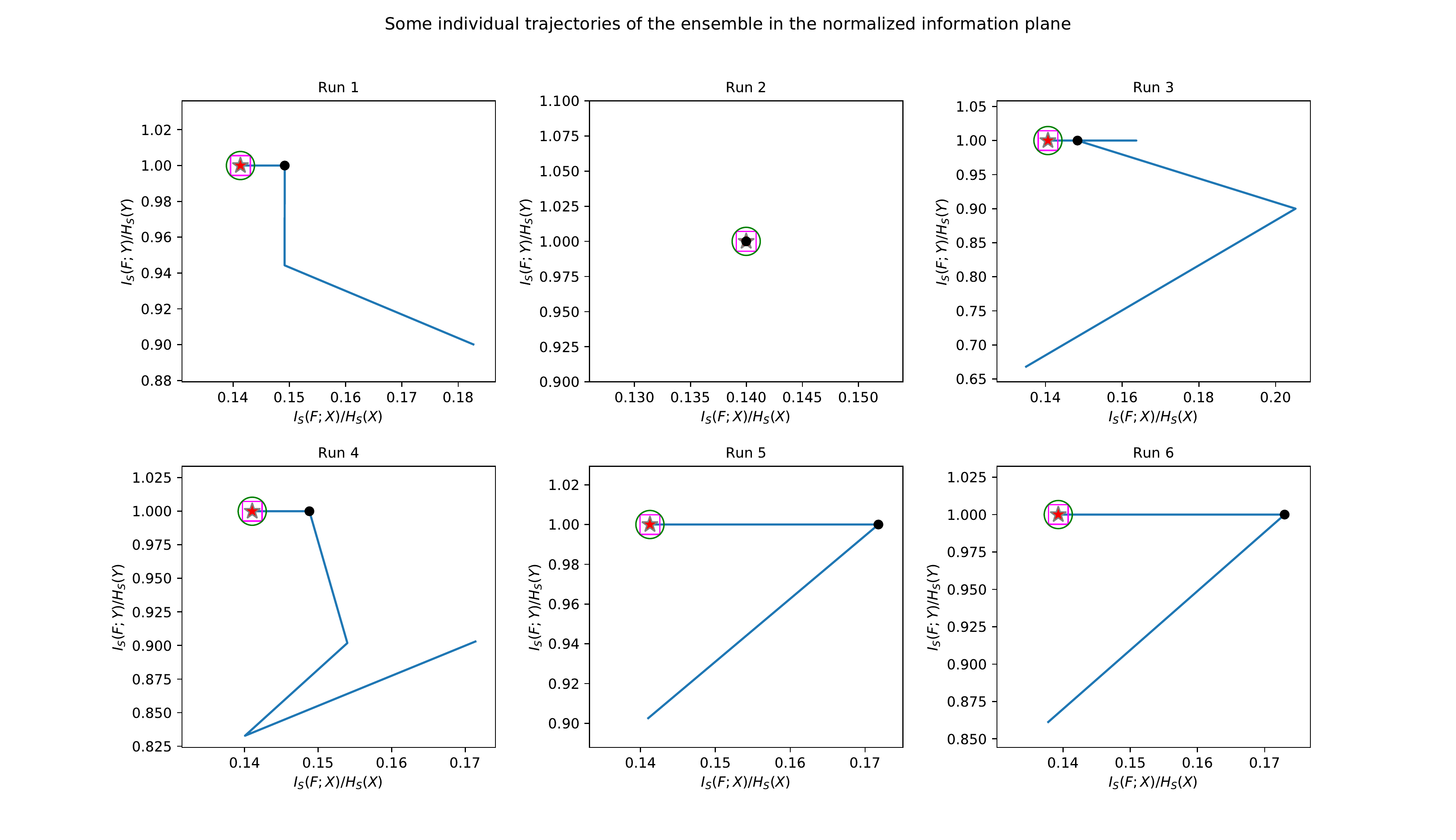}}
\subfigure{\includegraphics[width=0.49\textwidth]{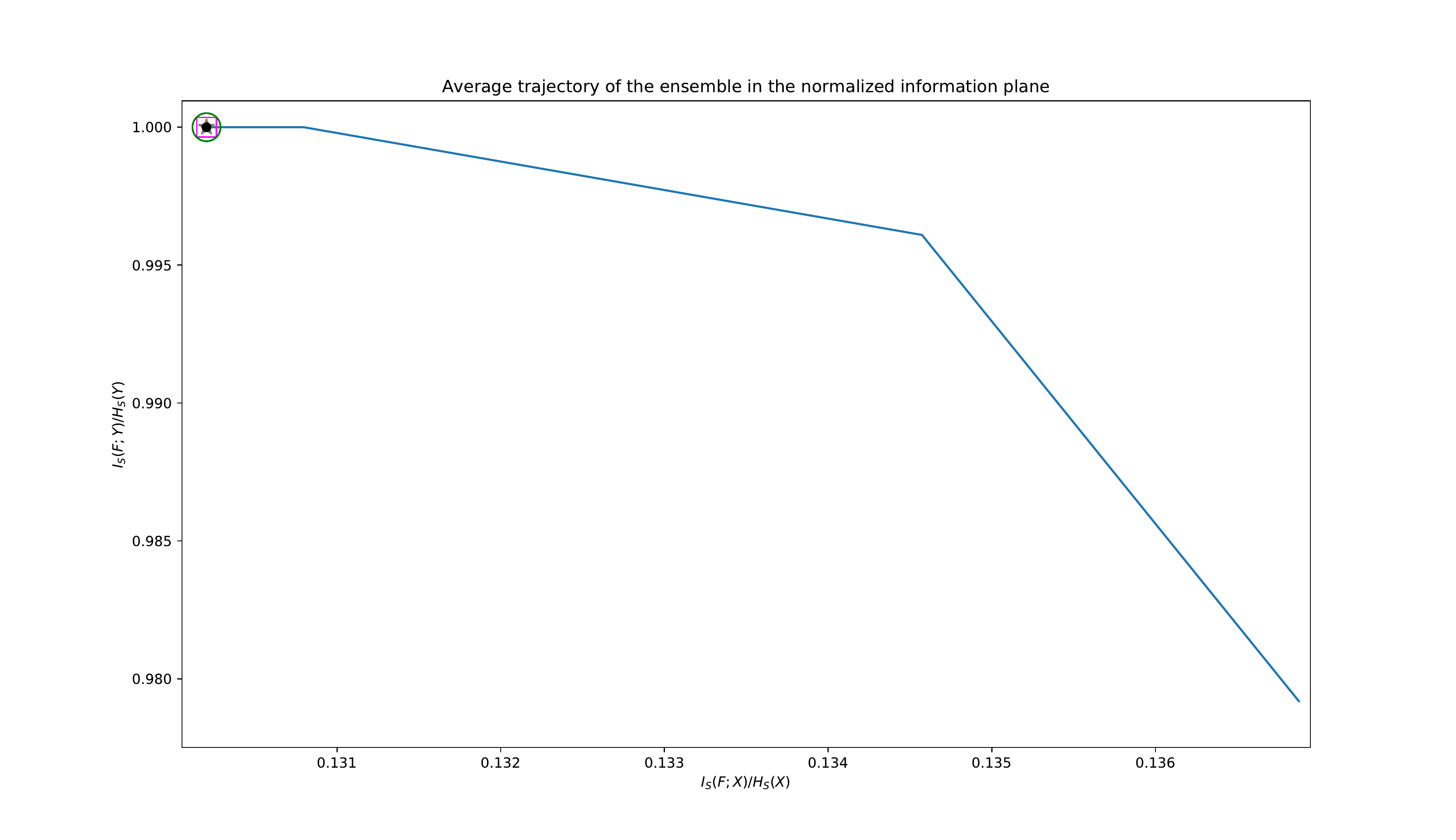}}
\subfigure{\includegraphics[width=0.49\textwidth]{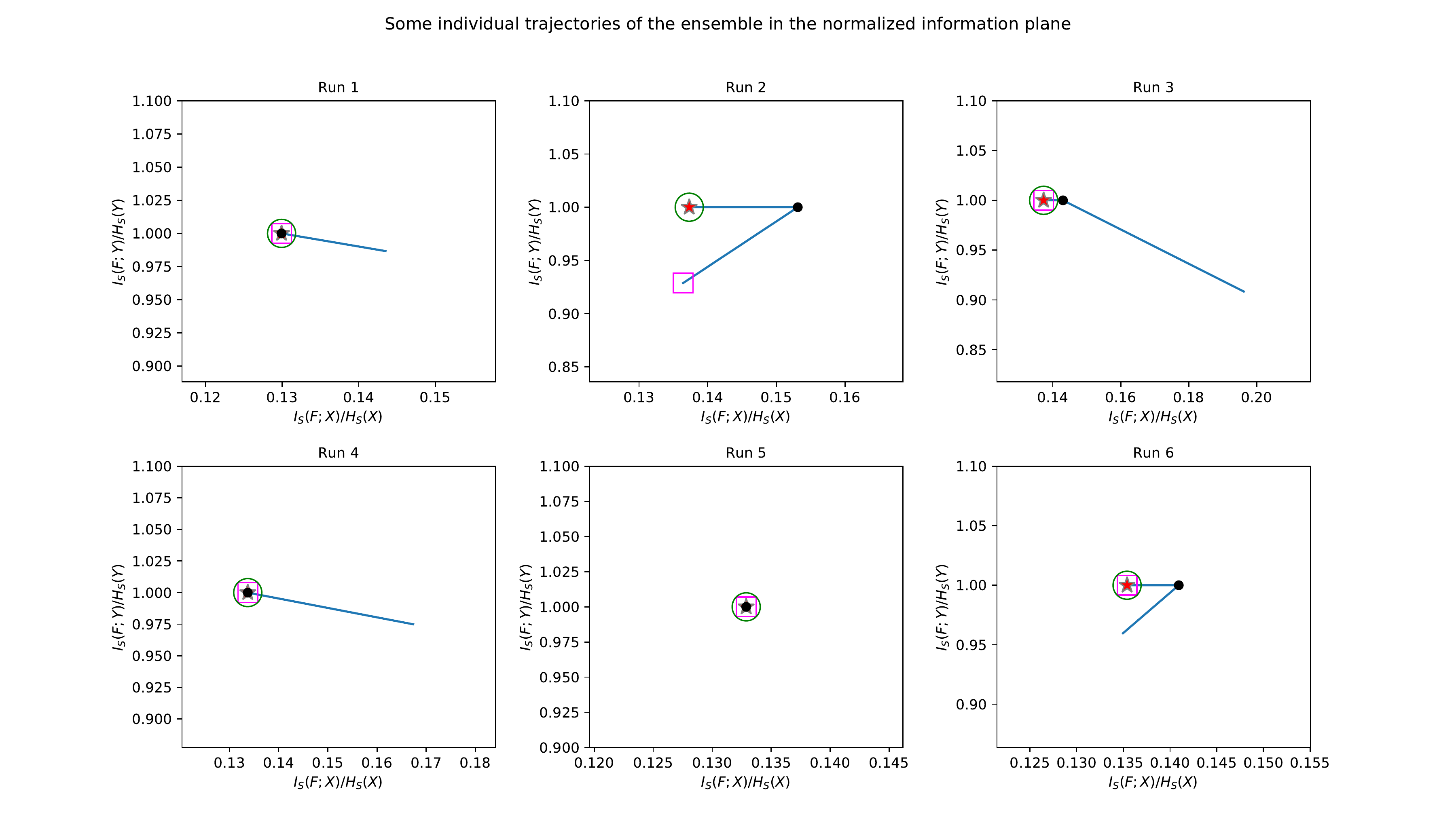}}
\subfigure{\includegraphics[width=0.49\textwidth]{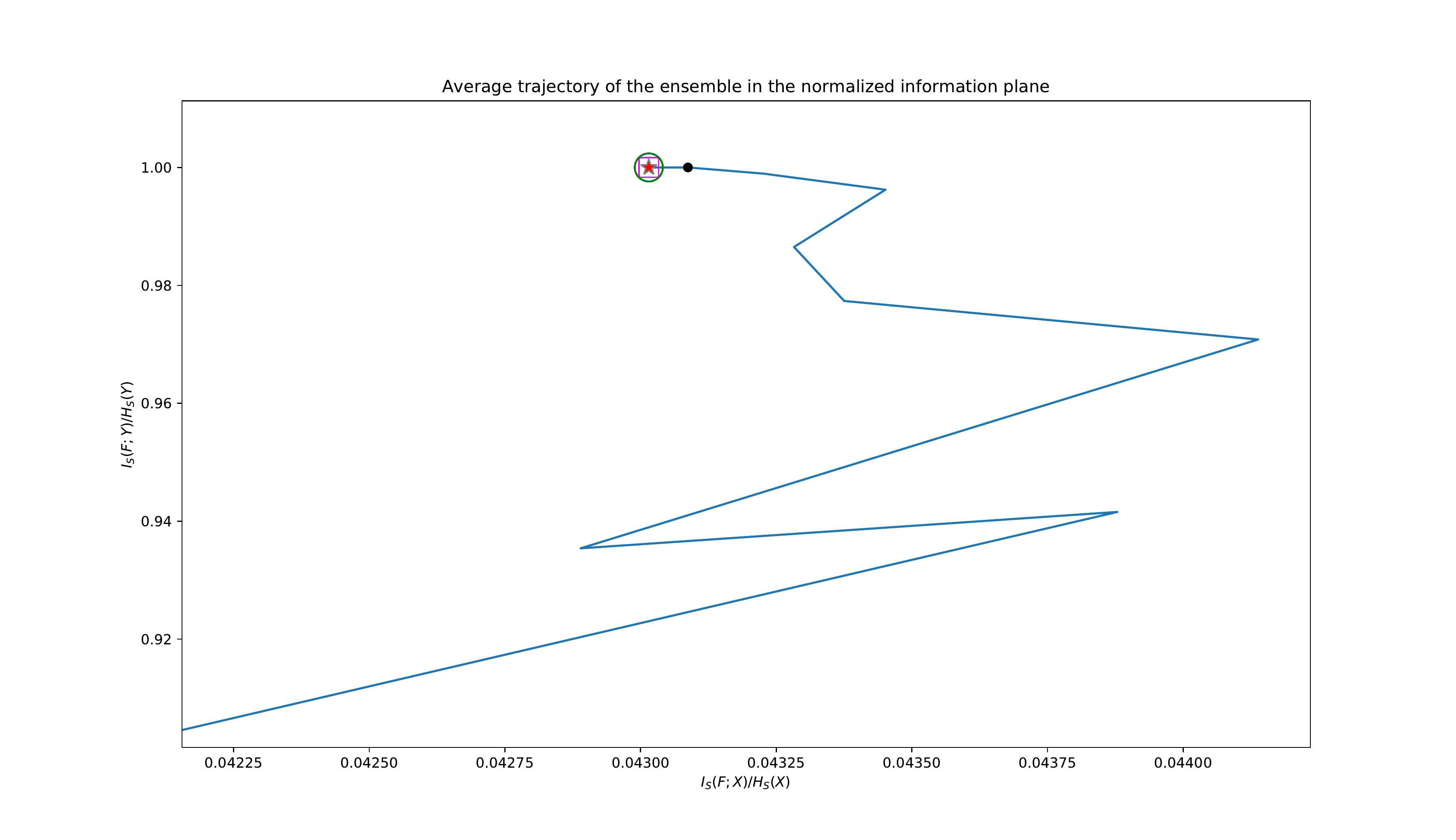}}
\subfigure{\includegraphics[width=0.49\textwidth]{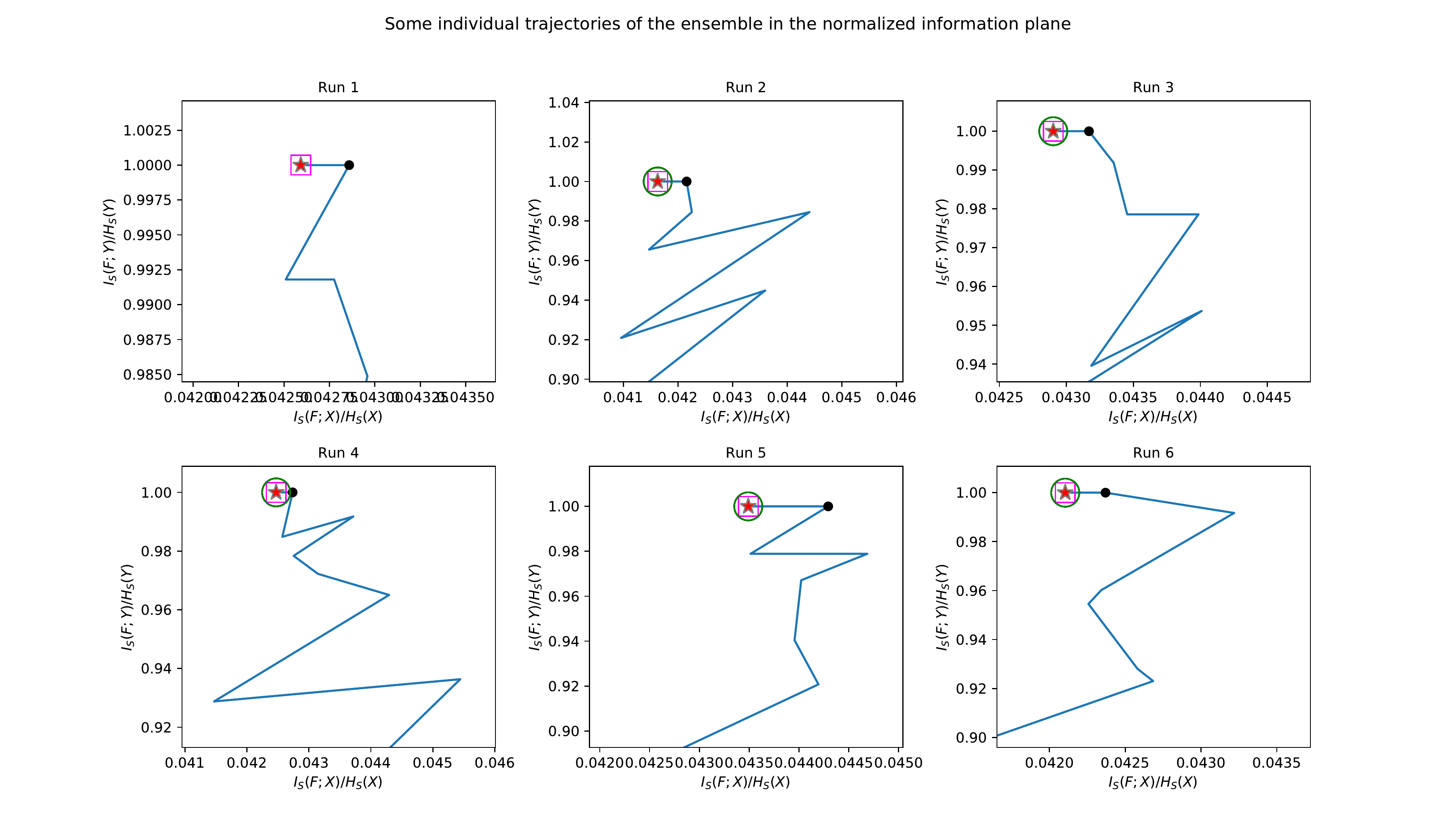}}
\subfigure{\includegraphics[width=0.49\textwidth]{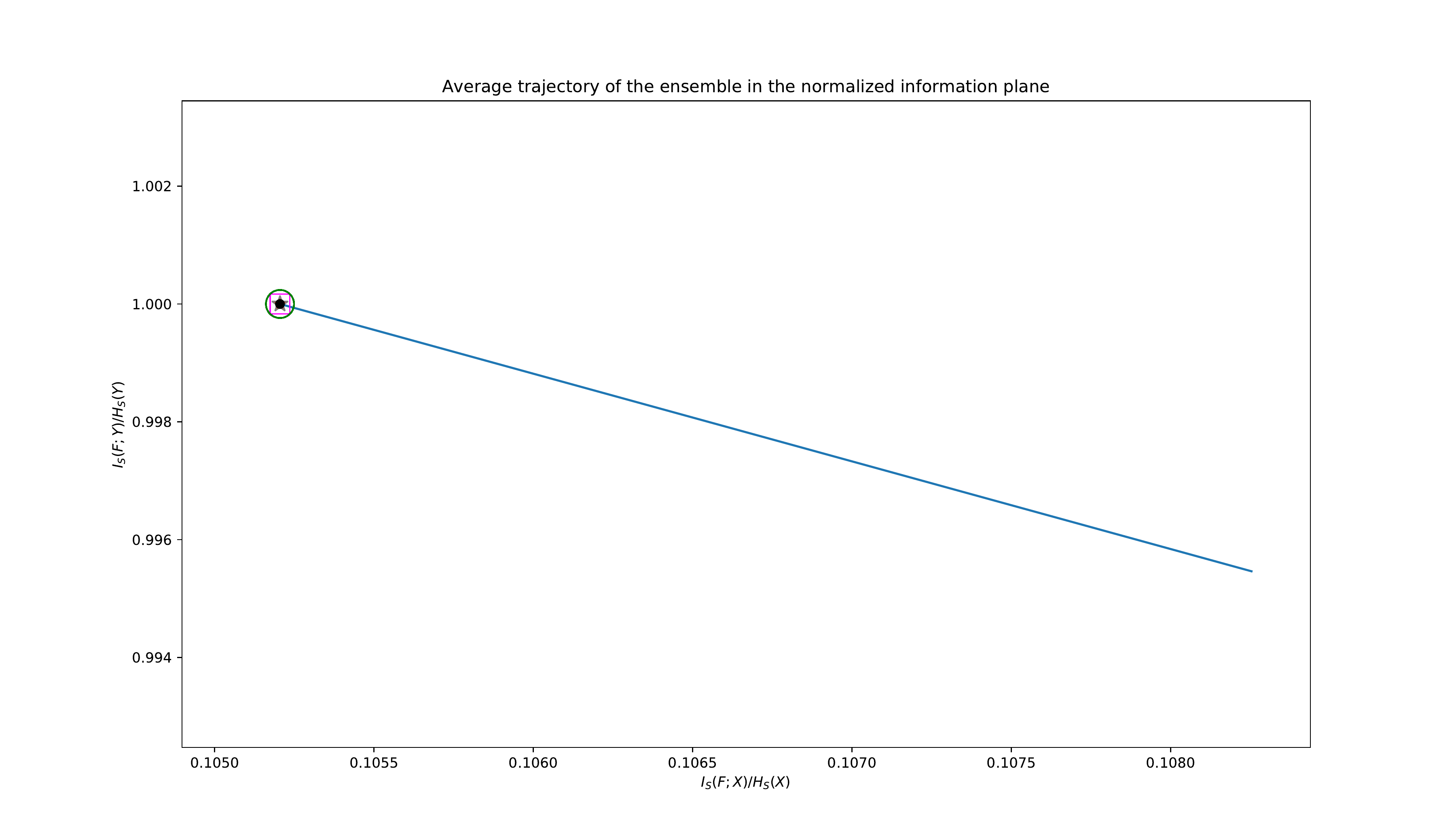}}
\subfigure{\includegraphics[width=0.49\textwidth]{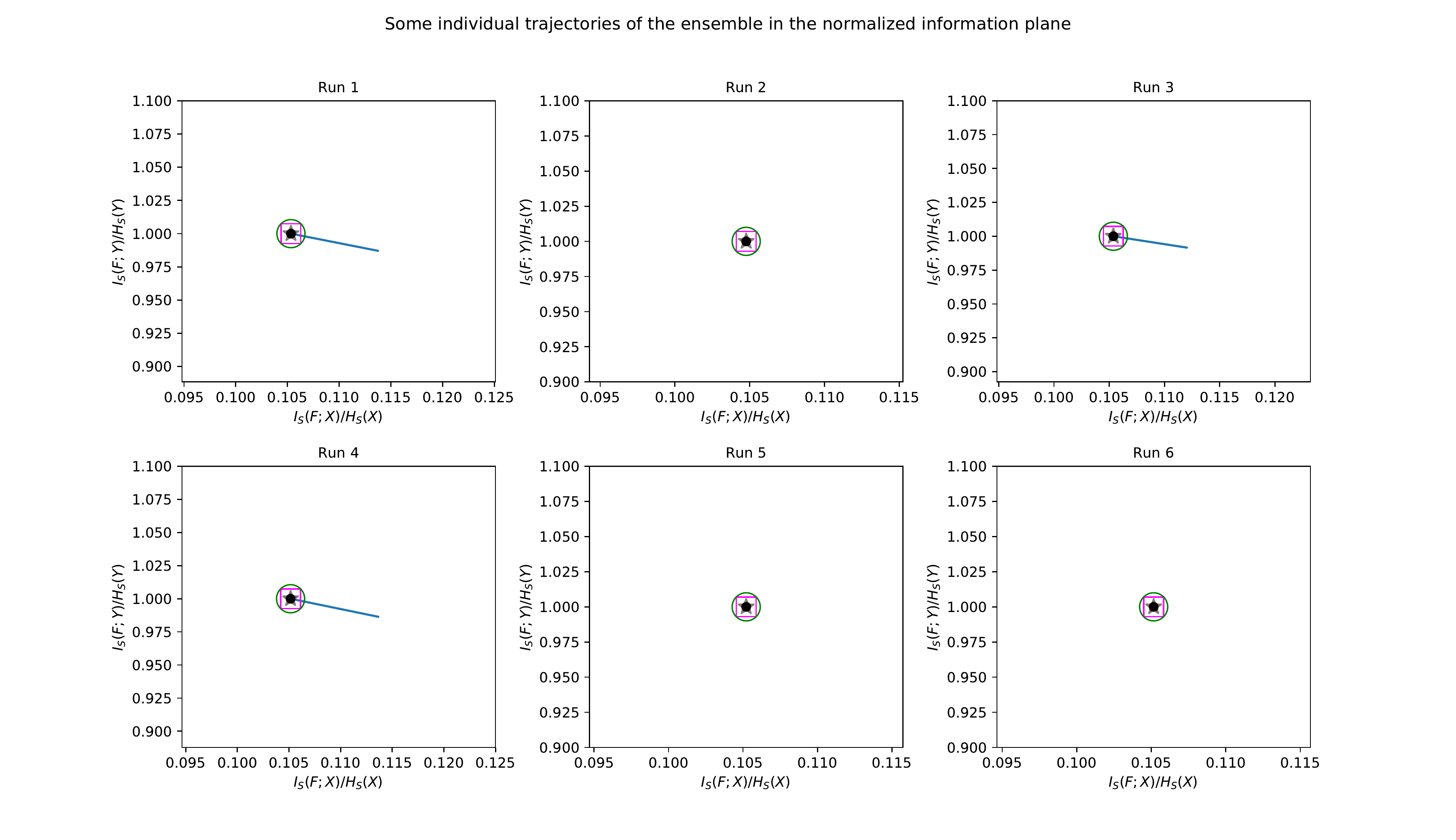}} 
\caption{Trajectory of the boosting ensemble on the normalized information plane as the rounds of boosting progress. We highlight the point on which the training error is first minimized (full black circle), the point on which the test error is first minimized (magenta square), the point on which the margins are first maximized (hollow green circle) and the lossless maximal compression point (red star). From TOP to BOTTOM (dataset): \emph{heart}, \emph{congress}, \emph{landsat}, \emph{mushroom}; LEFT: Average trajectory across 100 runs; RIGHT: Some random individual trajectories.}\label{fig:trajectories_real3}
%\vspace{-0.75em}
\end{figure}

\begin{figure}%[H]
\centering
\subfigure{\includegraphics[width=0.49\textwidth]{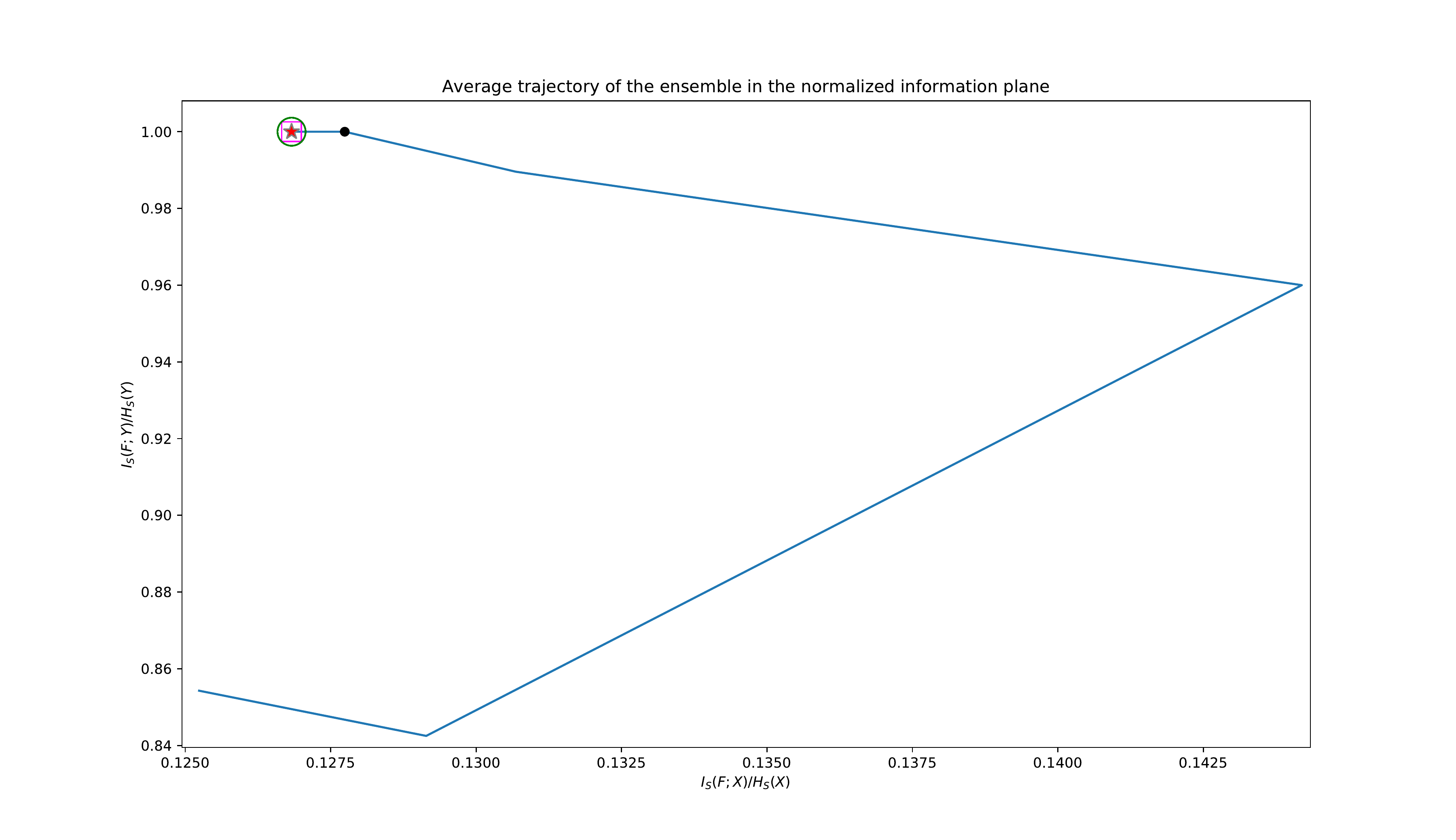}}
\subfigure{\includegraphics[width=0.49\textwidth]{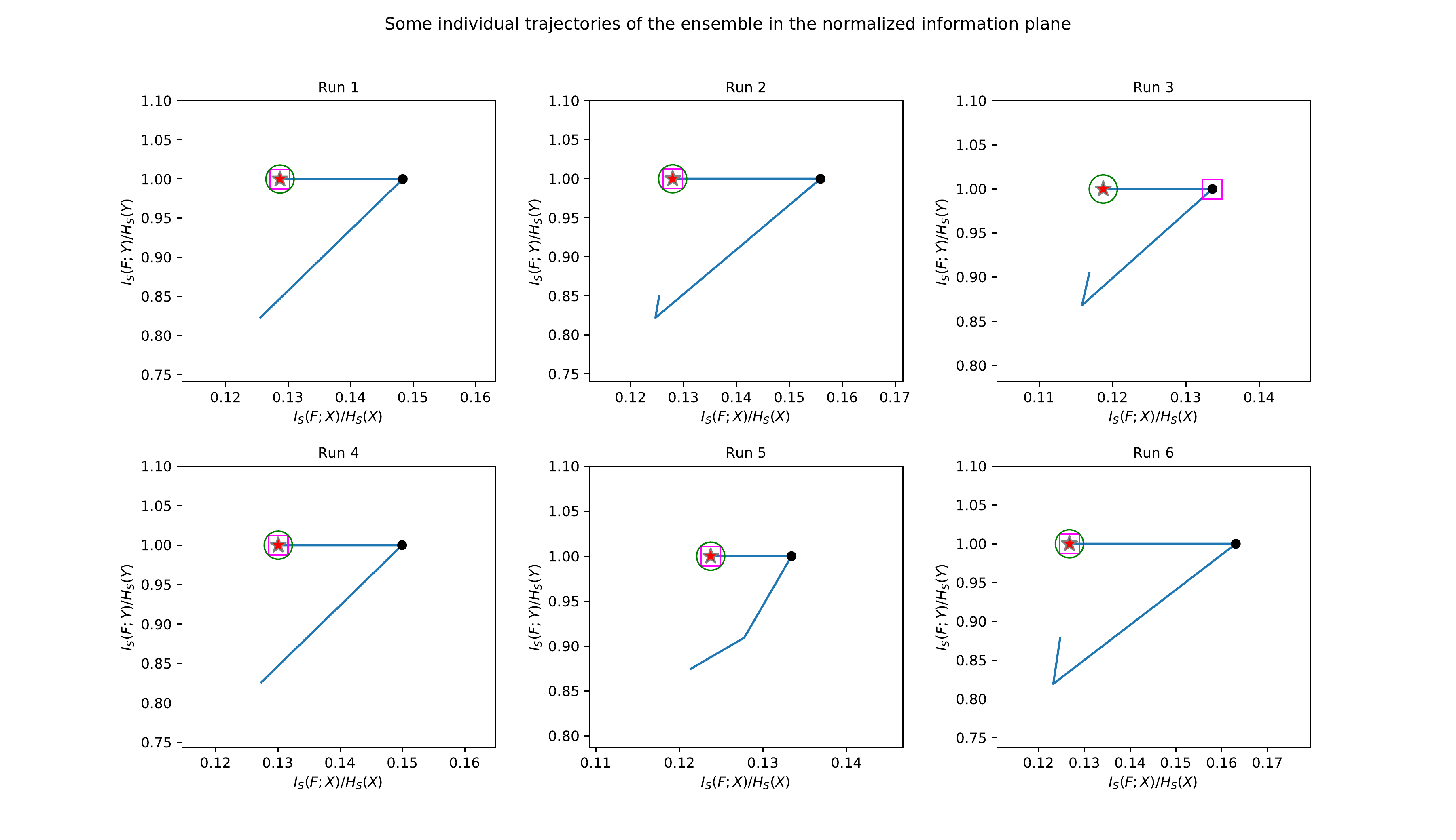}}
\subfigure{\includegraphics[width=0.49\textwidth]{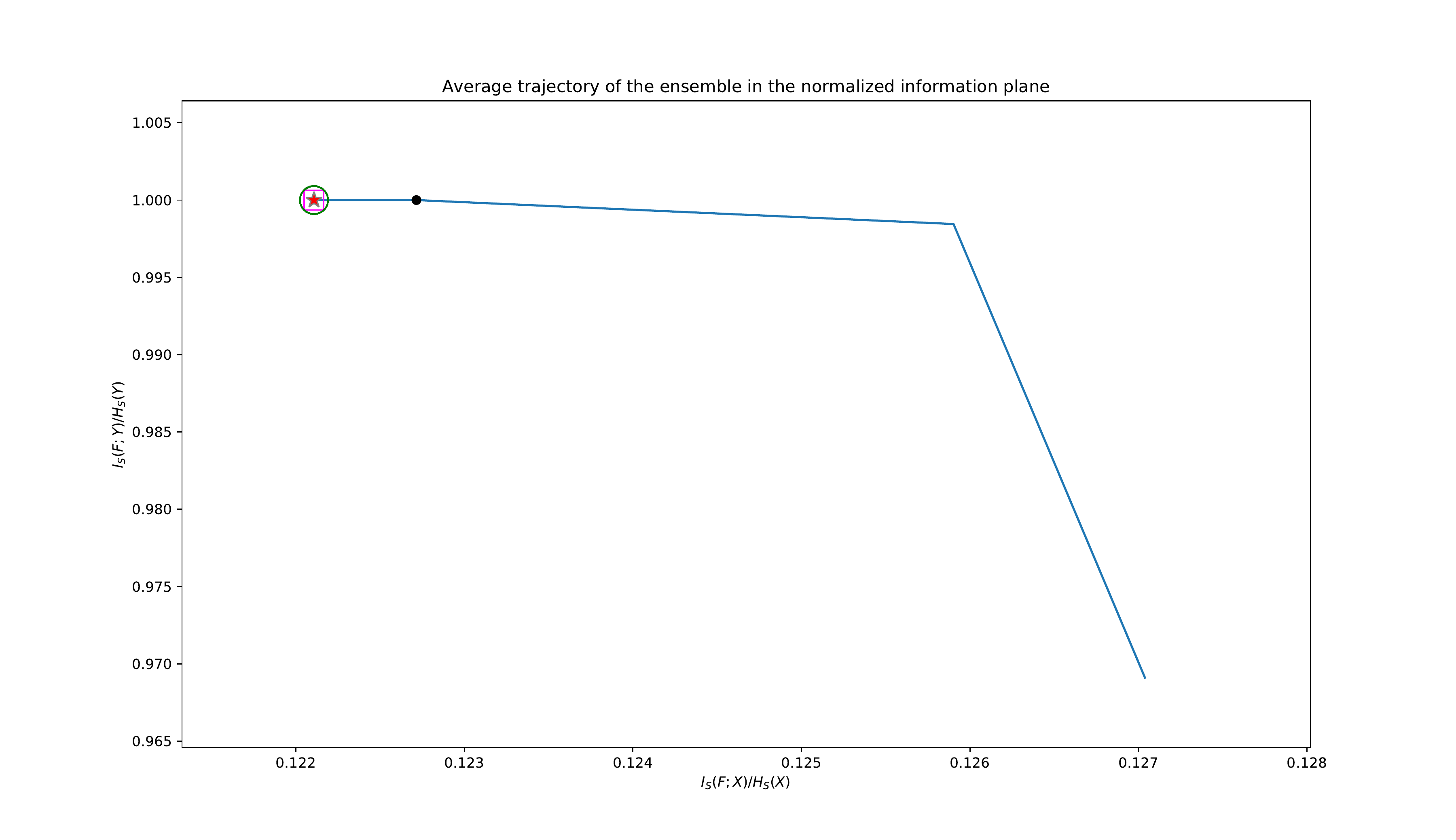}}
\subfigure{\includegraphics[width=0.49\textwidth]{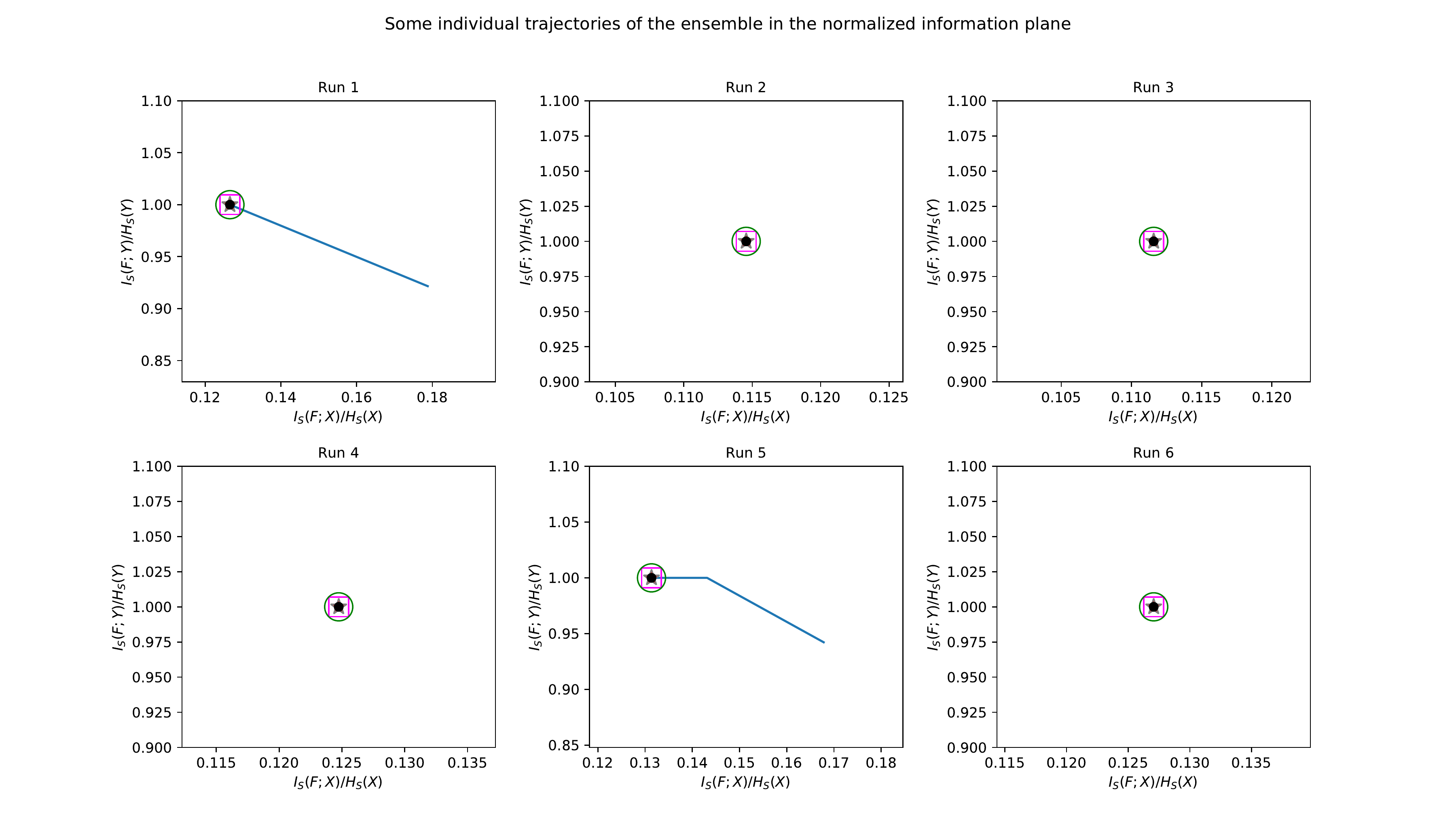}}
\caption{Trajectory of the boosting ensemble on the normalized information plane as the rounds of boosting progress. We highlight the point on which the training error is first minimized (full black circle), the point on which the test error is first minimized (magenta square), the point on which the margins are first maximized (hollow green circle) and the lossless maximal compression point (red star). From TOP to BOTTOM (dataset): \emph{ionosphere}, \emph{parkinsons}; LEFT: Average trajectory across 100 runs; RIGHT: Some random individual trajectories.}\label{fig:trajectories_real4}
%\vspace{-0.75em}
\end{figure}

Let us now summarize our observations from Figure \ref{fig:trajectories_real} \& the figures of Section C of the Supplementary Material:\\
\textbf{Boosting leads to lossless maximal compression:}
In all datasets, the boosting ensemble traces a trajectory on the information plane that leads to the LMC point and once it reaches it in never escapes.\\
\textbf{Lossless maximal compression coincides with margin maximization:} In all datasets the image on the information plane of the models that minimize the margin coincides with the LMC point.\\%. In other words, margin minimizers occupy the same point on the information plane: the LMC point.
\textbf{Lossless maximal compression coincides with maximal generalization:} The point of the ensemble's trajectory corresponding to the minimal test error coincides --on average-- with the LMC point on the information plane (and so does the margin maximization point). In other words, LMCs correspond to the models exhibiting --on average-- the best generalization behaviour.\\
\textbf{Average trajectory shape:} After the training error is minimized, the test error can be further decreased by training for more rounds. This is a known result in boosting, explained via margin theory. Here we give an information-theoretic interpretation. Training until training error minimization, amounts to achieving losslessness. Subsequent rounds result in travelling along the line of maximal $\mutualInformationSample(F;X)$ on the information plane, towards the LMC point. This compresses the model $f$ (relieves it of remaining information from $X$ irrelevant for predicting $Y$), decreasing its effective complexity\footnote{Holds for average trajectories. Single runs include steps that both increase $\mutualInformationSample(F;X)$ \& decrease $\mutualInformationSample(F;Y)$.}.\\ 
% The test error typically reaches its minimal value several rounds after the training error is minimized.
\textbf{Training in boosting consists of 2 (typically distinct) phases:} The results suggest the presence of 2 distinct phases during training under gradient boosting. A similar behaviour was observed in~\cite{shwartz2017opening} for the trajectories of the representations learned by DNNs. Following the terminology of~\cite{shwartz2017opening}, these are the \emph{empirical risk minimization (ERM) phase}, when $\mutualInformationSample(F;Y)$ increases (the model better fits the training data) but typically so does $\mutualInformationSample(F;X)$ (the model uses more information from $X$) and the \emph{compression phase}, when $\mutualInformationSample(F;X)$ decreases (the model uses increasingly less information from $X$, reducing its effective complexity), without decreasing $\mutualInformationSample(F;Y)$. We can view the ERM phase as decreasing the bias of the model while not decreasing its variance and the compression phase as reducing variance while not increasing bias. The ERM phase is usually much shorter than the compression phase, as is the case with DNNs~\cite{shwartz2017opening}. Although typically we do observe these 2 phases as distinct in the average trajectories, they need not be, as was also observed in subsequent studies in DNNs~\cite{xu2018training}. Trajectories of individual runs, are not as smooth as the average trend; we can even observe steps that increase both bias \& variance. However, the 2 phases still appear to be distinct: once losslessness is achieved (ERM phase terminates), it is maintained and pure compression begins.\\ 
\textbf{Early stopping does not improve generalization in gradient boosting:} As long as losslessness can be achieved, additional boosting rounds do not hurt generalization. Once the model reaches the LMC point on the information plane, it never escapes it. Subsequent iterations neither increase the training nor the test error. This suggests that early stopping with boosting is unnecessary for improving generalization and agrees with recent observations~\cite{wyner2015explaining}. General margin losses minimized via stochastic gradient descent (SGD) also exhibit similar behaviour~\cite{soudry2017implicit}.\\% (assuming the base learner has enough capacity, see below) Future work?
\textbf{Consistency across datasets, hyperparameter \& discretization settings:}
The aforementioned observations hold across different datasets and hyperparameter settings. Section C of the Supplementary Material contains more results supporting this claim. They also hold if we change the number of bins used to discretize the features (provided the dataset remains noiseless) or the scores (provided they are$\geq 2$).\\
%The loss (exponential or binomial deviance), the base learner's capacity\footnote{Caveat: assuming the base learner has \emph{enough capacity} to guarantee that we can generate lossless models. In our experiments a max. tree depth of 6 was sufficient to do so given a max. number of 100 rounds of boosting on all datasets examined. We leave a detailed exploration of the interplay between the richness of the model family and lossless maximal compression as future work.} (max. tree depth) the use of subsampling or shrinkage, does not change qualitatively the trajectory. Neither does the number of bins used to discretize the features (provided the dataset remains noiseless) or the scores (provided they are $\geq 2$). %Finally, using variations of the score for measuring $F$ (e.g. \emph{scikit-learn}'s \emph{decision\_function()} --the one used in the experiments shown-- or a rescaled version of the \emph{predict\_proba()} scoring functions) does not appear to have an impact on the general trajectory of boosting on the information plane.
%\textbf{Gradient Boosting:\\}
%Similarly, this desirable trajectory of boosting in the information plane is preserved for any level of complexity of the base learner --provided it is sufficiently high. Future work what is 'sufficiently high' to guarantee losslessness?
\textbf{Margin maximization as a built-in regularization mechanism:} Additional regularization techniques like subsampling or shrinkage are not the main reason why boosting regularizes. Their contribution is small compared to the algorithms' built-in regularization mechanism: margin maximization, which as we saw amounts to lossless maximal compression of the training dataset. This is another similarity shared with DNNs trained with SGD that achieve good generalization by tracing a similar trajectory on the information plane~\cite{shwartz2017opening}, and additional regularization control (e.g. dropout or batch normalization) is beneficial, but not the main contributor to their good generalization~\cite{zhang2016understanding, shwartz2017opening, kawaguchi2017generalization}.

\section{Discussion}
We characterized from an information theoretic perspective, models trained on a given training set w.r.t. the information they capture from it. Under this light, we identified an ideal model trained on a given dataset as its lossless maximal compressor (LMC): one capturing all the information from the features relevant for predicting the target and no more. We then established that an LMC is --in the case of classification-- equivalent to a margin maximizer of the dataset (provided it is noiseless, i.e. consistently labelled). The existence of margin-based bounds on the generalization error implies that margin maximization, hence lossless maximal compression, is beneficial to generalization.

Our experiments on gradient boosting, demonstrate that indeed, margin maximization amounts to lossless maximal compression on noiseless data. The evolution of the model constructed by boosting, traces a trajectory on the information plane that leads to the point of lossless maximal compression which also coincides with the point of margin maximization and the point on average exhibiting the best generalization. In agreement with recent studies on boosting, we observe that early stopping is unnecessary for improving generalization~\cite{wyner2015explaining} and identify interesting similarities between how training progresses in DNNs and in gradient boosting in terms of the trajectory they trace on the information plane~\cite{shwartz2017opening}. All observations persist across a wide range of datasets and hyperparameter configurations.

This work gives an information-theoretic interpretation of margin maximization and provides us with a principled way to define model complexity for the purposes of generalization, thus shedding more light on the success of methods like gradient boosting. It also opens various directions for future work. For instance, exploring how these concepts can be applied in model selection or to inform learning algorithm design to more efficiently traverse the information plane to reach the LMC point. It would also be of interest to identify the analogue of the LMC in learning tasks other than classification, like ranking or regression.

\paragraph{\textbf{\bf{Acknowledgements}}} This project was partially supported by the EPSRC LAMBDA [EP/N035127/1] \& Anyscale Apps [EP/L000725/1] project grants and the EU Horizon 2020 research \& innovation programme [grant No 758892, ExoAI]. NN acknowledges the support of the EPSRC Doctoral Prize Fellowship at the University of Manchester and the NVIDIA Corporation's GPU grant. The authors thank Konstantinos Sechidis, Konstantinos Papangelou \& Ingo Waldmann for their useful comments and suggestions.

\bibliographystyle{spmpsci}
\bibliography{bibliography_MMLMC}

\newpage
\section{Supplementary Material}
\subsection*{A. Proofs}

In this section we shall prove Lemma \ref{noiselessIFFZeroRiskLemma}, Lemma \ref{losslessIFFARiskMinimiserUpToInvertibleTransformationLemma} and Theorem \ref{thm:margin_max_lmc}. Rather than proving these results directly we shall instead prove generalisations to an arbitrary finitely supported probability measure $\probMeasure$ (Lemma \ref{losslessIFFARiskMinimiserUpToInvertibleTransformationLemmaP}, Lemma \ref{noiselessIFFZeroRiskLemmaP} and Theorem \ref{thm:margin_max_lmcP}). The sample based (i.e. dataset based) results used in the main paper correspond to the special case in which the probability measure $\probMeasure$ is the empirical measure $\empiricalMeasure$.

\begin{definition}
\label{def:noiselessnessP}
A finitely supported probability distribution $\probMeasure$ is noiseless if and only if $\entropyProbMeasure(Y|X)=0$. 
\end{definition}

Definition \ref{def:noiselessnessP} generalises Definition \ref{def:noiselessness} which corresponds to the special case in which $\probMeasure$ is the empirical measure $\empiricalMeasure$. The proofs of the following results require the following elementary lemma.

\begin{lemma}\label{entropyPhiPropertiesLemma} Let the function  $\entropyPhi:[0,1]\rightarrow \R$ defined by $\entropyPhi(0)=0$ and $\entropyPhi(z) = z\cdot \log(1/z)$ for $z \in (0,1]$. Then we have $\entropyPhi(z) \geq 0$ for all $z \in [0,1]$ with equality if and only if $z \in \{0,1\}$.
\end{lemma}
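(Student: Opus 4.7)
The plan is to check the three regimes $z=0$, $z=1$, and $z\in(0,1)$ separately, since the function is defined piecewise at $0$ and the boundary behavior at $1$ is where the logarithm vanishes.

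First I would handle the endpoints directly. At $z=0$, $\entropyPhi(0)=0$ by definition. At $z=1$, $\entropyPhi(1) = 1\cdot \log(1/1) = 1\cdot \log(1) = 0$. This establishes that $\entropyPhi(z)=0$ on $\{0,1\}$, giving one direction of the ``if and only if'' and confirming the inequality at the endpoints.

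Next I would treat the open interval $z\in(0,1)$. Here $z>0$ strictly, and $1/z > 1$, so $\log(1/z) > 0$ (for either $\log_2$ or natural log, since both are strictly increasing and vanish at $1$). The product of two strictly positive numbers is strictly positive, hence $\entropyPhi(z) = z\log(1/z) > 0$ on $(0,1)$. Combined with the endpoint computations, this yields $\entropyPhi(z)\geq 0$ throughout $[0,1]$ with equality exactly on $\{0,1\}$, completing both directions.

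The only mild subtlety is the convention at $z=0$, where $z\log(1/z)$ is formally $0\cdot\infty$; this is why the lemma defines $\entropyPhi(0)=0$ explicitly, consistent with the limit $\lim_{z\to 0^+} z\log(1/z)=0$. There is no real obstacle here — the lemma is a one-line observation that the function $-z\log z$ (the summand appearing in the Shannon entropy) is nonnegative and strictly positive on the interior of $[0,1]$ — and the proof is essentially just these two direct sign checks.
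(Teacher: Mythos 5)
Your proof is correct. The paper states this lemma without proof (it is introduced only as an ``elementary lemma''), and your argument --- checking $\entropyPhi(0)=\entropyPhi(1)=0$ directly and observing that on $(0,1)$ both factors $z$ and $\log(1/z)$ are strictly positive --- is exactly the intended one-line justification, including the correct remark about the convention $\entropyPhi(0)=0$ resolving the $0\cdot\infty$ form.
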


We shall now prove Lemma \ref{noiselessIFFZeroRiskLemmaP} which generalises Lemma \ref{noiselessIFFZeroRiskLemma}.

\begin{lemma}\label{noiselessIFFZeroRiskLemmaP}
A finitely supported probability distribution $\probMeasure$ is noiseless if and only if $\risk_{\probMeasure}(\riskMinimiserProb)  = 0$. 
\end{lemma}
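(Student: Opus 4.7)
The plan is to prove both directions of the equivalence directly from the definition $H_P(Y|X) = \sum_{x \in \XSupp} P(X=x) \sum_{y \in \{-1,+1\}} \phi(P(Y=y \mid X=x))$, where $\phi$ is the function from Lemma \ref{entropyPhiPropertiesLemma}. Because $P$ is finitely supported, all sums are finite and each summand is nonnegative, which will let me translate zero entropy into a pointwise statement about the conditional $P(Y=\cdot \mid X=x)$.

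For the forward direction, I would assume $P$ is noiseless so $H_P(Y|X) = 0$. Since every term $P(X=x)\phi(P(Y=y \mid X=x))$ is nonnegative, each must vanish, and by Lemma \ref{entropyPhiPropertiesLemma} this forces $P(Y=y \mid X=x) \in \{0,1\}$ for every $x \in \XSupp$ and every $y \in \{-1,+1\}$. Hence for each $x \in \XSupp$ there is a unique label $y^{*}(x) \in \{-1,+1\}$ with $P(Y=y^{*}(x) \mid X=x)=1$. Setting $f(x) = y^{*}(x) \in [-1,1]$ gives $\sign(f(X)) = Y$ with $P$-probability one, so $\risk_P(f) = 0$, and therefore $\risk_P(\riskMinimiserProb) = 0$ by definition of the risk minimiser.

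For the reverse direction, suppose $\risk_P(\riskMinimiserProb) = 0$. Then $P[\sign(\riskMinimiserProb(X)) \neq Y] = 0$, so for every $(x,y)$ with $P(X=x, Y=y) > 0$ we must have $\sign(\riskMinimiserProb(x)) = y$. Since $\sign(\riskMinimiserProb(x))$ is a single value in $\{-1,0,1\}$, this forces at most one label $y \in \{-1,+1\}$ to be compatible with each $x \in \XSupp$; in particular $P(Y=\cdot \mid X=x)$ must place all its mass on that unique label. Thus $P(Y=y \mid X=x) \in \{0,1\}$ for every $x \in \XSupp$ and every $y$, so by Lemma \ref{entropyPhiPropertiesLemma} each term in the expansion of $H_P(Y|X)$ is zero, giving $H_P(Y|X)=0$, i.e. $P$ is noiseless.

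The only slightly delicate point is the handling of the $\sign(0)=0$ case in the reverse direction: an output of $0$ always counts as an error, so the condition $\risk_P(\riskMinimiserProb)=0$ already excludes $\riskMinimiserProb(x)=0$ on the support, and the argument above goes through without any extra case analysis. The forward direction is the easier half; the reverse direction is where one has to be careful to pin down the codomain of $\sign$ and use single-valuedness to rule out any probability mass on the alternative label.
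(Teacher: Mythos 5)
Your proof is correct and follows essentially the same route as the paper's: both expand $\entropyProbMeasure(Y|X)$ as a nonnegative sum of $\entropyPhi$ terms, use Lemma \ref{entropyPhiPropertiesLemma} to equate zero conditional entropy with $\probMeasure(Y=y|X=x)\in\{0,1\}$, and then translate that pointwise condition into zero risk and back. Your extra remark about $\sign(0)=0$ is a small point of care the paper glosses over, but it does not change the argument.
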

\begin{proof} We can write out the conditional entropy $\entropyProbMeasure(Y|X)$ in terms of $\entropyPhi$ as follows,
\begin{align*}
\entropyProbMeasure(Y|X) &= \sum_{x\in \XSupp } \probMeasure(X=x)\left(- \sum_{y \in \Y}\probMeasure(Y=y|X=x)\log\left(\probMeasure(Y=y|X=x) \right)\right)\\
&= \sum_{x\in \XSupp } \probMeasure(X=x)\left( \sum_{y \in \Y}\entropyPhi\left(\probMeasure(Y=y|X=x) \right)\right).
\end{align*}
Since $\entropyPhi$ is non-negative it follows that $\entropyProbMeasure(Y|X)= 0$ if and only if for each $x \in \XSupp$ we have $\entropyPhi\left(\probMeasure(Y=y|X=x) \right)=0$ which is the case if and only if $\probMeasure(Y=y|X=x) \in \{0,1\}$.

Now suppose  that $\entropyProbMeasure(Y|X)= 0$ so for each $x \in \XSupp$, $y \in \Y$, $\probMeasure(Y=y|X=x) \in \{0,1\}$. Then we can define $\riskMinimiserProb:\X \rightarrow [-1,1]$ so that $\probMeasure\left(Y=\riskMinimiserProb(x) |X=x\right)=1$, so $\probMeasure\left( \sign(\riskMinimiserProb(x))\neq Y|X=x\right)=0$. Thus for each $x \in \XSupp$ if 
\begin{align*}
\risk_{\probMeasure}\left(\riskMinimiserProb\right) &= \probMeasure\left[ \sign(\riskMinimiserProb(X))\neq Y\right] = \sum_{x \in \XSupp} \probMeasure\left(X=x\right) \cdot \probMeasure\left( \sign(\riskMinimiserProb(x))\neq Y|X=x\right)=0.
\end{align*}
One the other hand, if $\risk_{\probMeasure}\left(\riskMinimiserProb\right)= 0$ then we must have $\probMeasure\left(Y=y|X=x\right)=1$ if $y = \sign\left(\riskMinimiserProb(x)\right)$ and $\probMeasure\left(Y=y|X=x\right)=0$ otherwise. Thus, $\probMeasure(Y=y|X=x) \in \{0,1\}$ for each $x \in \XSupp$, $y \in \Y$ and so $\entropyProbMeasure(Y|X)= 0$.
\end{proof}

Definition \ref{def:losslessnessP} generalises Definition \ref{def:losslessness}.

\begin{definition}\label{def:losslessnessP}
A model $f$ is lossless with respect to $\probMeasure$ if and only if  $\mutualInformationProbMeasure(F;Y) = \mutualInformationProbMeasure(Y;X)$.
\end{definition}

Lemma \ref{losslessIFFARiskMinimiserUpToInvertibleTransformationLemmaP} generalises Lemma \ref{losslessIFFARiskMinimiserUpToInvertibleTransformationLemma}.

\begin{lemma}\label{losslessIFFARiskMinimiserUpToInvertibleTransformationLemmaP}
Suppose $\probMeasure$ is noiseless. A model $f$ is lossless with respect to $\probMeasure$ if and only if there exists an invertible transformation $g:[-1,1]\rightarrow \R$ such  $\risk_{\probMeasure}(g \circ f) =\risk_{\probMeasure}(\riskMinimiserProb)$. 
\end{lemma}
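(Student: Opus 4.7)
The plan is to reduce the statement to a clean deterministic claim. Since $\probMeasure$ is noiseless, Lemma \ref{noiselessIFFZeroRiskLemmaP} gives $\risk_{\probMeasure}(\riskMinimiserProb)=0$, and from the proof of that lemma we also know $\entropyProbMeasure(Y|X)=0$, so $\mutualInformationProbMeasure(X;Y)=\entropyProbMeasure(Y)$. The definition of losslessness $\mutualInformationProbMeasure(F;Y)=\mutualInformationProbMeasure(X;Y)$ is therefore equivalent to $\mutualInformationProbMeasure(F;Y)=\entropyProbMeasure(Y)$, i.e.\ $\entropyProbMeasure(Y|F)=0$, which by the same argument as in Lemma \ref{noiselessIFFZeroRiskLemmaP} means that $Y$ is $\probMeasure$-a.s.\ a deterministic function of $F$. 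Thus losslessness is equivalent to the existence of a function $h:[-1,1]\to\Y$ with $\probMeasure[h(F)=Y]=1$. The lemma then reduces to the claim: such an $h$ exists if and only if there is an invertible $g:[-1,1]\to\R$ with $\sign\circ g\circ f$ a.s.\ equal to $Y$.

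For the reverse implication, suppose such an invertible $g$ exists. Then $\sign\circ g:[-1,1]\to\Y$ serves as the function $h$, so by the equivalence above $f$ is lossless. (Alternatively, since $g$ is invertible, the recalled identity $\mutualInformationProbMeasure(g(F);Y)=\mutualInformationProbMeasure(F;Y)$ together with $\risk_{\probMeasure}(g\circ f)=0$ implying $\entropyProbMeasure(Y|g(F))=0$ gives $\mutualInformationProbMeasure(F;Y)=\entropyProbMeasure(Y)=\mutualInformationProbMeasure(X;Y)$.)

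For the forward implication, I start from the labelling function $h:f(\XSupp)\to\{-1,+1\}$ supplied by losslessness. Because $\probMeasure$ is finitely supported, the image $f(\XSupp)\subset[-1,1]$ is a finite set $\{z_1,\dots,z_k\}$. I now construct $g:[-1,1]\to\R$ as follows. Enumerate the points $z_i$ so that those with $h(z_i)=+1$ are assigned distinct positive real values (say any injective assignment into $(0,\infty)$) and those with $h(z_i)=-1$ are assigned distinct negative real values (injective into $(-\infty,0)$). Extend this finite partial map to an injection $g:[-1,1]\to\R$ by, for instance, defining $g$ piecewise linearly on the complementary intervals, or more simply by composing the finite assignment with any fixed strictly monotone injection of $[-1,1]\setminus\{z_1,\dots,z_k\}$ into $\R$ avoiding the already-used values. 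By construction $g$ is invertible on its image and $\sign(g(z_i))=h(z_i)$ for all $i$, so $\sign(g(f(X)))=h(f(X))=Y$ $\probMeasure$-a.s., giving $\risk_{\probMeasure}(g\circ f)=0=\risk_{\probMeasure}(\riskMinimiserProb)$.

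The main obstacle is the explicit construction of $g$ in the forward direction: one must simultaneously ensure global injectivity on $[-1,1]$ and agreement of $\sign\circ g$ with the possibly non-monotone labelling $h$ on the finite set $f(\XSupp)$. This is unproblematic in principle because $f(\XSupp)$ is finite and $\R$ has plenty of room in both half-lines, but the proof write-up needs to be careful about the extension step so that no two points of $[-1,1]$ are collapsed to the same image. Everything else is a straightforward unpacking of the definitions together with the noiselessness hypothesis.
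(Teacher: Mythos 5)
Your proof is correct and follows essentially the same route as the paper's: both reduce losslessness (under noiselessness) to $\entropyProbMeasure(Y|F)=0$, i.e.\ to $Y$ being a deterministic function of $F$ on the support, read off the labelling from zero risk in one direction, and in the other construct an injective $g$ sending the positively-labelled values of $f$ into $(0,\infty)$ and the negatively-labelled ones into $(-\infty,0)$. The only difference is cosmetic: the paper writes the separating map explicitly as $g(s)=(2\probMeasure(Y=1|f(X)=s)-1)\left(\frac{s+2}{3}\right)$ before extending to a bijection, whereas you describe the same construction abstractly.
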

\begin{proof} The model $f:\X \rightarrow [-1,1]$ is lossless if and only if $\mutualInformationProbMeasure(F;Y) = \mutualInformationProbMeasure(Y;X)$, which is the case if and only if $\entropyProbMeasure(Y|F) = \entropyProbMeasure(Y|X)=0$, where we have used Eq.~(\ref{eq:chain_rule_it}) and the assumption that $\probMeasure$ is noiseless. Moreover, as in the proof of Lemma \ref{noiselessIFFZeroRiskLemma} we have
\begin{align*}
\entropyProbMeasure(Y|X) &= \sum_{s\in f(\XSupp) } \probMeasure(f(X)=s)\left( \sum_{y \in \Y}\entropyPhi\left(\probMeasure(Y=y|f(X)=s) \right)\right).
\end{align*}
Using the fact that $\entropyPhi(z)\geq 0$ on $[0,1]$ with equality only at $\{0,1\}$ we infer that $\entropyProbMeasure(Y|X)=0$ if and only if for each $y \in \Y$, $s \in f(\XSupp)$ we have  $\probMeasure(Y=y|f(X)=s) \in \{0,1\}$.

Now if $\risk_{\probMeasure}(g \circ f) =\risk_{\probMeasure}(\riskMinimiserProb)$ for some invertible transformation $g$ then 
\begin{align*}
\sum_{s\in f(\XSupp) } \probMeasure(f(X)=s) \cdot \probMeasure\left(Y \neq \sign( g(s))|f(X)=s\right)&=\probMeasure\left( \sign( g \circ f(X))\neq Y\right)\\&=\risk_{\probMeasure}(g \circ f) =\risk_{\probMeasure}(\riskMinimiserProb)=0.
\end{align*}
This implies that for each $s \in f(\XSupp)$ for $y= \sign( g(s))$, $\probMeasure(Y=y|f(X)=s)=1$ and for $y\neq \sign( g(s))$, $\probMeasure(Y=y|f(X)=s)=0$, so in general $\probMeasure(Y=y|f(X)=s) \in \{0,1\}$, $\entropyProbMeasure(Y|X)=0$ and $f$ is lossless.

Conversely, if $f$ is lossless then for each $s \in f(\XSupp)$ we can take
\begin{align*}
g(s) = (2\cdot \probMeasure(Y=1|f(X)=s) -1) \left(\frac{s+2}{3} \right),
\end{align*}
and extend $g$ on $[-1,1]\backslash f(\XSupp)$ arbitrarily to form a bijection. Since $f$ is lossless, for each $s \in f(\XSupp)$  and $y \in \Y$ we have $\probMeasure(Y=y|f(X)=s) \in \{0,1\}$. If for some $s \in f(\XSupp)$ we have $ \probMeasure(Y=1|f(X)=s)=1$ then $g(s)>0$ and so $\probMeasure\left(Y\neq \sign(g(s))|f(X)=s\right)=0$. Similarly, for $s \in f(\XSupp)$ with $ \probMeasure(Y=-1|f(X)=s)=1$ we have $g(s)<0$ and so again  $\probMeasure\left(Y\neq \sign(g(s))|f(X)=s\right)=0$. Hence, in general $\risk_{\probMeasure}\left( g\circ f\right) = 0 = \risk_{\probMeasure}\left(\riskMinimiserProb\right)$.

\end{proof}

Definitions \ref{def:max_compP} and \ref{def:losslessnessP} generalise Definitions \ref{def:max_comp} and \ref{def:losslessness}, respectively.

\begin{definition}\label{def:max_compP}
A model $f$ is a maximal compressor of a distribution $\probMeasure$ if and only if $\mutualInformationProbMeasure(F;X) = \mutualInformationProbMeasure(F;Y)$. 
\end{definition}

\begin{definition}\label{def:lossless_max_compP}
A model $f$ is a lossless maximal compressor (LMC) of a training dataset $\trainingSample$ if and only if it is lossless on $\trainingSample$ and a maximal compressor on $\trainingSample$. 
\end{definition}

Proposition \ref{thm:lmcP} generalises Proposition \ref{thm:lmc}.

\begin{proposition}
\label{thm:lmcP}
A model $f$ is an LMC of a finitely supported probability distribution  $\probMeasure$, if and only if it satisfies
\begin{equation*}
\mutualInformationProbMeasure(F;X) = \mutualInformationProbMeasure(F;Y) = \mutualInformationProbMeasure(Y;X).
\end{equation*}
\end{proposition}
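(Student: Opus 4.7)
The plan is a direct definition-chase: since ``LMC of $\probMeasure$'' is defined (Definition \ref{def:lossless_max_compP}) as the conjunction of losslessness with respect to $\probMeasure$ and maximal compression of $\probMeasure$, I would simply unpack the two component properties using Definition \ref{def:losslessnessP} and Definition \ref{def:max_compP} and combine the resulting equalities.

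For the forward implication, suppose $f$ is an LMC of $\probMeasure$. Then by Definition \ref{def:lossless_max_compP}, $f$ is both lossless on $\probMeasure$ and a maximal compressor of $\probMeasure$. Invoking Definition \ref{def:losslessnessP} converts losslessness into $\mutualInformationProbMeasure(F;Y)=\mutualInformationProbMeasure(Y;X)$, and invoking Definition \ref{def:max_compP} converts maximal compression into $\mutualInformationProbMeasure(F;X)=\mutualInformationProbMeasure(F;Y)$. Chaining these two equalities immediately yields $\mutualInformationProbMeasure(F;X)=\mutualInformationProbMeasure(F;Y)=\mutualInformationProbMeasure(Y;X)$. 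For the converse, I would read off the chain in the opposite direction: the equality of the first two terms is exactly the maximal compression condition, while the equality of the last two terms is exactly the losslessness condition, so by Definition \ref{def:lossless_max_compP} $f$ is an LMC of $\probMeasure$.

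I do not anticipate any substantive obstacle. No information-theoretic identity (chain rule, data-processing inequality, non-negativity of $\entropyPhi$) is needed — unlike in Lemmas \ref{noiselessIFFZeroRiskLemmaP} and \ref{losslessIFFARiskMinimiserUpToInvertibleTransformationLemmaP}, the statement is a pure tautological repackaging of three nested definitions. The analogous empirical-measure statement (Proposition \ref{thm:lmc}) was dispatched in a single line in the main text, and the generalization to an arbitrary finitely supported $\probMeasure$ requires no extra machinery because Definitions \ref{def:losslessnessP}, \ref{def:max_compP} and \ref{def:lossless_max_compP} are already stated relative to a generic such $\probMeasure$.
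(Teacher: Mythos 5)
Your proposal is correct and matches the paper's own (one-line) proof, which likewise notes that the statement follows immediately from Definitions \ref{def:losslessnessP} and \ref{def:max_compP} by chaining the two equalities. No further comment is needed.
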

\begin{proof}
Follows straightforwardly from Definition~\ref{def:losslessnessP} \& Definition~\ref{def:max_compP}.\end{proof}

Finally we shall prove Theorem \ref{thm:margin_max_lmcP} which generalises Theorem \ref{thm:margin_max_lmc}.

\begin{theorem}
 \label{thm:margin_max_lmcP} Suppose $\probMeasure$ is noiseless and finitely supported. A model $f$ is an LMC with respect to $\probMeasure$ if and only if there exists some invertible transformation $g:[-1,1]\rightarrow\R$ such that $g \circ f$ is a margin maximizer with respect to $\probMeasure$. 
\end{theorem}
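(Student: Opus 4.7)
My plan is to characterize the LMC condition in purely distributional terms, show that together with noiselessness it forces $F=f(X)$ to be in bijective correspondence with $Y$ on the support, and then translate that bijection into the existence of a margin-maximizing rescaling. The starting point is Proposition~\ref{thm:lmcP} combined with noiselessness: since $\entropyProbMeasure(Y|X)=0$ gives $\mutualInformationProbMeasure(X;Y)=\entropyProbMeasure(Y)$, and since $F$ is a deterministic function of $X$ gives $\mutualInformationProbMeasure(F;X)=\entropyProbMeasure(F)$, the LMC condition is equivalent to
\begin{equation*}
\entropyProbMeasure(F) \;=\; \mutualInformationProbMeasure(F;Y) \;=\; \entropyProbMeasure(Y).
\end{equation*}

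For the forward direction ($\Rightarrow$), I would expand $\mutualInformationProbMeasure(F;Y)$ two ways using the chain rule. From $\mutualInformationProbMeasure(F;Y)=\entropyProbMeasure(Y)$ I would conclude $\entropyProbMeasure(Y|F)=0$, so $Y$ is a deterministic function of $F$. From $\entropyProbMeasure(F)=\mutualInformationProbMeasure(F;Y)$ I would conclude $\entropyProbMeasure(F|Y)=0$, so $F$ is a deterministic function of $Y$. Since $Y\in\{-1,+1\}$ and (assuming both classes appear with positive mass) these two deterministic relationships give a bijection between $\{-1,+1\}$ and the image of $f$ on the support, the latter consists of exactly two points $c_+,c_-\in[-1,1]$, with $f(X)=c_+$ whenever $Y=+1$ and $f(X)=c_-$ whenever $Y=-1$. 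I would then construct the required invertible map $g:[-1,1]\to\R$ explicitly as the linear bijection
\begin{equation*}
g(t)=\frac{2t-(c_++c_-)}{c_+-c_-},
\end{equation*}
which sends $c_+\mapsto 1$ and $c_-\mapsto -1$. For this $g$, $(g\circ f)(X)=Y$ holds $\probMeasure$-almost surely, so $g\circ f$ achieves the maximal margin on every support point.

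For the reverse direction ($\Leftarrow$), I would use the fact (noted in Section~2.2) that invertible transformations preserve mutual information. If $g\circ f$ is a margin maximizer with respect to $\probMeasure$, then $(g\circ f)(X)\cdot Y$ equals a positive constant $\probMeasure$-a.s., so $(g\circ f)(X)$ is (up to that positive scalar) equal to $Y$ itself. This yields $\entropyProbMeasure(g\circ f(X))=\entropyProbMeasure(Y)$ and $\mutualInformationProbMeasure(g\circ f(X);Y)=\entropyProbMeasure(Y)$. Invariance of mutual information under the invertible $g$ then transfers these equalities to $f$: $\mutualInformationProbMeasure(F;X)=\mutualInformationProbMeasure(g\circ f(X);X)=\entropyProbMeasure(g\circ f(X))=\entropyProbMeasure(Y)$ and $\mutualInformationProbMeasure(F;Y)=\entropyProbMeasure(Y)=\mutualInformationProbMeasure(X;Y)$, so $f$ is an LMC by Proposition~\ref{thm:lmcP}.

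The main obstacle I anticipate is not the information-theoretic bookkeeping, which is essentially forced once noiselessness is combined with the equalities in Proposition~\ref{thm:lmcP}, but rather pinning down the correct notion of ``margin maximizer with respect to $\probMeasure$'' for a function $g\circ f$ whose range is $\R$ rather than $[-1,1]$. The natural interpretation I will adopt is that $h:\X\to\R$ is a margin maximizer when $Y\cdot h(X)$ equals its essential supremum $\probMeasure$-a.s.; under noiselessness this collapses to perfect classification at constant confidence, which is exactly what the two-point image constructed above delivers. A minor technical point is that $g$ need only be injective on $[-1,1]$ (so that the invariance of mutual information applies), not bijective onto $\R$, and the linear formula above suffices.
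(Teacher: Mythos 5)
Your proof is correct and follows essentially the same route as the paper's: both arguments reduce the LMC condition under noiselessness to the statement that $F$ and $Y$ determine each other on the support (equivalently, $f$ takes exactly two values $s_+\neq s_-$ on the positive and negative parts of $\XSupp$), and then realize the margin maximizer by an invertible $g$ sending those two values to $\pm 1$. The only differences are presentational — you work directly with the chain-rule identities $\entropyProbMeasure(Y|F)=\entropyProbMeasure(F|Y)=0$ where the paper routes the losslessness half through Lemma~\ref{losslessIFFARiskMinimiserUpToInvertibleTransformationLemmaP} and the $\entropyPhi$ functional, and the degenerate case where one class has zero mass (which you flag) is glossed over in both treatments.
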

\begin{proof} As we saw in the proof of Lemma \ref{noiselessIFFZeroRiskLemma}, the fact that $\probMeasure$ is noiseless implies that for each $x \in \XSupp$ and $y \in \Y$ we have $\probMeasure\left(Y=y|X=x\right) \in \{0,1\}$. We form partition partition $\XSupp = \X_+\cup \X_-$ so that for $x \in \X_+$, $\probMeasure\left(Y=1|X=x\right)=1$ and for $x \in \X_-$, $\probMeasure\left(Y=-1|X=x\right)=1$.

Now suppose that for some invertible transformation $g: [-1,1]\rightarrow \R$, $g \circ f$ is a margin maximizer with respect to $\probMeasure$. Hence, if $g(f(x))=1$ for $x \in \X_+$ and $g(f(x))=-1$ for $x \in \X_-$. This implies that $\risk_{\probMeasure}(g \circ f)= \risk_{\probMeasure}(\riskMinimiserProb)=0$. Thus, by Lemma \ref{losslessIFFARiskMinimiserUpToInvertibleTransformationLemma}, $f$ is lossless. Moreover, $g$ is invertible this is equivalent to $f(x)=g^{-1}(1)$ for $x \in \X_+$ and $f(x)=g^{-1}(-1)$ for $x \in \X_-$. Hence, $\probMeasure(f(X)=s|Y=y)=1$ when $s=g^{-1}(y)$ and $\probMeasure(f(X)=s|Y=y)=0$ otherwise, so in general $\entropyPhi(\probMeasure(f(X)=s|Y=y))=0$. Thus,
\begin{align*}
\entropyProbMeasure\left(F|Y\right) &= \sum_{y \in \Y}\probMeasure(Y=y) \sum_{ s\in f(\XSupp)} \entropyPhi(\probMeasure(f(X)=s|Y=y)) = 0 = \entropyProbMeasure(F|X),
\end{align*}
where the final inequality follows from the fact that $\probMeasure\left(F=f(x)|X=x\right)=1$. Hence, we have $\mutualInformationProbMeasure(F;Y)= \mutualInformationProbMeasure(F;X)$. It follows that $f$ is a maximal compressor and we have already shown that $f$ is lossless.

Conversely, let's suppose that $f$ is a lossless maximal compressor. Since $f$ is lossless we infer from Lemma \ref{losslessIFFARiskMinimiserUpToInvertibleTransformationLemma} that there is some transformation $\tilde{g}:[-1,1]\rightarrow \R$ such  $\risk_{\probMeasure}(\tilde{g} \circ f) =\risk_{\probMeasure}(\riskMinimiserProb)=0$, which in turn implies that if $x_+ \in \X_+$ and $x_- \in \X_-$ then $f(x_+)\neq f(x_-)$. Moreover, since $f$ is a maximal compressor we must have  $\mutualInformationProbMeasure(F;Y)= \mutualInformationProbMeasure(F;X)$ which implies 
\begin{align*}
\sum_{y \in \Y}\probMeasure(Y=y) \sum_{ s\in f(\XSupp)} \entropyPhi(\probMeasure(f(X)=s|Y=y))= \entropyProbMeasure\left(F|Y\right) = \entropyProbMeasure(F|X)=0.
\end{align*}
Thus, for each $s \in f(\XSupp)$ and $y \in \Y$ we have $\probMeasure(f(X)=s|Y=y) \in \{0,1\}$, where once again we use the fact that $\entropyPhi$ is non-negative with zero attained at $\{0,1\}$. Hence, there exists some $s_+ \in [-1,1]$ such that for all $x_+ \in \X_+$, $f(x_+)=s_+$ and some $s_- \in [-1,1]$ with $s_-\neq s_+$ such that for all $x_- \in \X_-$, $f(x_-)=s_-$. Thus, if we choose $g:[-1,1]\rightarrow \R$ to be any invertible map with $g(s_+)=1$ and $g(s_-)=-1$ we see that $g\circ f$ is a margin maximiser. This completes the proof of the theorem. 
\end{proof}

Finally, we prove Lemma \ref{losslessIFFCapturesCCProbLemmaP} which generalises Lemma \ref{losslessIFFCapturesCCProbLemma}.

\begin{lemma} \label{losslessIFFCapturesCCProbLemmaP} Suppose that $X$ and $Y$ are discrete random variables taking values in $\X$ and $\Y=\{-1,+1\}$, respectively. Suppose that $f:\X \rightarrow \Z$ and let $F=f(X)$. Then $I(X;Y)=I(F;Y)$ if and only if the map $x\mapsto \Prob(Y=1|X=x)$ is constant on all sets of the form $f^{-1}(z)\subseteq \X$ for some $z \in \Z$.
\end{lemma}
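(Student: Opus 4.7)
The plan is to reduce the identity $I(X;Y)=I(F;Y)$ to the equivalent condition $H(Y|X)=H(Y|F)$ via the chain rule $I(X;Y)=H(Y)-H(Y|X)$ (and likewise for $F$), and then analyse the conditional entropy gap using strict concavity of the binary entropy function. Because $Y$ is binary, for each $x$ the distribution of $Y$ given $X=x$ is encoded by the single scalar $p(x) := \Prob(Y=1|X=x)$, and similarly the distribution of $Y$ given $F=z$ is encoded by $q(z) := \Prob(Y=1|F=z)$. Writing $h(p) = -p\log p - (1-p)\log(1-p) = \phi(p)+\phi(1-p)$ in the notation of Lemma \ref{entropyPhiPropertiesLemma}, one obtains $H(Y|X) = \sum_{x} \Prob(X=x)\, h(p(x))$ and $H(Y|F) = \sum_{z}\Prob(F=z)\, h(q(z))$.

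The key observation is that for each $z \in \Z$ with $\Prob(F=z)>0$, the fiber $f^{-1}(z)$ partitions the contributions to $q(z)$: namely
\begin{equation*}
q(z) \;=\; \sum_{x \in f^{-1}(z)} \frac{\Prob(X=x)}{\Prob(F=z)}\, p(x),
\end{equation*}
which expresses $q(z)$ as a convex combination of the values $\{p(x): x \in f^{-1}(z)\}$. Strict concavity of $h$ on $[0,1]$ (which follows from strict concavity of $\entropyPhi$ via $h(p)=\entropyPhi(p)+\entropyPhi(1-p)$) then gives Jensen's inequality
\begin{equation*}
h(q(z)) \;\geq\; \sum_{x \in f^{-1}(z)} \frac{\Prob(X=x)}{\Prob(F=z)}\, h(p(x)),
\end{equation*}
with equality if and only if $p(x)$ is constant over $x \in f^{-1}(z)$.

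Multiplying by $\Prob(F=z)$ and summing over $z$, the right-hand side collapses to $H(Y|X)$ and we obtain $H(Y|F)\geq H(Y|X)$ with equality iff $p$ is constant on every fiber $f^{-1}(z)$ of positive probability. Substituting back via $I(X;Y)-I(F;Y) = H(Y|F)-H(Y|X)$ yields $I(X;Y)=I(F;Y)$ iff the map $x\mapsto \Prob(Y=1|X=x)$ is constant on each $f^{-1}(z)$, which is the desired equivalence. The only subtlety (and the step where I would be most careful) is the bookkeeping for fibers $f^{-1}(z)$ that carry zero probability under $\probMeasure$: such fibers contribute nothing to either sum and impose no genuine constraint, so the statement should be read as quantifying over $z$ in the support of $F$; within that support all conditional probabilities are well-defined and the Jensen argument applies verbatim.
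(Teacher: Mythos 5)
Your proposal is correct and follows essentially the same route as the paper's own proof: both reduce $I(X;Y)=I(F;Y)$ to $H(Y|X)=H(Y|F)$, decompose $H(Y|X)$ over the fibers $f^{-1}(z)$, and apply Jensen's inequality with the strict concavity of the binary entropy functional to characterize equality as constancy of $x\mapsto \Prob(Y=1|X=x)$ on each fiber. Your explicit remark about fibers of zero probability is a minor point of added care that the paper leaves implicit, but it does not change the argument.
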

\begin{proof}
The proof uses the entropy functional $\entropyFunction:[0,1] \rightarrow \R$ by $\entropyFunction(p) = -p\cdot \log(p)-(1-p)\cdot \log(1-p)$. Note that $\entropyFunction$ is strictly concave. Now observe that
\begin{align*}
H(Y|X) &= \sum_{x \in \X}\Prob(X=x) \cdot \entropyFunction\left(\Prob(Y=1|X=x)\right)\\
&= \sum_{z \in \Z} \Prob(F=z) \cdot \sum_{x \in \X}\Prob(X=x|F=z) \cdot \entropyFunction\left(\Prob(Y=1|X=x)\right),
\end{align*}
where we have used the fact that $\Prob(X=x|F=z)= \Prob(X=x)/\Prob(F=z)$ if $z=f(x)$ and $\Prob(X=x|F=z)= 0$ otherwise. We also have
\begin{align*}
H(Y|F) &= \sum_{z \in \Z}\Prob(F=z) \cdot \entropyFunction\left(\Prob(Y=1|F=z)\right).
\end{align*}
By the strict concavity of $\entropyFunction$ for each $z \in \Z$ we have
\begin{align*}
&\sum_{x \in \X}\Prob(X=x|F=z) \cdot \entropyFunction\left(\Prob(Y=1|X=x)\right) \\ &\leq  \entropyFunction\left(\sum_{x \in \X}\Prob(X=x|F=z) \cdot \Prob(Y=1|X=x)\right)  = \entropyFunction\left(\Prob(Y=1|F=z)\right),
\end{align*}
with equality if and only if $\Prob(Y=1|X=x)$ is constant for all $x \in \X$ with $\Prob(X=x|F=z)>0$, so constant for all $x \in f^{-1}(z)$. Hence, we have $H(Y|X) \leq H(Y|F)$ with equality if and only if for each $z \in \Z$, $x\mapsto \Prob(Y=1|X=x)$ is constant on $f^{-1}(z)\subseteq \X$. To conclude note that $I(Y;X)=H(Y)-H(Y|X)$ and $I(Y;F)=H(Y)-H(Y|F)$, so $I(Y;X)\geq I(Y;F)$ with equality if and only $H(Y|X)=H(Y|F)$ which holds if and only if for each $z \in \Z$, $x\mapsto \Prob(Y=1|X=x)$ is constant on $f^{-1}(z)\subseteq \X$. 
\end{proof}

\newpage
\subsection*{B. Details of datasets used}
\subsubsection*{B1. Artificial Data}
\label{sssec:artificial_data}
The artificial dataset was generated by \emph{scikit-learn}'s \emph{make\_classification()} function. We generated $2000$ examples, each consisting of $20$ features, only $2$ of which were relevant for predicting the class. The examples belonged to $2$ different clusters for each of the $2$ classes, each cluster's points normally distributed (with unit standard deviation) about vertices of a $2$-sided hypercube. Some label noise was added by randomly flipping the label of each point with probability $0.01$. For more information see the function's documentation at \href{http://scikit-learn.org/stable/modules/generated/sklearn.datasets.make\_classification.html}{http://scikit-learn.org/stable/modules/generated/sklearn.datasets.make\_classification.html}.

\subsubsection*{B2. UCI Datasets}
\label{ssec:datasets}
Table \ref{tab:datasets} shows the characteristics of the real-world datasets used in our experiments. The original datasets are all from the \emph{UCI repository}. Examples with missing values were discarded. The multiclass datasets were converted to balanced binary ones by setting the minority class as the `positive' one and uniformly sampling examples from the remaining classes to form the `negative' class. A link to the final datasets will be provided along with the code used to generate the results.%The final datasets can be found in {\color{red} [TODO: give url]}. %{\color{red} [TODO: These were the datasets I used in a previous paper and are balanced by downsampling... here we don't need balancing... Change?]}
\begin{table}[ht]
\footnotesize
\center
\caption{Characteristics of the UCI datasets used in our experiments; number of instances used, number of features and number of classes before binarization.}\label{tab:datasets}
%\vspace{-0.2cm}
    \begin{tabular}{|c|c|c|c|c|}
    \hline
 	\multirow{2}{*}{Dataset} & \# & \# & \# \\
 	 & Instances & Features & Classes \\ \hline\hline
    \emph{parkinsons} & $96$ & $22$ & $2$  \\ \hline  
    \emph{sonar} & $194$ & $60$ & $2$ \\ \hline
    \emph{heart} & $240$ & $13$ & $2$ \\ \hline 
    \emph{ionosphere} & $252$ & $34$ & $2$ \\ \hline    
    \emph{semeion} & $322$ & $256$ & $10$ \\ \hline
    \emph{congress} & $336$ & $16$ & $2$  \\ \hline     
    \emph{wdbc} & $424$  & $31$ & $2$ \\ \hline
    \emph{credit} & $600$ & $24$ & $2$ \\ \hline     
    \emph{landsat} & $1252$ & $36$ & $6$ \\ \hline  
    \emph{splice} & $1524$ & $60$ & $3$ \\ \hline    
    \emph{musk2} & $2034$ & $166$ & $2$ \\ \hline 
    \emph{krvskp} & $3054$ & $36$ & $2$ \\ \hline
    \emph{waveform} & $3306$ & $40$ & $3$ \\ \hline   
    \emph{mushroom} & $7832$ & $21$ & $2$ \\ \hline
    \end{tabular}
   % \vspace{-3cm}
\end{table}
%\vspace{-1.25cm}

%\clearpage
\subsection{C. Additional experimental results}
This section contains additional experimental results that further showcase the consistency of the trajectory of the boosting ensemble on the information plane towards the lossless maximally compressing (LMC) model point across different datasets and hyperparameter settings.

%Figures \ref{fig:trajectories_real2}--\ref{fig:trajectories_real4} follow the experimental setup described in the main paper and report the same results as Figure \ref{fig:trajectories_real} from the main paper for the remainder of the UCI datasets described in Section B2 of this Supplementary Material.

Figure \ref{fig:trajectories_mushroom_tree_depth} shows only the average trajectories of the boosting ensemble trained on the \emph{mushroom} dataset using base learners of varying capacity, demonstrating that the trajectories are again qualitatively similar. Naturally, the lower the capacity of an individual learner, the more boosting rounds are required to reach the LMC point.

Figure \ref{fig:trajectories_artificial_hyperparam} shows the same results as Figures~\ref{fig:trajectories_real}--\ref{fig:trajectories_real4} on artificial data generated as described in Section B1 of this Supplementary Material. It also demonstrates that changing the loss function, using subsampling of the examples for the purposes of the updates or shrinkage of the updates does not change the trajectory qualitatively.

\begin{figure}%[H]
\centering
\subfigure{\includegraphics[width=0.49\textwidth]{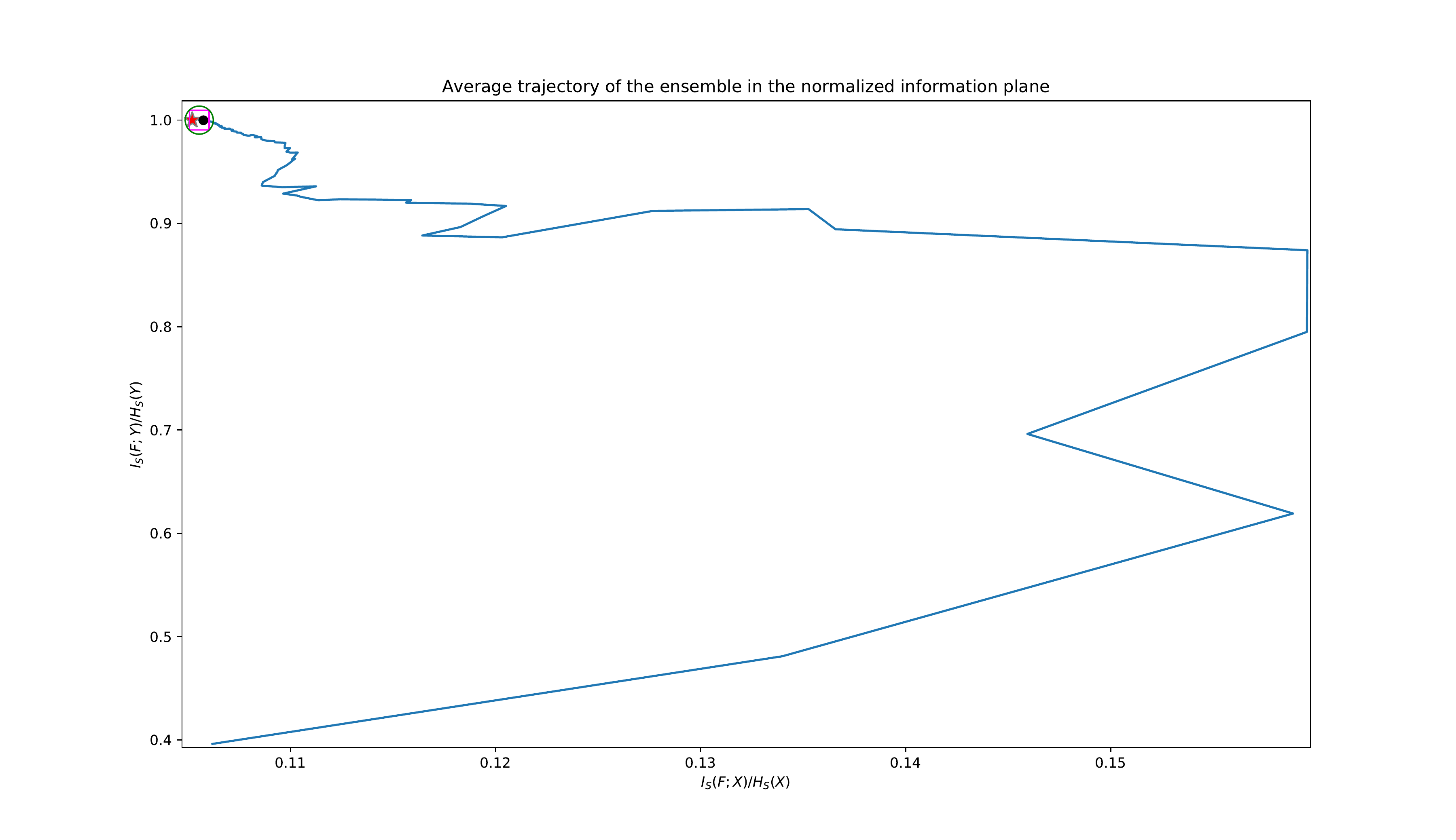}}
\subfigure{\includegraphics[width=0.49\textwidth]{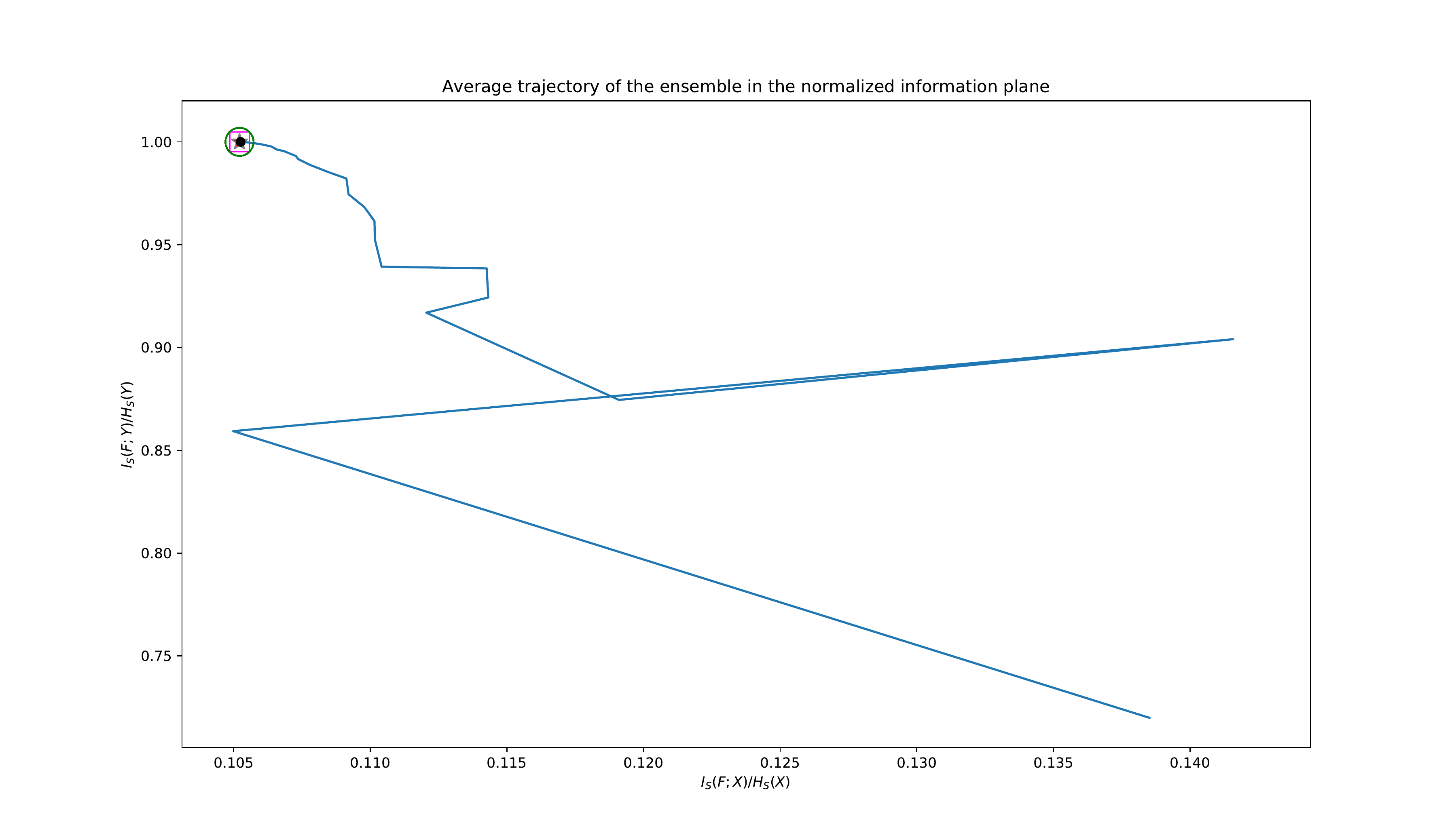}}
\subfigure{\includegraphics[width=0.49\textwidth]{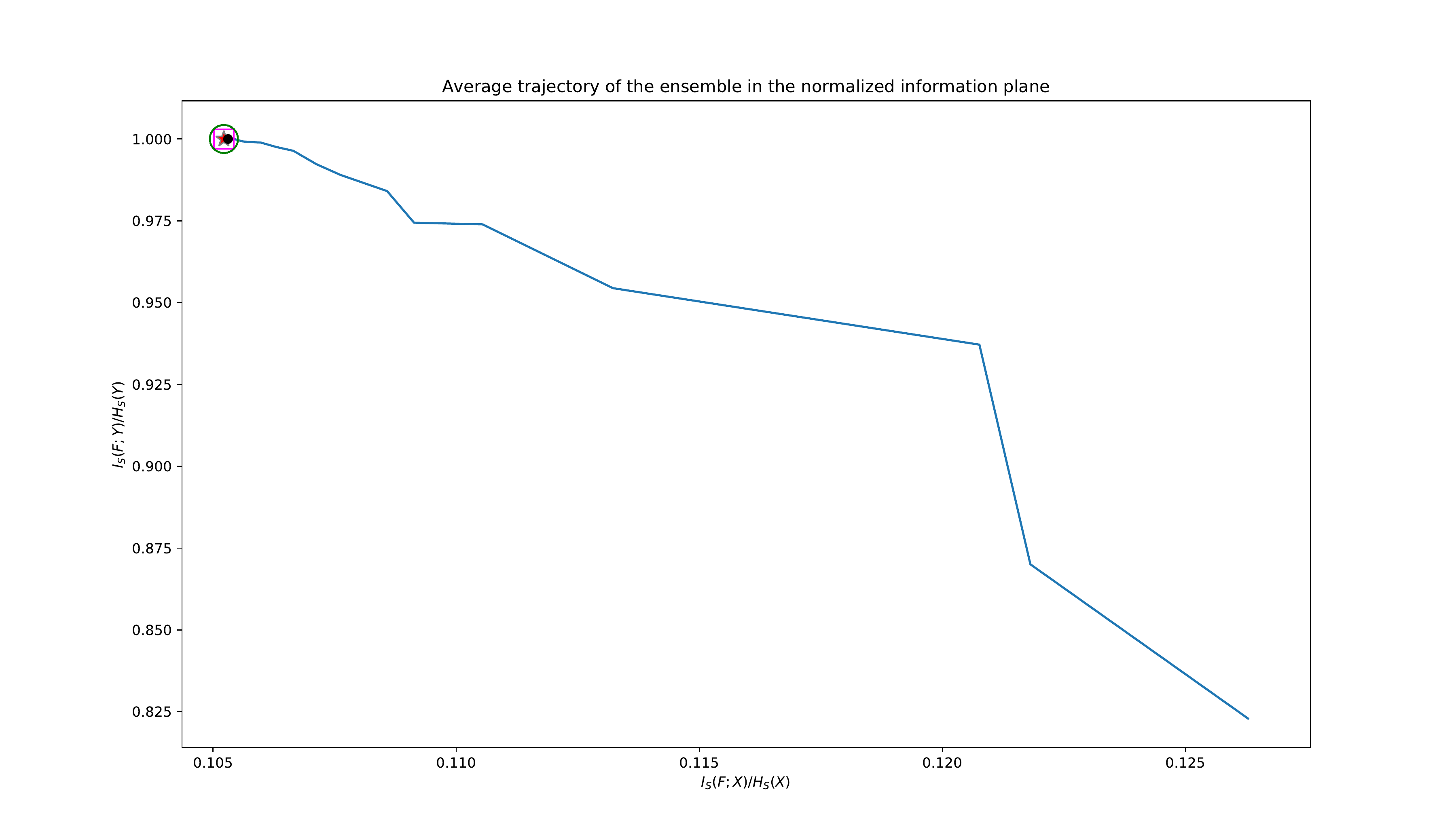}}
\subfigure{\includegraphics[width=0.49\textwidth]{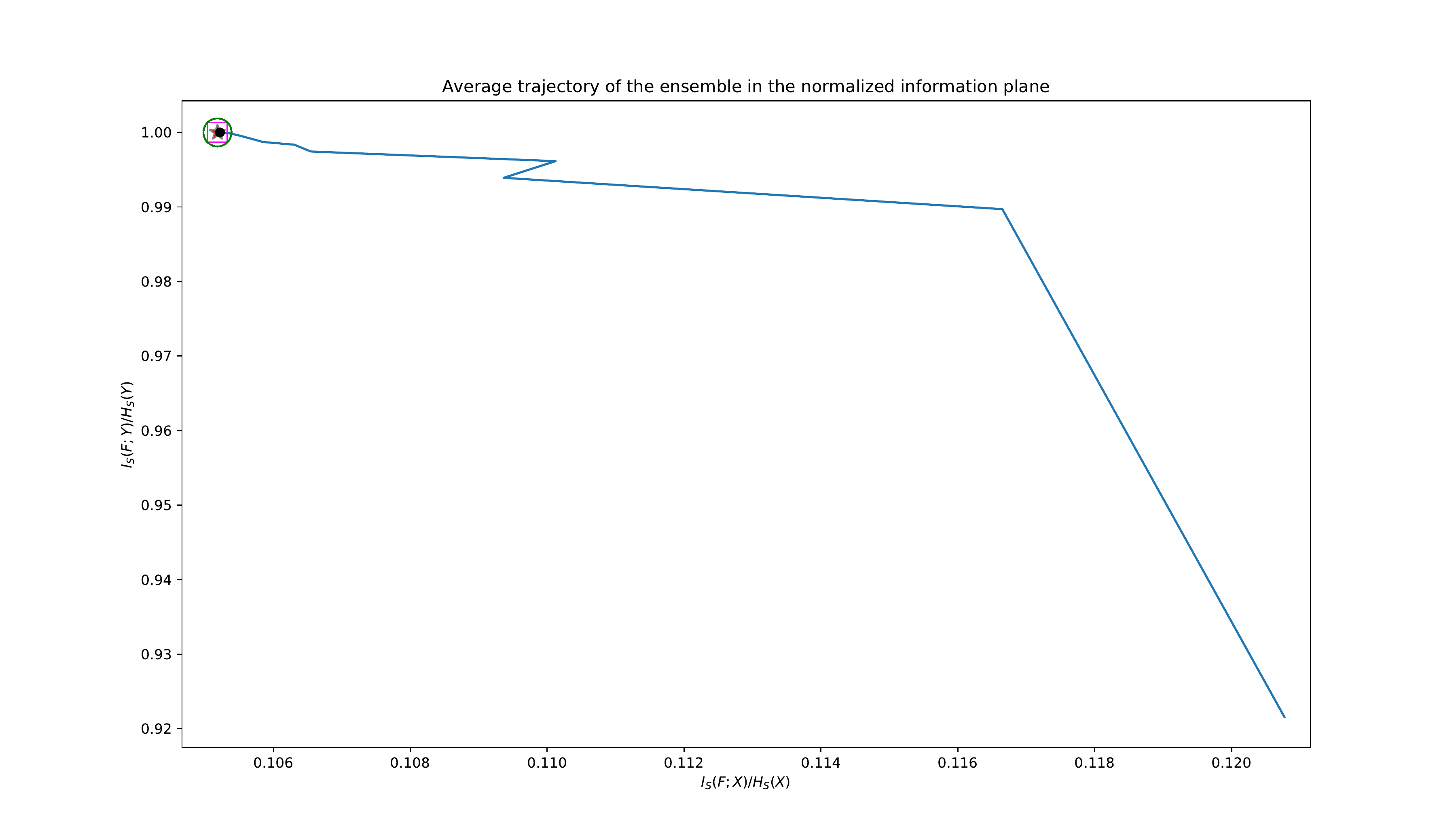}}
\subfigure{\includegraphics[width=0.49\textwidth]{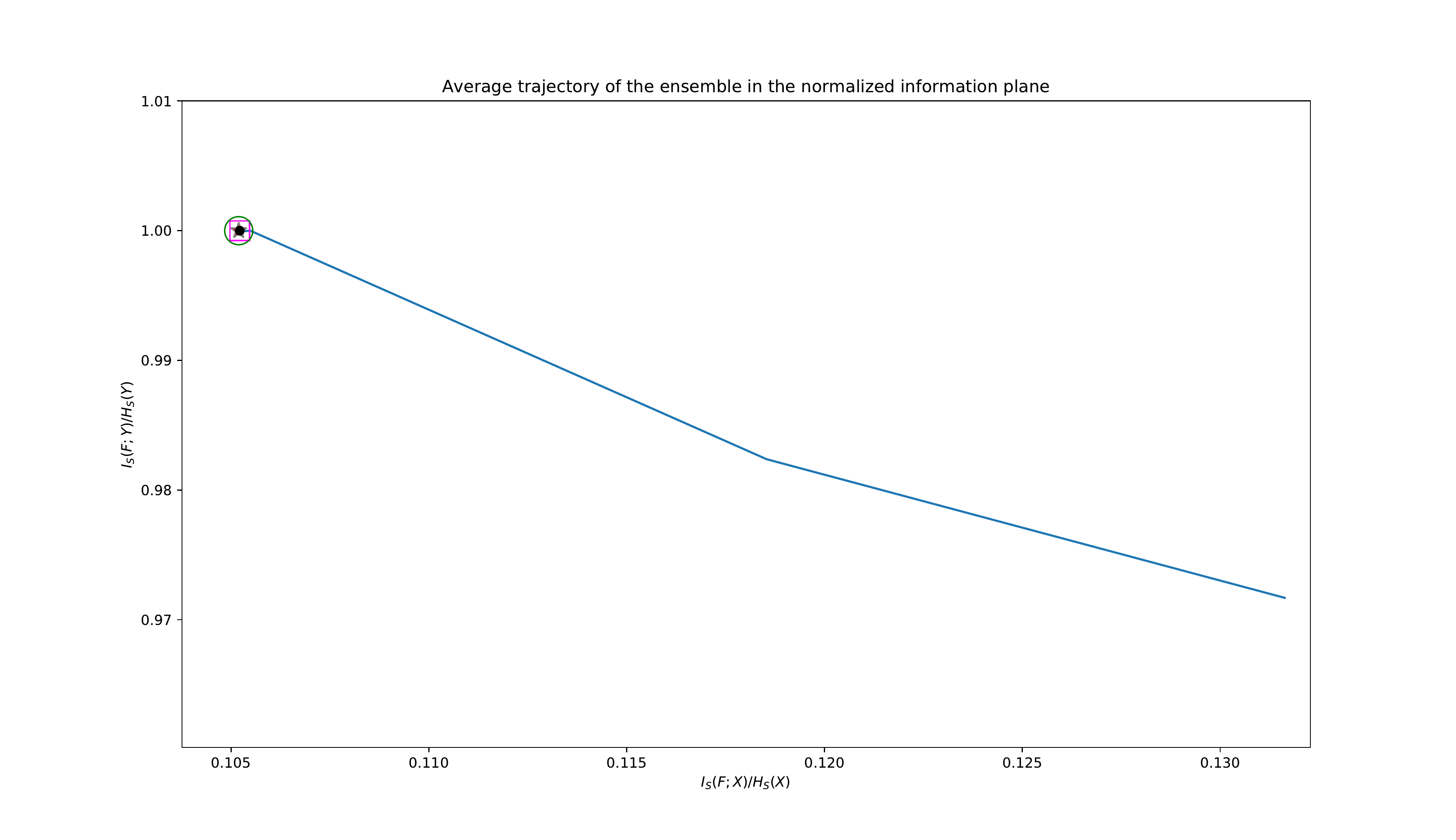}}
\subfigure{\includegraphics[width=0.49\textwidth]{./Figures/mushroom_average_traj}}
\caption{Effect of the capacity of the weak learner (max. tree depth $d$) on the average trajectory of the boosting ensemble on the normalized information plane as the rounds of boosting progress. All plots are for the \emph{mushroom} dataset. The same points of interest as in the previous figures are shown.  TOP LEFT: $d=1$; TOP RIGHT: $d=2$; MID LEFT: $d=3$; MID RIGHT: $d=4$; BOTTOM LEFT: $d=5$; BOTTOM RIGHT: $d=6$. }\label{fig:trajectories_mushroom_tree_depth}
%\vspace{-0.75em}
\end{figure}

\begin{figure}%[H]
\centering
\subfigure{\includegraphics[width=0.49\textwidth]{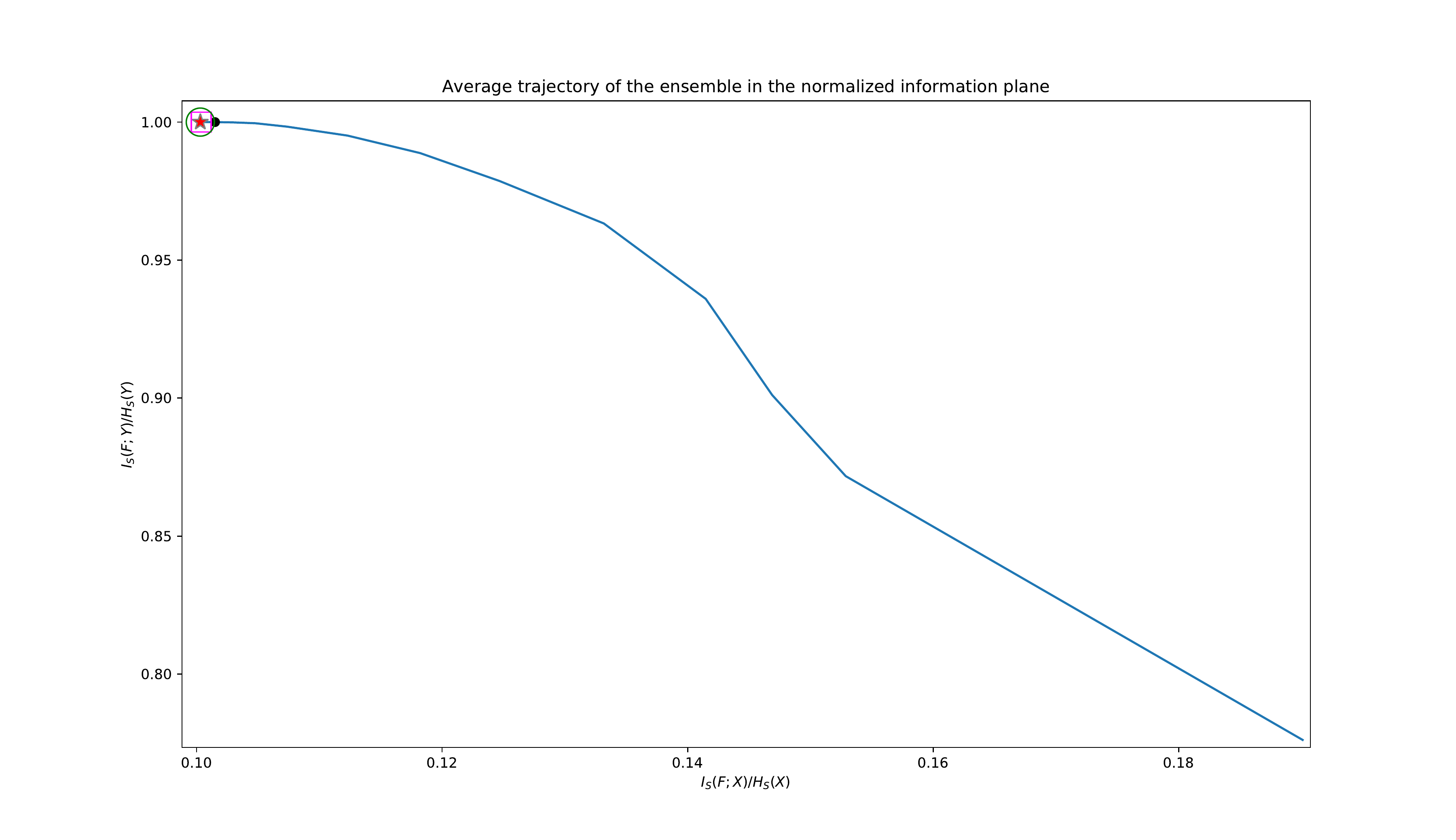}}
\subfigure{\includegraphics[width=0.49\textwidth]{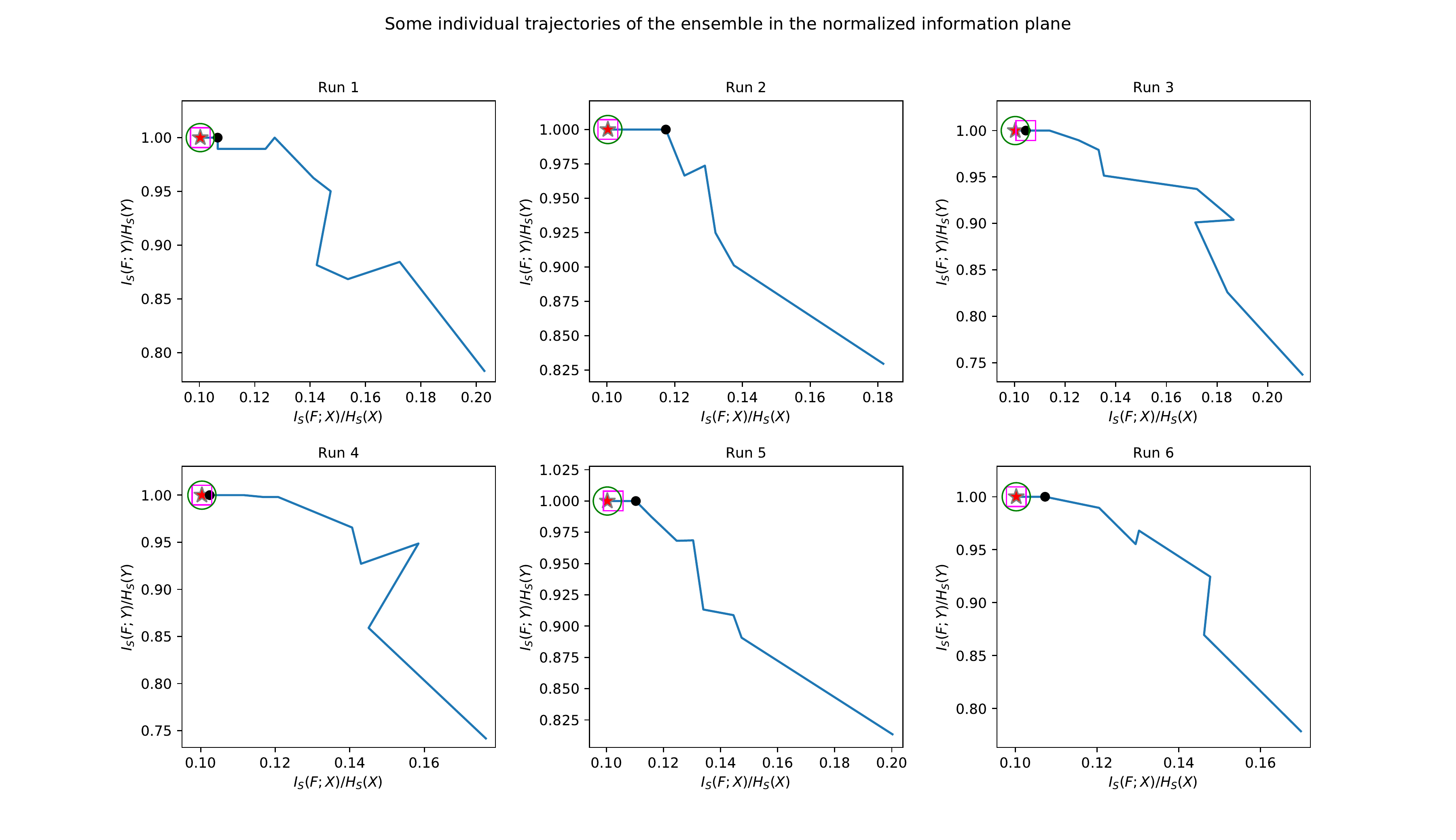}}
\subfigure{\includegraphics[width=0.49\textwidth]{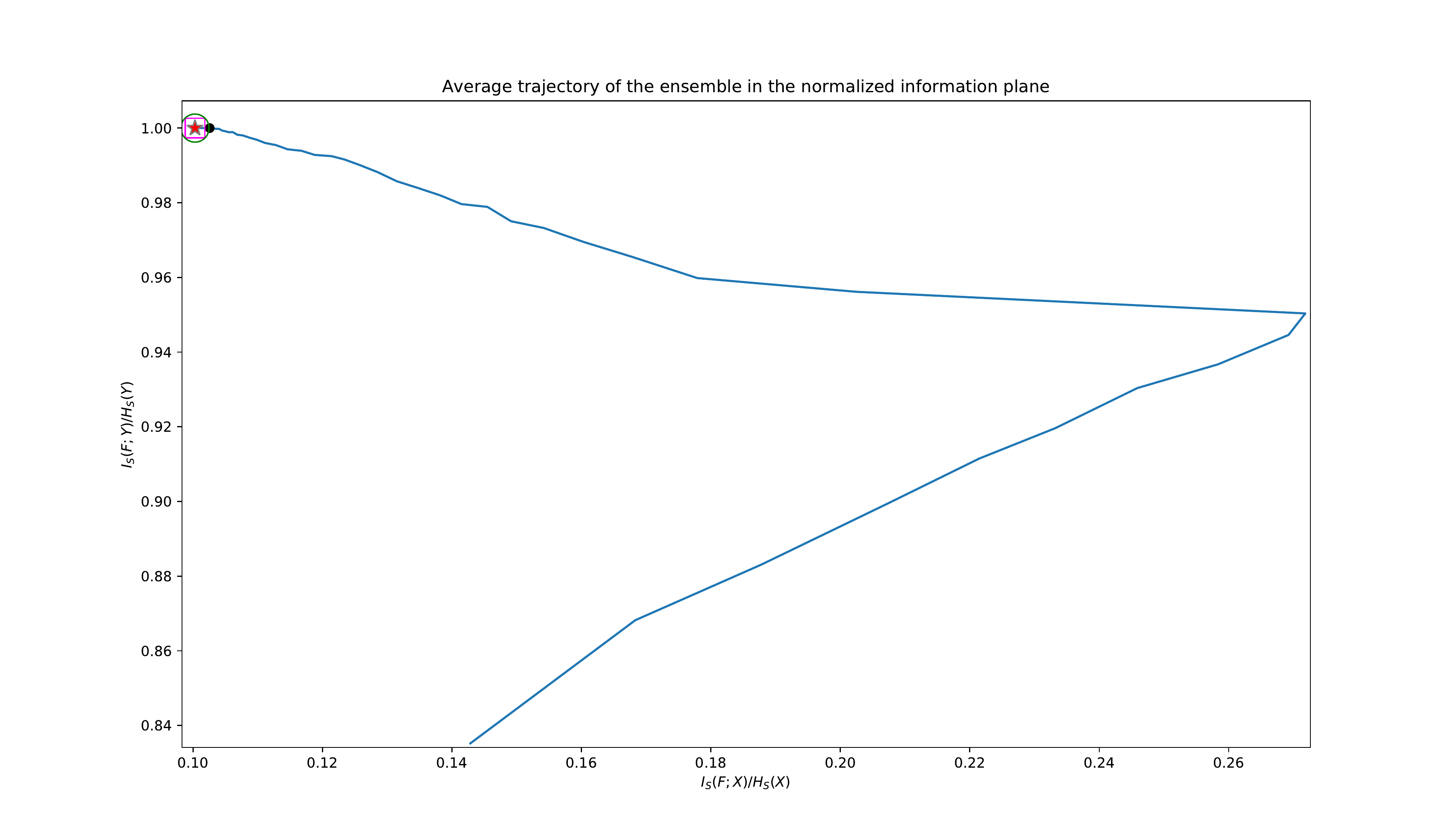}}
\subfigure{\includegraphics[width=0.49\textwidth]{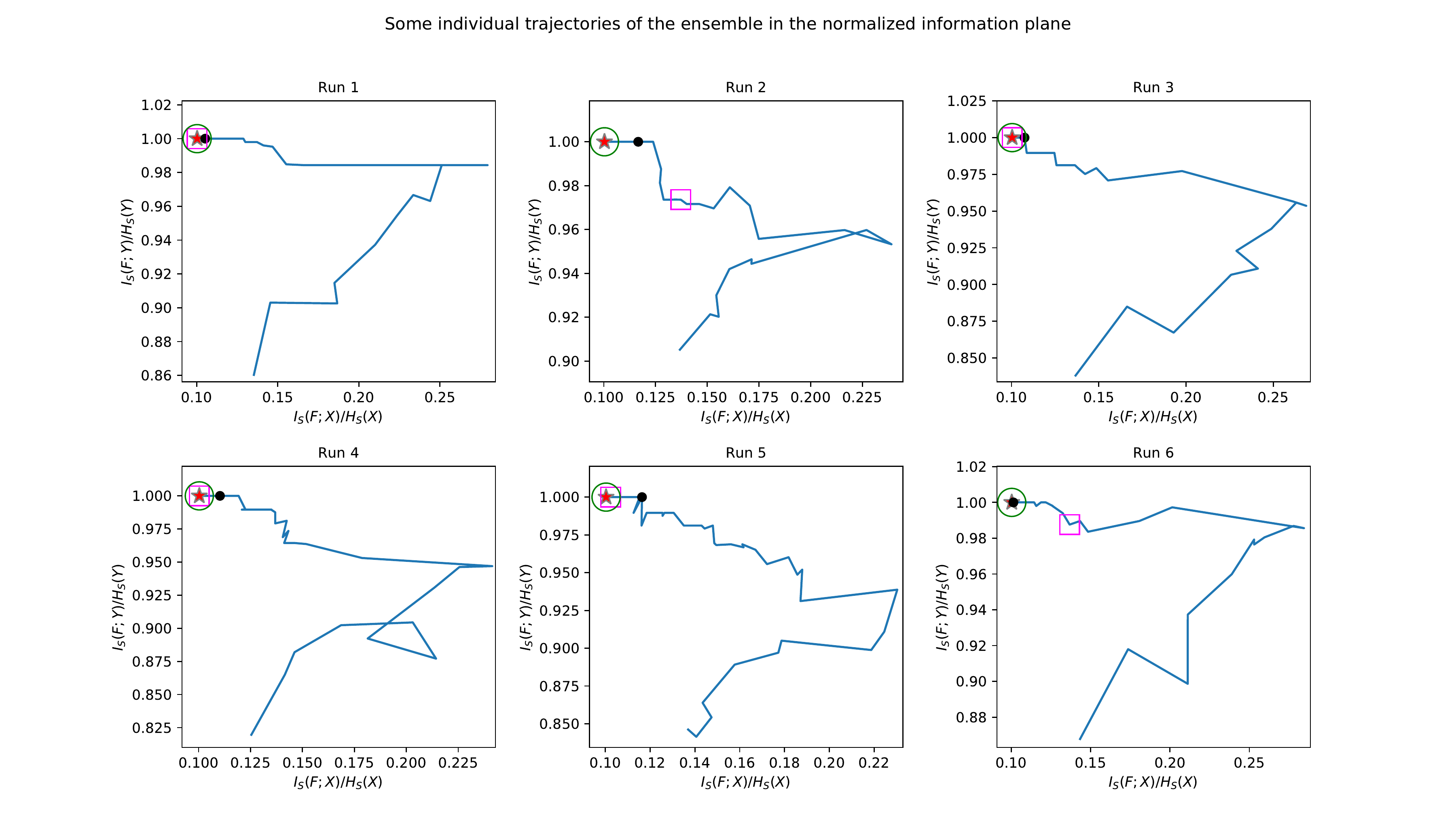}}
\subfigure{\includegraphics[width=0.49\textwidth]{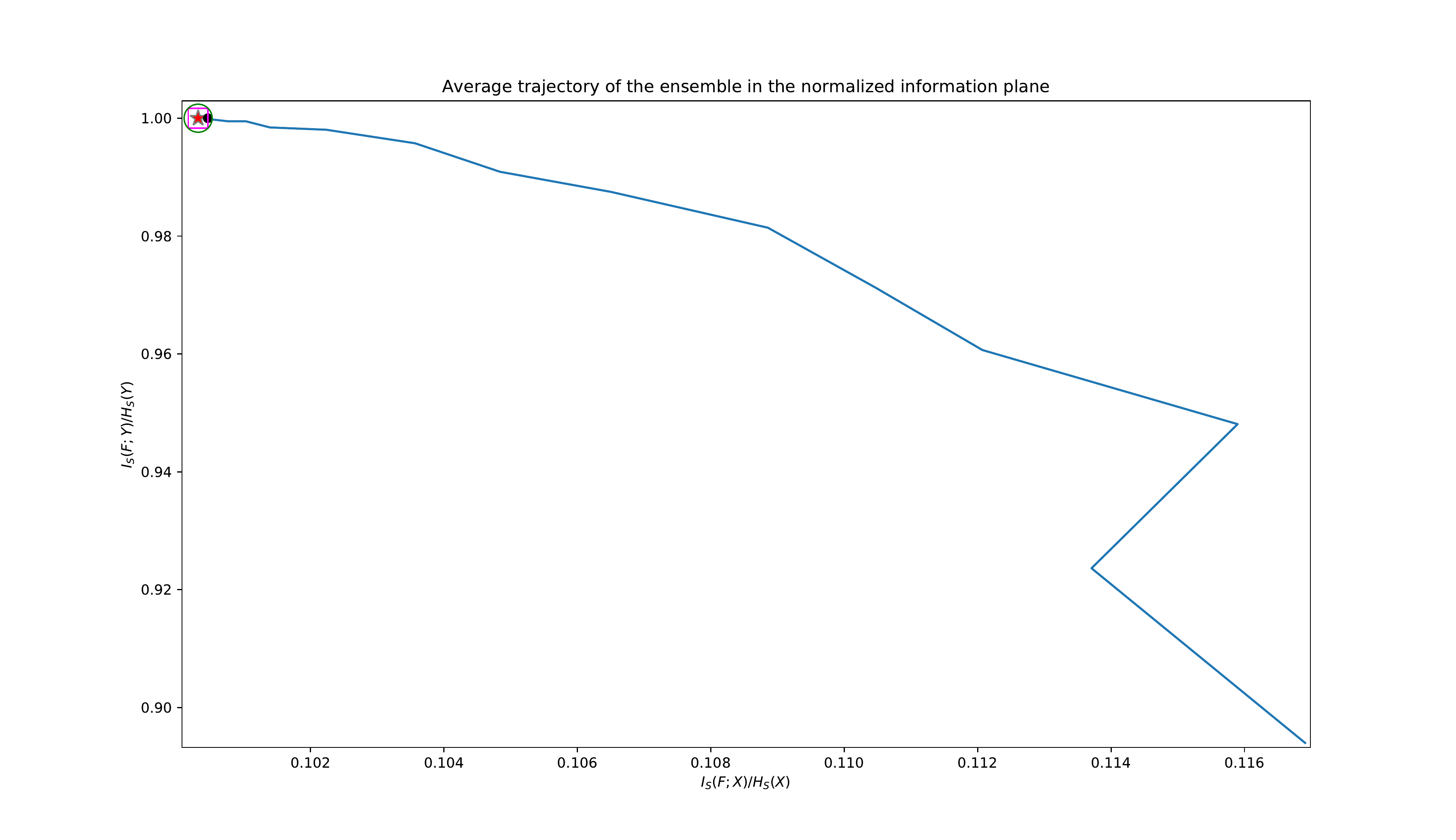}}
\subfigure{\includegraphics[width=0.49\textwidth]{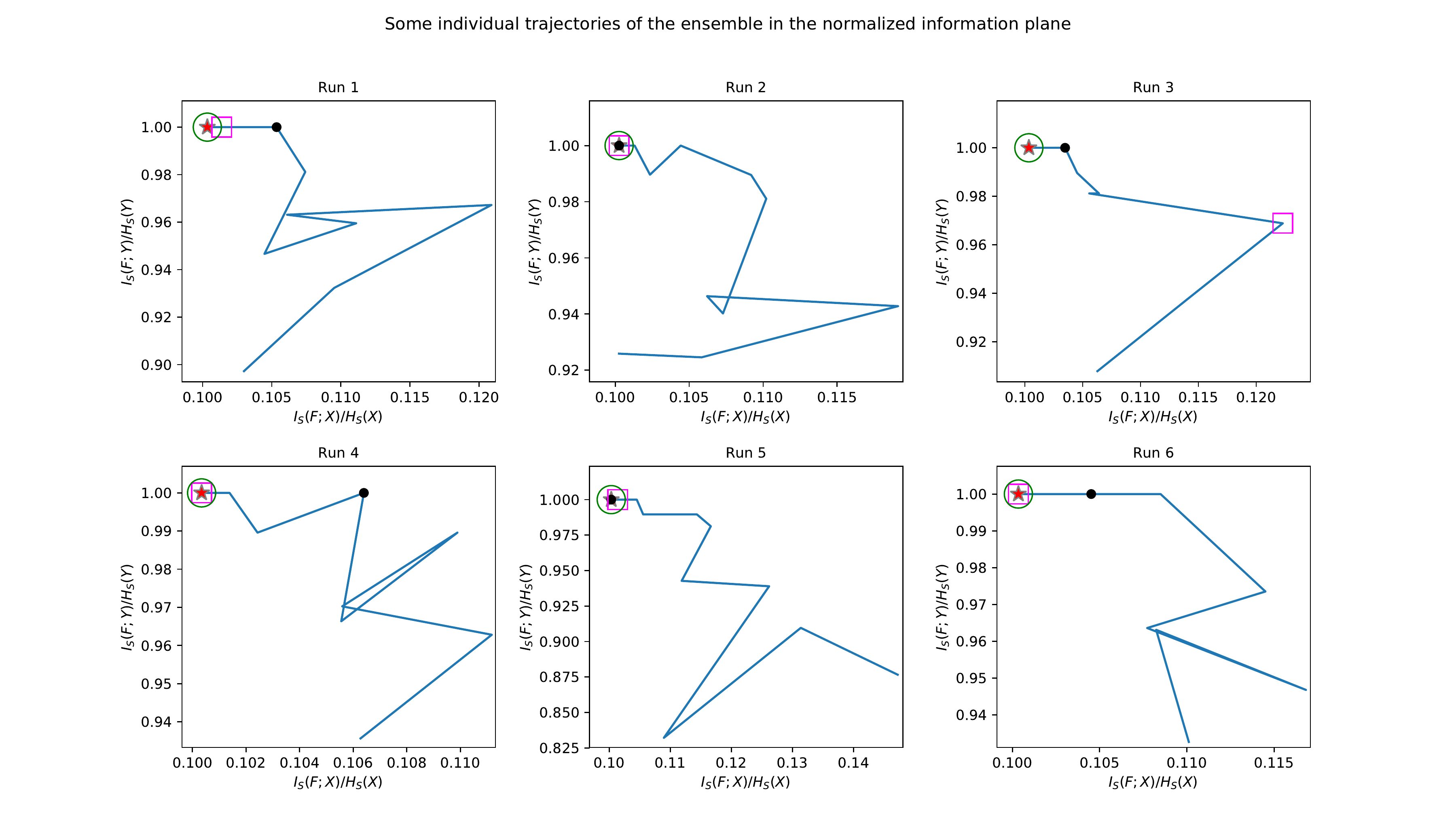}}
\subfigure{\includegraphics[width=0.49\textwidth]{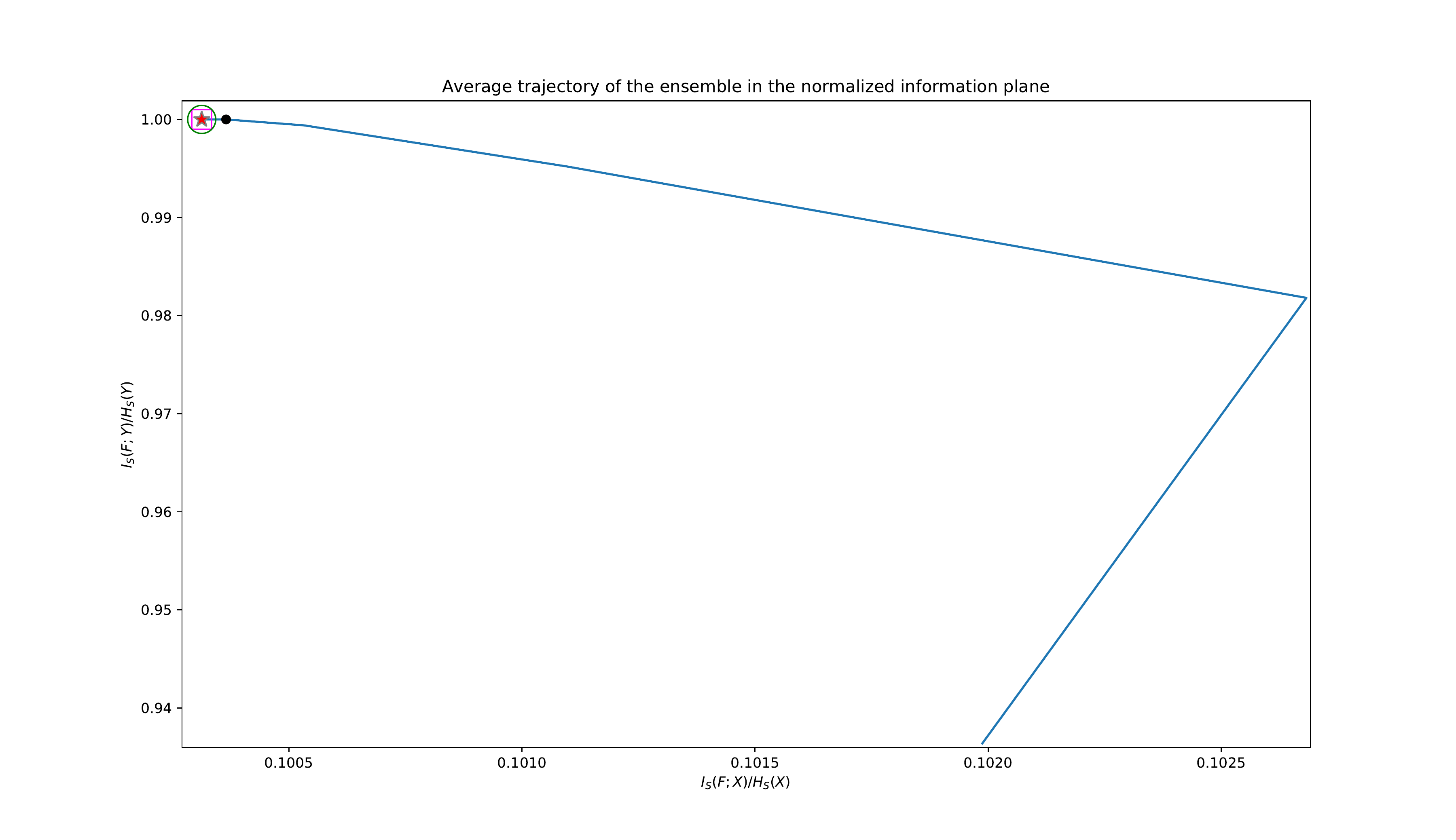}}
\subfigure{\includegraphics[width=0.49\textwidth]{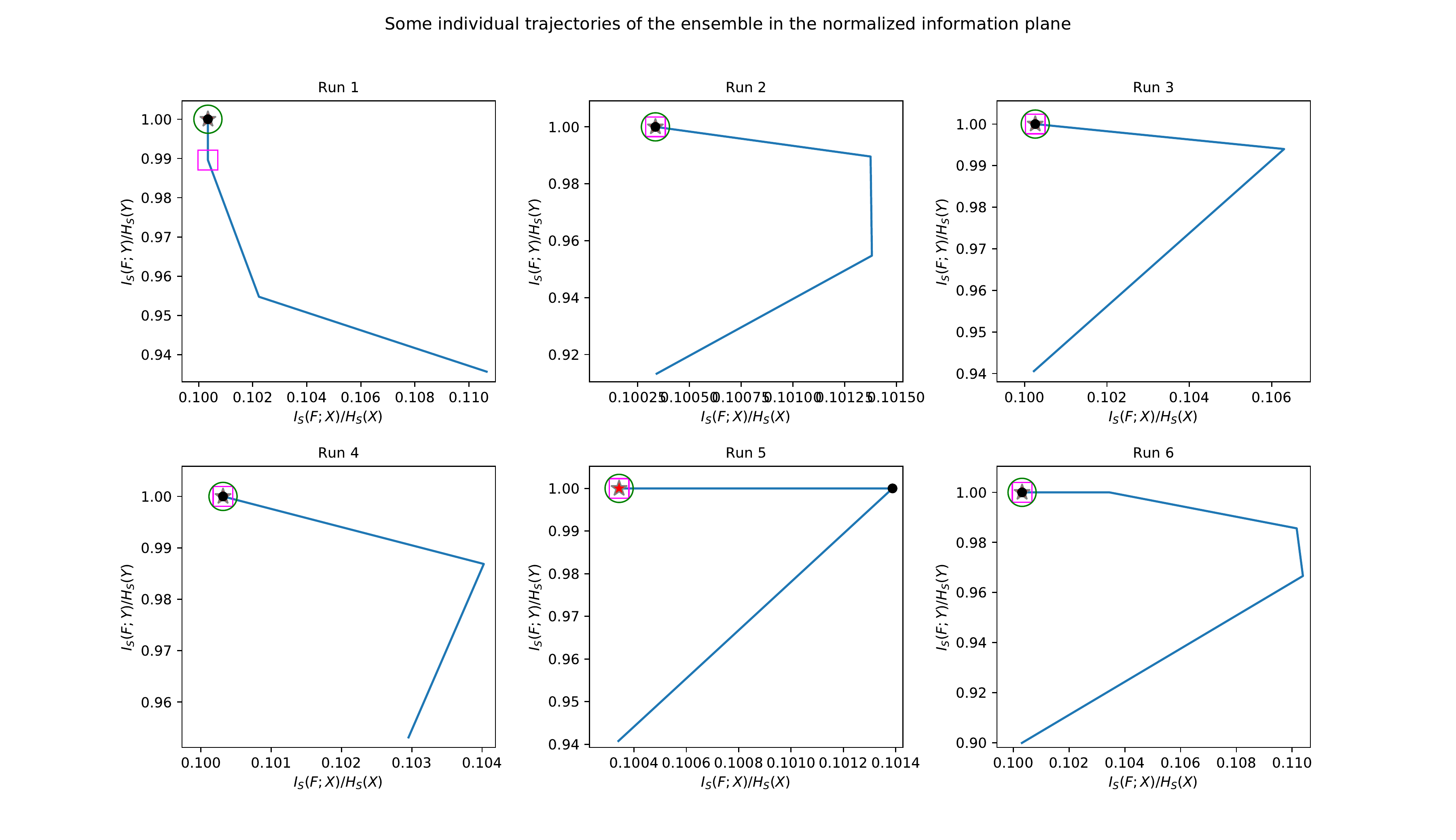}} 
\caption{Effect of hyperparameters on the trajectory of the boosting ensemble on the normalized information plane as the rounds of boosting progress. All results are on artificial data generated as described in Section B1 of this Supplementary Material. All points of interest from the previous figures are shown. ROW 1: Exponential loss, no shrinkage, no subsampling; ROW 2: Exponential loss, shrinkage with $\lambda=0.1$, no subsampling; ROW 3: Exponential loss, no shrinkage, subsampling set to $0.8$; ROW 4: Binomial deviance loss, no shrinkage, no subsampling; LEFT: Average trajectory across 100 runs; RIGHT: Some random individual trajectories.}\label{fig:trajectories_artificial_hyperparam}
%\vspace{-0.75em}
\end{figure}

\end{document}